\documentclass[11pt]{article}
\usepackage{fullpage}

\usepackage[T1]{fontenc}
\usepackage{kpfonts}
\usepackage{libertine}
\usepackage[scaled=0.85]{beramono}

\usepackage[titletoc,page]{appendix}

\usepackage{dirtytalk}

\usepackage{graphicx}
\usepackage{amsmath,amsfonts,amscd, amssymb}
\usepackage{amsthm,thmtools,enumerate,caption}
\captionsetup{size=footnotesize, skip=8pt}
\usepackage{xspace}
\usepackage{setspace}
\usepackage{perpage}
\usepackage{fancyhdr}
\usepackage{sectsty}
\onehalfspacing

\usepackage{bbm}

\usepackage{algorithm, algorithmicx, algpseudocode}
\usepackage{varwidth}

\let\oldReturn\Return
\renewcommand{\Return}{\State\oldReturn}

\usepackage{booktabs}
\usepackage[caption=false,font=footnotesize]{subfig}

\usepackage[usenames,dvipsnames,svgnames,table]{xcolor}
\definecolor{darkgreen}{rgb}{0.0,0,0.9}

\usepackage[colorlinks=true,pdfpagemode=UseNone,citecolor=OliveGreen,linkcolor=black,urlcolor=BrickRed,
pagebackref]{hyperref}

\newtheorem{theorem}{Theorem}

\newtheorem{lemma}{Lemma}

\newtheorem{proposition}[theorem]{Proposition}

\newtheorem{corollary}[theorem]{Corollary}
\theoremstyle{definition}
\newtheorem{definition}{Definition}
\newtheorem{problem}{Problem}
\newtheorem{assumption}[theorem]{Assumption}

\newtheorem{remark}{Remark}

\usepackage[numbers]{natbib}
\bibpunct{(}{)}{;}{}{}{,}
\usepackage{multicol}

\newcommand{\Acal}{\mathcal{A}}
\newcommand{\Bcal}{\mathcal{B}}

\newcommand{\Scal}{\mathcal{S}}


\DeclareMathOperator*{\argmax}{arg\,max}
\newcommand{\diag}{\mathop{\mathrm{diag}}}
\newcommand{\rad}{\mathop{\mathrm{rad}}}
\newcommand{\m}{\mathop{\mathrm{m}}}
\newcommand{\dBm}{\mathop{\mathrm{dBm}}}

\newcommand{\EV}[1]{\mathbb{E}[#1]} 
\newcommand{\Var}[1]{\mathbb{V}[#1]} 


\algrenewcommand\textproc{}%
\makeatletter

\makeatother

\usepackage{enumitem}

\sloppy

\title{\vspace{1cm}\bf Sampling-based Incremental Information Gathering with Applications to Robotic Exploration and Environmental Monitoring\thanks{Submitted to IJRR. 
    \url{mani.ghaffari@gmail.com} -- \url{http://maanighaffari.com}}}
\author{Maani Ghaffari Jadidi\thanks{College of Engineering, \mbox{University of Michigan}, Ann Arbor, MI 48109 USA} \and
Jaime Valls Miro\thanks{Centre for Autonomous Systems (CAS), University
of Technology Sydney, NSW, Australia} \and  Gamini Dissanayake\footnotemark[3]}
\begin{document}

\maketitle
\thispagestyle{empty}
\vspace{1.5cm}
\begin{abstract}
In this article, we propose a sampling-based motion planning algorithm equipped with an information-theoretic convergence criterion for incremental informative motion planning. The proposed approach allows dense map representations and incorporates the full state uncertainty into the planning process. The problem is formulated as a constrained maximization problem. Our approach is built on rapidly-exploring information gathering algorithms and benefits from advantages of sampling-based optimal motion planning algorithms. We propose two information functions and their variants for fast and online computations. We prove an information-theoretic convergence for an entire exploration and information gathering mission based on the least upper bound of the average map entropy. A natural automatic stopping criterion for information-driven motion control results from the convergence analysis. We demonstrate the performance of the proposed algorithms using three scenarios: comparison of the proposed information functions and sensor configuration selection, robotic exploration in unknown environments, and a wireless signal strength monitoring task in a lake from a publicly available dataset collected using an autonomous surface vehicle.
\end{abstract}
\clearpage
\thispagestyle{empty}
\tableofcontents
\clearpage

\section{Introduction}
\label{intro}

Exploration in unknown environments is a major challenge for an autonomous robot and has numerous present and emerging applications ranging from search and rescue operations to space exploration programs. While exploring an unknown environment, the robot is often tasked to monitor a quantity of interest through a cost or an information quality measure~\citep{dhariwal2004bacterium,singh2010modeling,marchant2012bayesian,dunbabin2012robots,lan2013planning,yu2015persistent}. Robotic exploration algorithms usually rely on geometric frontiers~\citep{yamauchi1997frontier,strom2015predictive} or visual targets~\citep{kim2014active} as goals to solve the planning problem using geometric/information gain-based \emph{greedy} action selection or planning for a limited horizon. The main drawback of such techniques is that a set of targets constrains the planner search space to the paths that start from the current robot pose to targets. In contrast, a more general class of robotic navigation problem known as robotic information gathering~\citep{singh2009efficient,binney2012branch,binney2013optimizing}, and in particular the Rapidly-exploring Information Gathering (RIG)~\citep{hollinger2013sampling,hollinger2014sampling} technique exploits a sampling-based planning strategy to calculate the cost and information gain while searching for traversable paths in the entire space. This approach differs from classic path planning problems as there is no goal to be found. Therefore, a solely information-driven robot control strategy can be formulated. Another important advantage of RIG methods is their \emph{multi-horizon planning} nature through representing the environment by an incrementally built graph that considers both the information gain and cost.

Rapidly-exploring information gathering algorithms are suitable for non-myopic robotic exploration techniques. However, the developed methods are not online, the information function calculation is often a bottleneck, and they do not offer an automated convergence criterion, i.e.{\@} they are \emph{anytime}~\footnote{Being anytime is typically a good feature, but we are interested in knowing when the algorithm converges.}. To be able to employ RIG for online navigation tasks, we propose an Incrementally-exploring Information Gathering (IIG) algorithm built on the RIG. In particular, the following developments are performed:

\begin{figure}[t]
  \centering 
  \includegraphics[width=0.95\columnwidth,trim={0cm 0cm 0cm 0cm},clip]{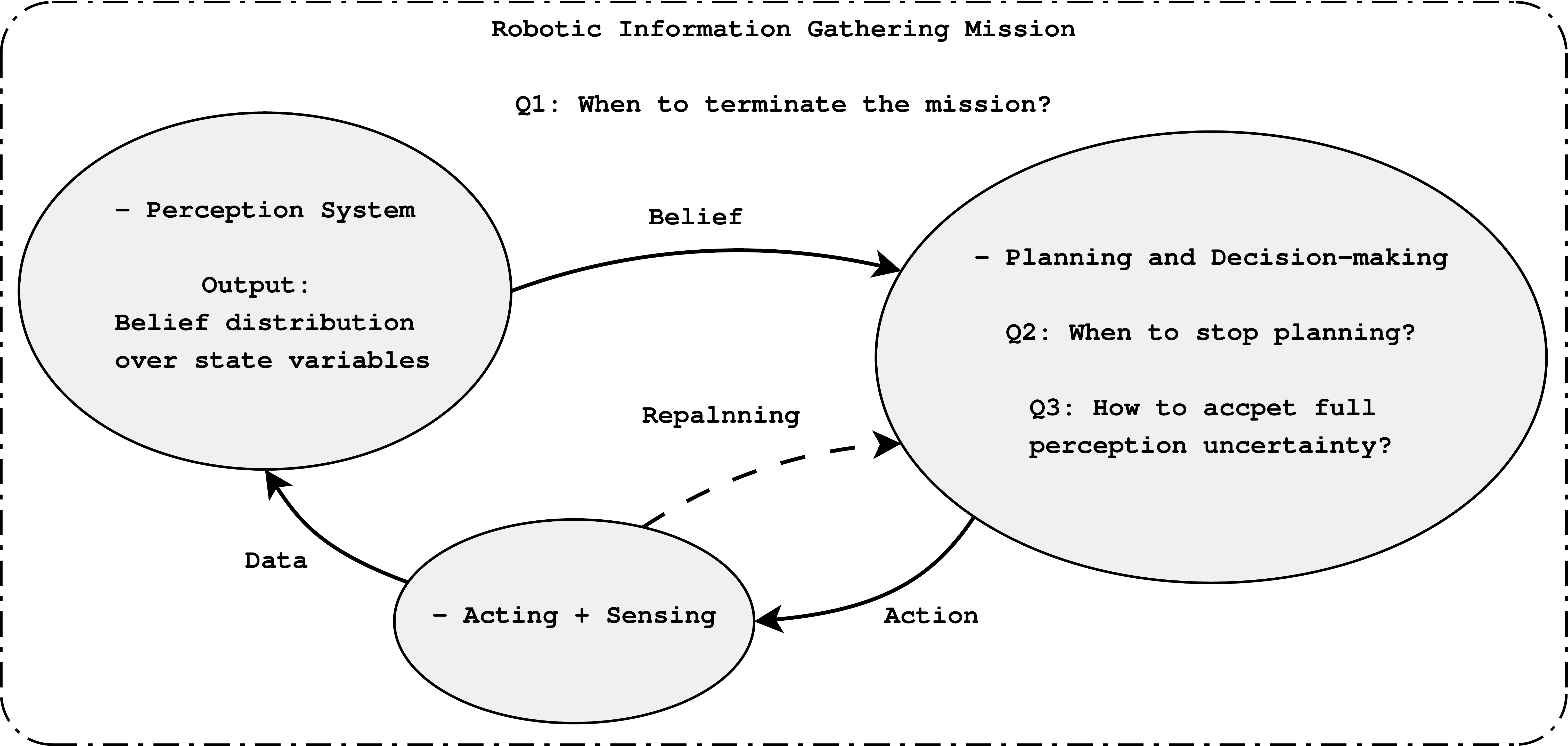}
  \caption{The abstract system for robotic information gathering. The three important questions that we try to answer in this work are asked (Q1-Q3). Note that the diagram is conceptual and, in practice, different modules can share many attributes and methods such as parameters and sensor models. The replanning problem is not studied in this work, that is the robot remains commited to the planned action until it enters the planning state again.}
  \label{fig:exploration}
\end{figure}

\begin{enumerate}
 \item[(i)] We allow the full state including the robot pose and a dense map to be partially observable.
 \item[(ii)] We develop a convergence criterion based on relative information contribution. This convergence criterion is a necessary step for incremental planning to make it possible for the robot to execute planned actions autonomously.
 \item[(iii)] We propose two classes of information functions that can approximate the information gain for online applications. The algorithmic implementation of the proposed functions is also provided.
 \item[(iv)] We develop a heuristic algorithm to extract the most informative trajectories from a RIG/IIG tree. This algorothm is necessary to extract an exutable trajectory (action) from RIG/IIG graphs (trees).
 \item[(v)] We prove an information-theoretic automatic stopping criterion for the entire mission based on the least upper bound of the average map (state variables) entropy.
 \item[(vi)] We provide results in batch and incremental experiments as well as in publicly available robotic datasets and discuss potential applications of the developed algorithms.
\end{enumerate}

Figure~\ref{fig:exploration} shows a conceptual illustration of the problem studied in this article. The three questions that we try to address are: Q1) When to terminate the entire information gathering mission? Q2) When to stop planning by automatically detecting the convergence of the planner, i.e. planning and acting incrementally? And Q3) How to incorporate the uncertainty of the perception system into the planner?

This article is organized as follows. In the next section, the related work is discussed. Section~\ref{sec:prelim} includes required preliminaries. In Section~\ref{sec:rigprob}, we present the problem definition and highlight the differences between RIG and IIG algorithms. We also explain RIG algorithms to establish the required basis. In Section~\ref{sec:incrig}, we present the novel IIG algorithm. In Section~\ref{sec:infofunc}, we propose two information functions that approximate the information quality of nodes and trajectories of the associated motion planning problem. In Section~\ref{sec:pathselect}, the problem of path extraction and selection is explained, and a heuristic algorithm is proposed to solve the problem. Section~\ref{sec:theoretic} presents the proof for an information-theoretic automatic stopping criterion for exploration and information gathering missions. In Section~\ref{sec:iigresults}, we present an extensive evaluation of the proposed strategies including a comparison with relevant techniques in the literature. A lake monitoring scenario based on a publically available dataset is also presented together with a discussion on the limitations of this work. Finally, Section~\ref{sec:conclusion} concludes the article and discusses possible extensions of the proposed algorithms as future work.

\section{Related work}
\label{sec:literature}
Motion planning algorithms~\citep{latombe1991robot,lavalle2006planning} construct a broad area of research in the robotic community. In the presence of uncertainty, where the state is not fully observable, measurements and actions are stochastic. The sequential decision-making under uncertainty, in the most general form, can be formulated as a Partially Observable Markov Decision Processes~(POMDP) or optimal control with imperfect state information~\citep{astrom1965optimal,smallwood1973optimal,bertsekas1995dynamic,kaelbling1998planning}. Unfortunately, when the problem is formulated using a dense belief representation, a general purpose POMDP solver is not a practical choice~\citep{binney2013optimizing}.

The sampling-based motion planning algorithms~\citep{horsch1994motion,kavraki1996probabilistic,lavalle2001randomized,bry2011rapidly,karaman2011sampling,lan2013planning} have proven successful applications in robotics. An interesting case is where the environment is fully observable (known map) and the objective is to make sequential decisions under motion and measurement uncertainty. This problem is also known as \emph{active localization}; for a recent technique to solve such a problem see~\citet[and references therein]{Aghamohammadi01022014}. A closely related term that is used in the literature is \emph{belief space planning~\citep{kurniawati2008sarsop,huynh2009iclqg,Prentice01112009,platt2010belief,kurniawati2011motion,van2011lqg,bry2011rapidly,valencia2013planning}.} In this series of works, the assumption of a known map can be relaxed and the environment is often represented using a set of features. The objective is to find a trajectory that minimizes the state uncertainty (the total cost) with respect to a fixed or variable (and bounded) planning horizon that can be set according to the \emph{budget}~\citep{indelman2015planning}.

In the context of feature-based representation of the environment, planning actions for Simultaneous Localization And Mapping (SLAM) using Model Predictive Control (MPC) is studied in~\citet{leung2006active}. To enable the robot to explore, a set of ``attractors'' are defined to resemble informative regions for exploration. In particular, it is concluded as expected that a multi-step (three steps) look-ahead MPC planner outperforms the greedy approach. In~\citet{atanasov2014information}, assuming the sensor dynamics is linear-Gaussian, the stochastic optimal control problem is reduced to a deterministic optimal control problem which can be solved offline. The deterministic nature of the problem has been exploited to solve it using forward value iteration~\citep{bertsekas1995dynamic,le2009trajectory}; furthermore, it is shown that the proposed solution, namely reduced value iteration, has a lower computational complexity than that of forward value iteration and its performance is better than the greedy policy. The work is also extended from a single robot to decentralized \emph{active information acquisition}, and using attractors as dummy exploration landmarks, i.e. frontiers, it has been successfully applied to the \emph{active SLAM} problem~\citep{atanasov2015decentralized}.

Another approach to study the problem of robotic navigation under uncertainty, is known as informative motion planning or robotic information gathering~\citep{leung2006planning,singh2009efficient,levine2010information,binney2012branch,binney2013optimizing,hollinger2013sampling,hollinger2014sampling,atanasov2015active}. Rapidly-exploring information gathering~\citep{hollinger2014sampling}, is a technique that solves the problem of informative motion planning using incremental sampling from the workspace and by \emph{partial ordering} of nodes builds a graph that contains informative trajectories.

In the problem we study in this article, the state variables consist of the robot trajectory and a dense map of the environment.  We build on rapidly-exploring information gathering technique by considering both the robot pose and the map being partially observable. We also develop an information-theoretic criterion for the convergence of the search which allows us to perform online robotic exploration in unknown environments. The Rapidly-exploring Adaptive Search and Classification (ReASC)~\citep{hollinger2015long} also improves on the rapidly-exploring information gathering by allowing for real-time optimization of search and classification objective functions. However, ReASC relies on discrete target locations and, similar to~\cite{platt2010belief}, resorts to the maximum likelihood assumption for future observations.

The technique reported in~\citet{charrow2015information,charrow2015informationrss} is closely related to this work but assumes that the robot poses are known, i.e. fully observable, to solve the problem of information gathering for occupancy mapping with the help of geometric frontiers~\citep{yamauchi1997frontier,strom2015predictive}. The computational performance of the information gain estimation is increased by using Cauchy-Schwarz Quadratic Mutual Information (CSQMI). It is shown that the behavior of CSQMI is similar to that of mutual information while it can be computed faster~\citep[Subsection 6.2.2 and Figure 6.1]{charrow2015informationphd}. It is argued in~\citet[Subsection V.C]{charrow2015informationrss} that: \say{\emph{It is interesting that the human operator stopped once they believed they had obtained a low uncertainty map and that all autonomous approaches continue reducing the map's entropy beyond this point, as they continue until no frontiers are left. However, the final maps are qualitatively hard to differentiate, suggesting a better termination condition is needed.}} We agree with this argument and relax such a constraint by exploiting a sampling-based planning strategy to calculate the cost and information gain while searching for traversable paths in the entire space. We also prove an information-theoretic automatic stopping criterion for the entire mission which can alleviate this issue. In Subsection~\ref{subsec:oniigres}, we conduct robotic exploration experiments to show the effectiveness of the proposed termination condition. We show that the proposed condition enables the robot to produce comparable results while collecting less measurements, but sufficient information for the inference.

In particular, the main features of this work that differentiate the present approach from the literature mentioned above can be summarized as follows. 
\begin{itemize}
 \item[(i)] We allow for dense belief representations. Therefore, we incorporate full state uncertainty, i.e.{\@} the robot pose and the map, into the planning process. As a result, the robot behavior has a strong correlation with its perception uncertainty.
 \item[(ii)] We take into account all possible future observations and do not resort to maximum likelihood assumptions. Therefore, the randomness of future observations is addressed.
 \item[(iii)] We take into account both cost and information gain. The cost is included through a measure of distance, and the information gain quantifies the sensing quality and acting uncertainty. Therefore, the planning algorithm runs with respect to available sensing resources and acting limitations.
 \item[(iv)] We propose an information-theoretic notion of planning horizon which leads to an infinite-horizon planning technique and provides a general framework for incremental planning and acting.
 \item[(v)] We offer an automatic stopping criterion for the entire information gathering mission that can be easily incorporated in many available algorithms mentioned above.
\end{itemize}

\section{Preliminaries}
\label{sec:prelim}

In this section, we briefly explain the mathematical notation and required preliminaries throughout this article.

\subsection{Mathematical notation}
\label{subsec:notation}
In the present article probabilities and probability densities are not distinguished in general. Matrices are capitalized in bold, such as in $\boldsymbol X$, and vectors are in lower case bold type, such as in $\boldsymbol x$. Vectors are column-wise and $1\colon n$ means integers from $1$ to $n$. The Euclidean norm is shown by $\lVert \cdot \rVert$. $\lvert \boldsymbol X \rvert$ denotes the determinant of matrix $\boldsymbol X$. For the sake of compactness, random variables, such as $X$, and their realizations, $x$, are sometimes denoted interchangeably where it is evident from context. $x^{[i]}$ denotes a reference to the $i$-th element of the variable. An alphabet such as $\mathcal{X}$ denotes a set, and the cardinality of the set is denoted by $\lvert \mathcal{X} \rvert$. A subscript asterisk, such as in $\boldsymbol x_*$, indicates a reference to a test set quantity. The $n$-by-$n$ identity matrix is denoted by $\boldsymbol I_{n}$. $\mathrm{vec}(x^{[1]},\dots,x^{[n]})$ denotes a vector such as $\boldsymbol x$ constructed by stacking $x^{[i]}$, $\forall i \in \{1\colon n\}$. The function notation is overloaded based on the output type and denoted by $k(\cdot)$, $\boldsymbol k(\cdot)$, and $\boldsymbol K(\cdot)$ where the outputs are scalar, vector, and matrix, respectively. Finally, $\EV\cdot$ and $\Var\cdot$ denote the expected value and variance of a random variable, respectively.

\subsection{Information theory}
\label{subsec:Information}
Entropy is a measure of the uncertainty of a random variable \citep{cover2012elements}. The entropy $H(X)$ of a discrete random variable $X$ is defined as $H(X) = \mathbb{E}_{p(x)}[\log \frac{1}{p(x)}] = -\sum_{\mathcal{X}} p(x) \log{p(x)}$ which implies that $H(X) \geq 0$. The joint entropy $H(X,Y)$ of discrete random variables $X$ and $Y$ with a joint distribution $p(x,y)$ is defined as $H(X,Y) = -\sum_{x\in\mathcal{X}} \sum_{y\in\mathcal{Y}} p(x,y) \log{p(x,y)}$. The conditional entropy is defined as $H(Y| X) = -\sum_{x\in\mathcal{X}} \sum_{y\in\mathcal{Y}} p(x,y) \log{p(y| x)}$.
\begin{theorem}[Chain rule for entropy~\citep{cover2012elements}]
\label{th:entchainrule}
 Let $X_1,X_2,...,X_n$ be drawn according to $p(x_1,x_2,...,x_n)$. Then
 \begin{equation}
  H(X_1,X_2,...,X_n) = \sum_{i=1}^n H(X_i|X_{i-1},...,X_1) 
 \end{equation}
\end{theorem}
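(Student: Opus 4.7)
The plan is to reduce the $n$-variable statement to the two-variable case $H(X,Y) = H(X) + H(Y\mid X)$ and then iterate. I would first establish the two-variable identity directly from the definitions given just above the theorem. Starting from $p(x,y) = p(x)\,p(y\mid x)$ and taking logarithms gives $\log p(x,y) = \log p(x) + \log p(y\mid x)$; multiplying by $-p(x,y)$ and summing over $\mathcal{X}\times\mathcal{Y}$ yields
\begin{equation*}
H(X,Y) = -\sum_{x,y} p(x,y)\log p(x) \;-\; \sum_{x,y} p(x,y)\log p(y\mid x).
\end{equation*}
The first term collapses to $-\sum_x p(x)\log p(x) = H(X)$ after marginalizing out $y$, and the second term is exactly $H(Y\mid X)$ by the definition provided. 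This gives the base case.

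Next, I would proceed by induction on $n$. For the inductive step, assume the identity holds for $n-1$ variables, and regard $(X_1,\ldots,X_{n-1})$ as a single grouped random variable $Z$. Applying the two-variable chain rule to $(Z, X_n)$ gives
\begin{equation*}
H(X_1,\ldots,X_n) = H(X_1,\ldots,X_{n-1}) + H(X_n \mid X_1,\ldots,X_{n-1}).
\end{equation*}
Invoking the inductive hypothesis on the first term on the right and absorbing the last term into the sum produces $\sum_{i=1}^n H(X_i \mid X_{i-1},\ldots,X_1)$, as desired. A mild convention is needed: the $i=1$ term should be interpreted as the unconditional entropy $H(X_1)$, matching an empty conditioning list.

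As an alternative to induction, I could also give a one-shot proof by invoking the probability chain rule $p(x_1,\ldots,x_n) = \prod_{i=1}^n p(x_i \mid x_{i-1},\ldots,x_1)$, taking logs, multiplying by $-p(x_1,\ldots,x_n)$, summing over $\mathcal{X}_1\times\cdots\times\mathcal{X}_n$, and using linearity together with marginalization of the joint distribution down to each conditional marginal. Either route is routine; there is no real conceptual obstacle here. The only subtlety worth flagging is careful bookkeeping of which indices are marginalized away in each term so that each summand correctly reduces to $H(X_i \mid X_{i-1},\ldots,X_1)$ rather than some joint with leftover variables. I would favor the inductive proof since it is cleaner and avoids wrangling $n$-fold sums.
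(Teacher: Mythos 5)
Your proof is correct: the two-variable identity derived from $p(x,y)=p(x)\,p(y\mid x)$ plus induction (or, equivalently, the one-shot argument via the probability chain rule) is exactly the standard argument in the cited reference. The paper itself gives no proof of this theorem --- it simply quotes it from \citet{cover2012elements} --- so your write-up supplies precisely the textbook reasoning the paper relies on, with the right convention that the $i=1$ term is the unconditional entropy $H(X_1)$.
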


The relative entropy or Kullback-Leibler Divergence (KLD) is a measure of distance between two distributions $p(x)$ and $q(x)$. It is defined as $D(p||q) = \mathbb{E}_{p(x)}[\log{\frac{p(x)}{q(x)}}]$.
\begin{theorem}[Information inequality~\citep{cover2012elements}]
\label{th:infoineq}
 Let $X$ be a discrete random variable. Let $p(x)$ and $q(x)$ be two probability mass functions. Then
 \begin{equation}
  D(p||q) \geq 0
\end{equation}
with equality if and only if $p(x) = q(x)\ \forall \ x$. 
\end{theorem}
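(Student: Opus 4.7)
The plan is to prove the information inequality by the classical Jensen's-inequality argument applied to the concave function $\log$. First I would reduce to the support of $p$, namely the set $A = \{x \in \mathcal{X} : p(x) > 0\}$, so that every logarithm appearing in the definition of $D(p||q)$ is well defined; terms with $p(x) = 0$ vanish under the standard convention $0 \log 0 = 0$, and any $x \in A$ with $q(x) = 0$ would make $D(p||q) = +\infty$, rendering the inequality trivial. So I may assume $q(x) > 0$ on $A$.

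Next I would flip the sign and rewrite
\begin{equation}
-D(p||q) = \sum_{x \in A} p(x) \log \frac{q(x)}{p(x)}.
\end{equation}
Treating $\{p(x)\}_{x \in A}$ as the weights of an expectation, Jensen's inequality for the concave function $\log$ yields
\begin{equation}
\sum_{x \in A} p(x) \log \frac{q(x)}{p(x)} \leq \log \sum_{x \in A} p(x) \cdot \frac{q(x)}{p(x)} = \log \sum_{x \in A} q(x) \leq \log \sum_{x \in \mathcal{X}} q(x) = \log 1 = 0,
\end{equation}
where the second inequality uses nonnegativity of $q$ and the final equality uses that $q$ is a probability mass function. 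Rearranging gives $D(p||q) \geq 0$.

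For the equality characterization, the delicate point is that two distinct inequalities must become tight simultaneously. Strict concavity of $\log$ makes Jensen's inequality an equality exactly when the ratio $q(x)/p(x)$ is constant on $A$, say equal to some $c$. The subsequent inequality is tight exactly when $q(x) = 0$ for every $x \notin A$. Combining the two observations yields $q(x) = c\,p(x)$ for all $x \in \mathcal{X}$; summing over $x$ forces $c = 1$, so $p \equiv q$. The converse direction is immediate from substituting $p = q$ into the definition.

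The main obstacle, such as there is one, will be the bookkeeping around the support $A$ and the $0 \log 0 = 0$ convention so that the Jensen step is applied to a genuine probability measure; once this is set up cleanly, the rest of the argument is essentially a single line and the equality analysis follows from the strict concavity of $\log$.
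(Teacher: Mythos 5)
Your proof is correct: the reduction to the support of $p$, the Jensen step for the strictly concave $\log$, and the two-part equality analysis are exactly the standard argument. The paper itself states this result without proof, citing Cover and Thomas, and your argument is precisely the textbook proof being invoked there, so there is nothing to reconcile.
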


The mutual information (MI), $I(X;Y) = D(p(x,y)|| p(x)p(y)) = H(X) - H(X|Y)$, is the reduction in the uncertainty of one random variable due to the knowledge of the other.
\begin{corollary}[Nonnegativity of mutual information]
 For any two random variables $X$ and $Y$,
 \begin{equation}
  I(X;Y) \geq 0
\end{equation}
with equality if and only if $X$ and $Y$ are independent.
\end{corollary}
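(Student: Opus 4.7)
The plan is to derive this corollary as an immediate specialization of Theorem~\ref{th:infoineq}. The definition of mutual information supplied in the paper, $I(X;Y) = D(p(x,y)\,\|\,p(x)p(y))$, already expresses $I(X;Y)$ as a relative entropy between two specific probability mass functions on the product alphabet $\mathcal{X}\times\mathcal{Y}$, so essentially all the work of the proof consists of reducing the claim to an invocation of the earlier theorem.

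First I would check that the two distributions appearing inside the KLD are bona fide pmfs on $\mathcal{X}\times\mathcal{Y}$: $p(x,y)$ is one by hypothesis, and $p(x)p(y)$, being a product of pmfs on $\mathcal{X}$ and $\mathcal{Y}$, is itself a pmf on the product space. This verification is the only prerequisite needed in order to apply Theorem~\ref{th:infoineq} with $p\leftarrow p(x,y)$ and $q\leftarrow p(x)p(y)$; the theorem then yields $D(p(x,y)\,\|\,p(x)p(y))\geq 0$, and substituting the definition of $I(X;Y)$ gives the stated inequality.

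For the equality case I would simply read off the corresponding clause of Theorem~\ref{th:infoineq}: equality holds iff $p(x,y) = p(x)p(y)$ for every $(x,y)\in\mathcal{X}\times\mathcal{Y}$, which is precisely the definition of independence of $X$ and $Y$. The main obstacle is essentially vacuous; the result is labelled a corollary precisely because it follows at once from the preceding theorem once mutual information has been recognized as a special KLD. If Theorem~\ref{th:infoineq} were not available, I would instead prove $I(X;Y)\geq 0$ from scratch by applying Jensen's inequality to the strictly concave function $\log$ against the joint distribution $p(x,y)$, with equality characterized by the strict concavity of $\log$, but this simply reproduces the argument already absorbed into the information inequality.
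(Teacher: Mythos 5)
Your proposal is correct and follows exactly the paper's route: the paper likewise writes $I(X;Y)=D(p(x,y)\,\|\,p(x)p(y))\geq 0$ and invokes the information inequality (Theorem~\ref{th:infoineq}), with equality iff $p(x,y)=p(x)p(y)$, i.e.\ independence. Your extra check that $p(x,y)$ and $p(x)p(y)$ are valid pmfs on the product alphabet is a harmless (and sensible) elaboration of the same one-line argument.
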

\begin{proof}
 $I(X;Y) = D(p(x,y)|| p(x)p(y)) \geq 0$, with equality if and only if $p(x,y) = p(x)p(y)$.
\end{proof}

Some immediate consequences of the provided definitions are as follows.
\begin{theorem}[Conditioning reduces entropy]
\label{th:condentineq}
 For any two random variables $X$ and $Y$,
 \begin{equation}
  H(X|Y) \leq H(X)
\end{equation}
with equality if and only if $X$ and $Y$ are independent.
\end{theorem}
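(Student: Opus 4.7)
The plan is to deduce this theorem as an immediate consequence of the nonnegativity of mutual information (the Corollary stated just above), by exploiting the identity
\[
  I(X;Y) = H(X) - H(X\mid Y),
\]
which is recalled in the paragraph introducing mutual information. Once that identity is in hand, the inequality $H(X\mid Y)\leq H(X)$ and the characterization of the equality case both fall out in one line.

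First I would verify the identity $I(X;Y)=H(X)-H(X\mid Y)$ from the definitions. Starting from $I(X;Y)=D(p(x,y)\,\|\,p(x)p(y))=\sum_{x,y}p(x,y)\log\frac{p(x,y)}{p(x)p(y)}$, I would split the logarithm using $p(x,y)=p(x\mid y)p(y)$, giving $\sum_{x,y}p(x,y)\log\frac{p(x\mid y)}{p(x)}$. Separating the two logarithms yields $-\sum_{x}p(x)\log p(x)+\sum_{x,y}p(x,y)\log p(x\mid y)$, which is precisely $H(X)-H(X\mid Y)$ by the definitions given for $H(X)$ and $H(X\mid Y)$ (note that for the conditional entropy term, the roles of $X$ and $Y$ in the earlier definition can be swapped since the indexing is symmetric). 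This is routine and the paper already presents it as an aside; I would state it as a single display.

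With the identity established, the inequality is immediate: by the Corollary, $I(X;Y)\geq 0$, hence $H(X)-H(X\mid Y)\geq 0$, i.e.\ $H(X\mid Y)\leq H(X)$. For the equality clause, I would invoke the equality condition from the Corollary: $I(X;Y)=0$ if and only if $X$ and $Y$ are independent. Rewriting $I(X;Y)=H(X)-H(X\mid Y)$, equality in the entropy inequality is equivalent to $I(X;Y)=0$, which by the Corollary is equivalent to independence.

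There is essentially no obstacle here; the theorem is a one-line corollary of what has just been proved. The only point worth being careful about is not invoking the identity $I(X;Y)=H(X)-H(X\mid Y)$ as if it had been formally proved earlier, since the excerpt only states it in a running comment rather than as a numbered result. I would therefore give the short algebraic derivation of the identity explicitly, and then the two sentences deriving the inequality and its equality condition from the Corollary. The entire proof should be four or five lines.
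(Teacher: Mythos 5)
Your proposal is correct and matches the paper's own argument: the paper proves this in one line, writing $0 \leq I(X;Y) = H(X) - H(X|Y)$, i.e.\ exactly the reduction to nonnegativity of mutual information via the identity $I(X;Y)=H(X)-H(X\mid Y)$ that you use. Your explicit verification of that identity and of the equality case is just a slightly more careful spelling-out of the same route.
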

\begin{proof}
 $0 \leq I(X;Y) = H(X) - H(X|Y)$.
\end{proof}

We now define the equivalent of the functions mentioned above for probability density functions. Let $X$ be a continuous random variable whose support set is $\mathcal{S}$. Let $p(x)$ be the probability density function for $X$. The differential entropy $h(X)$ of $X$ is defined as $h(X) = -\int_{\mathcal{S}} p(x) \log{p(x)} dx$. Let $X$ and $Y$ be continuous random variables that have a joint probability density function $p(x,y)$. The conditional differential entropy $h(X|Y)$ is defined as $h(X|Y) = -\int p(x,y) \log{p(x|y)} dx dy$. The relative entropy (KLD) between two probability density functions $p$ and $q$ is defined as $D(p||q) = \int p \log{\frac{p}{q}}$.
The mutual information $I(X;Y)$ between two continuous random variables $X$ and $Y$ with joint probability density function $p(x,y)$ is defined as $I(X;Y) = \int p(x,y) \log{\frac{p(x,y)}{p(x)p(y)}}dx dy$.

\subsection{Submodular functions}
\label{subsec:submodular}
A set function $f$ is said to be \emph{submodular} if \mbox{$\forall \Acal \subseteq \Bcal \subseteq \Scal$} and $\forall s \in \Scal \backslash \Bcal$, then \mbox{$f(\Acal \cup s) - f(\Acal) \geq f(\Bcal \cup s) - f(\Bcal)$}. Intuitively, this can be explained as: by adding observations to a smaller set, we gain more information. The function $f$ has diminishing return. It is \emph{normalized} if $f(\varnothing) = 0$ and it is \emph{monotone} if $f(\Acal) \leq f(\Bcal)$. The mutual information is normalized, approximately monotone, and submodular~\citep{krause2008near}.

\subsection{Gaussian processes}
\label{subsec:GPs}

Gaussian Processes (GPs) are non-parametric Bayesian regression techniques that employ statistical inference to learn dependencies between points in a data set~\citep{rasmussen2006gaussian}. The joint distribution of the observed target values, $\boldsymbol y$, and the function values (the latent variable), $\boldsymbol f_*$, at the query points can be written as
\begin{equation}
\label{eq:gp_joint}
 \begin{bmatrix}
	\boldsymbol y \\
	\boldsymbol f_*
 \end{bmatrix} \sim \mathcal{N}(\boldsymbol 0,
 \begin{bmatrix}
	\boldsymbol K(\boldsymbol X,\boldsymbol X)+\sigma_n^2 \boldsymbol I_{n} & \boldsymbol K(\boldsymbol X,\boldsymbol X_*) \\
	\boldsymbol K(\boldsymbol X_*,\boldsymbol X)			& \boldsymbol K(\boldsymbol X_*,\boldsymbol X_*) 
 \end{bmatrix})
\end{equation}
where $\boldsymbol X$ is the $d\times n$ design matrix of aggregated input vectors $\boldsymbol x$, $\boldsymbol X_*$ is a $d\times n_*$ query points matrix, $\boldsymbol K(\cdot,\cdot)$ is the GP covariance matrix, and $\sigma_n^2$ is the variance of the observation noise which is assumed to have an independent and identically distributed (i.i.d.) Gaussian distribution. Define a training set \mbox{$\mathcal{D} = \{(\boldsymbol x^{[i]},y^{[i]}) \mid i=1\colon n\}$}. The predictive conditional distribution for a single query point \mbox{$f_*|\mathcal{D},\boldsymbol x_* \sim \mathcal{N}(\EV{f_*},\Var{f_*})$} can be derived as 
\begin{equation}
 \label{eq:gp_mean}
 \mu = \EV{f_*} = \boldsymbol k(\boldsymbol X,\boldsymbol x_*)^{T}[\boldsymbol K(\boldsymbol X,\boldsymbol X)+\sigma_n^2 \boldsymbol I_{n}]^{-1}\boldsymbol y
\end{equation}
\begin{equation}
\label{eq:gp_cov}
 \sigma = \Var{f_*} = k(\boldsymbol x_*,\boldsymbol x_*) - \boldsymbol k(\boldsymbol X,\boldsymbol x_*)^{T}[\boldsymbol K(\boldsymbol X,\boldsymbol X)+\sigma_n^2 \boldsymbol I_{n}]^{-1}\boldsymbol k(\boldsymbol X,\boldsymbol x_*)
\end{equation}

The hyperparameters of the covariance and mean function, $\boldsymbol\theta$, can be computed by minimization of the negative log of the marginal likelihood (NLML) function.
\begin{equation}
\label{eq:nlml}
	\log p(\boldsymbol y|\boldsymbol X,\boldsymbol\theta) = -\frac{1}{2}\boldsymbol y^{T}(\boldsymbol K(\boldsymbol X,\boldsymbol X)+\sigma_n^2 \boldsymbol I_{n})^{-1}\boldsymbol y -\frac{1}{2}\log \arrowvert K(\boldsymbol X,\boldsymbol X)+\sigma_n^2 \boldsymbol I_{n} \arrowvert-\frac{n}{2}\log 2\pi\
\end{equation}

\subsubsection{Covariance function}
\label{subsubsec:kerneldef}

Covariance functions are the main part of any GPs. We define a covariance function using the kernel definition as follows. 
\begin{definition}[Covariance function]
Let $\boldsymbol x \in \mathcal{X}$ and $\boldsymbol x' \in \mathcal{X}$ be a pair of inputs for a function
\mbox{$k:\mathcal{X} \times \mathcal{X} \rightarrow \mathbb{R}$} known as kernel.
A kernel is called a covariance function, as the case in Gaussian processes, if it is
symmetric, $k(\boldsymbol x,\boldsymbol x')=k(\boldsymbol x',\boldsymbol x)$, and positive semidefinite:
\begin{equation}
 \int k(\boldsymbol x,\boldsymbol x') f(\boldsymbol x) f(\boldsymbol x') d\mu(\boldsymbol x) d\mu(\boldsymbol x') \geq 0
\end{equation}
for all $f \in L_2(\mathcal{X},\mu)$. 
\end{definition}
Given a set of input points $\{\boldsymbol x^{[i]}|i=1:n\}$, a covariance matrix can be constructed using \mbox{$\boldsymbol K^{[i,j]}=k(\boldsymbol x^{[i]},\boldsymbol x^{[j]})$} as its entries.

\begin{figure}[t]
  \centering 
  \subfloat{\includegraphics[width=.3\columnwidth]{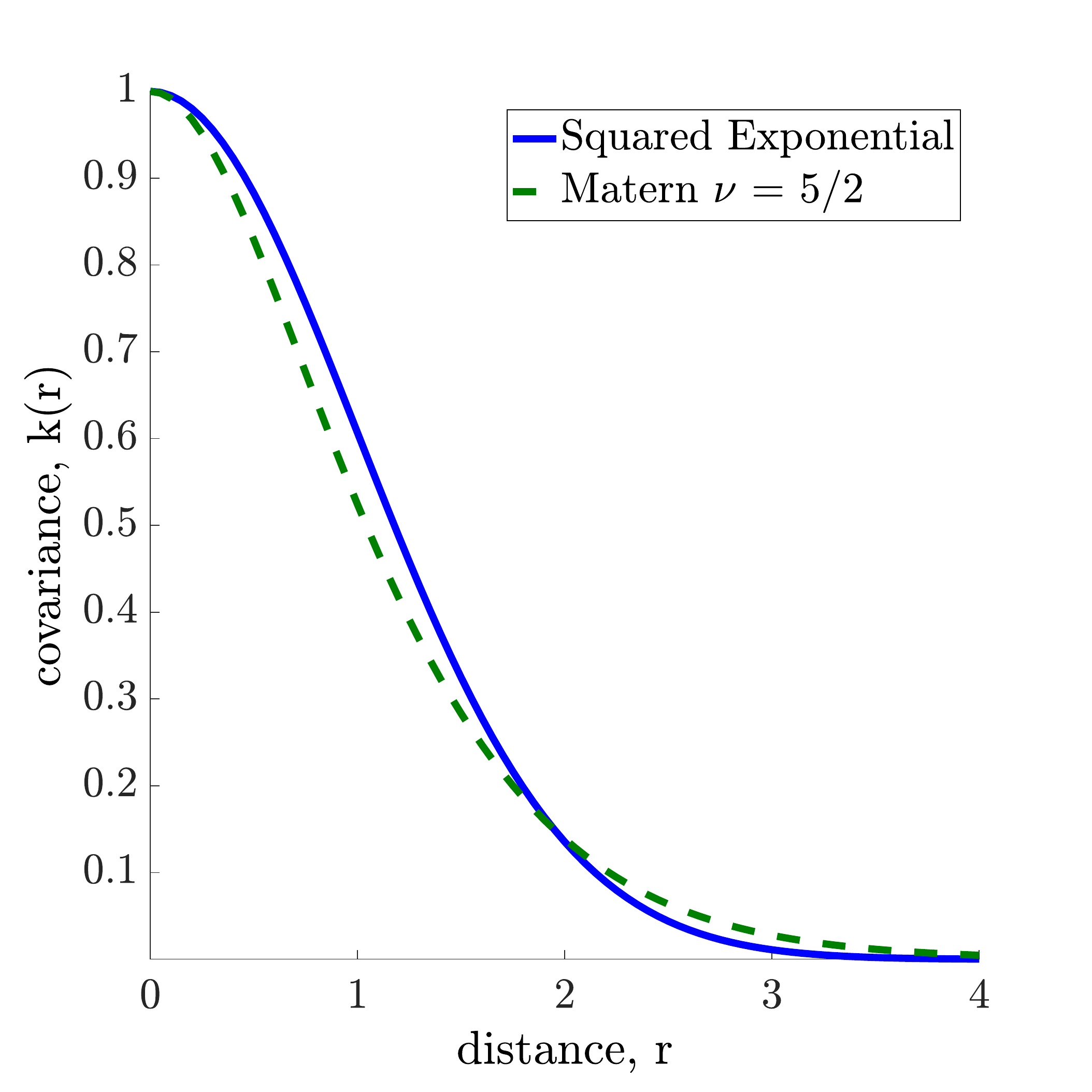}~
 }
  \subfloat{\includegraphics[width=.3\columnwidth]{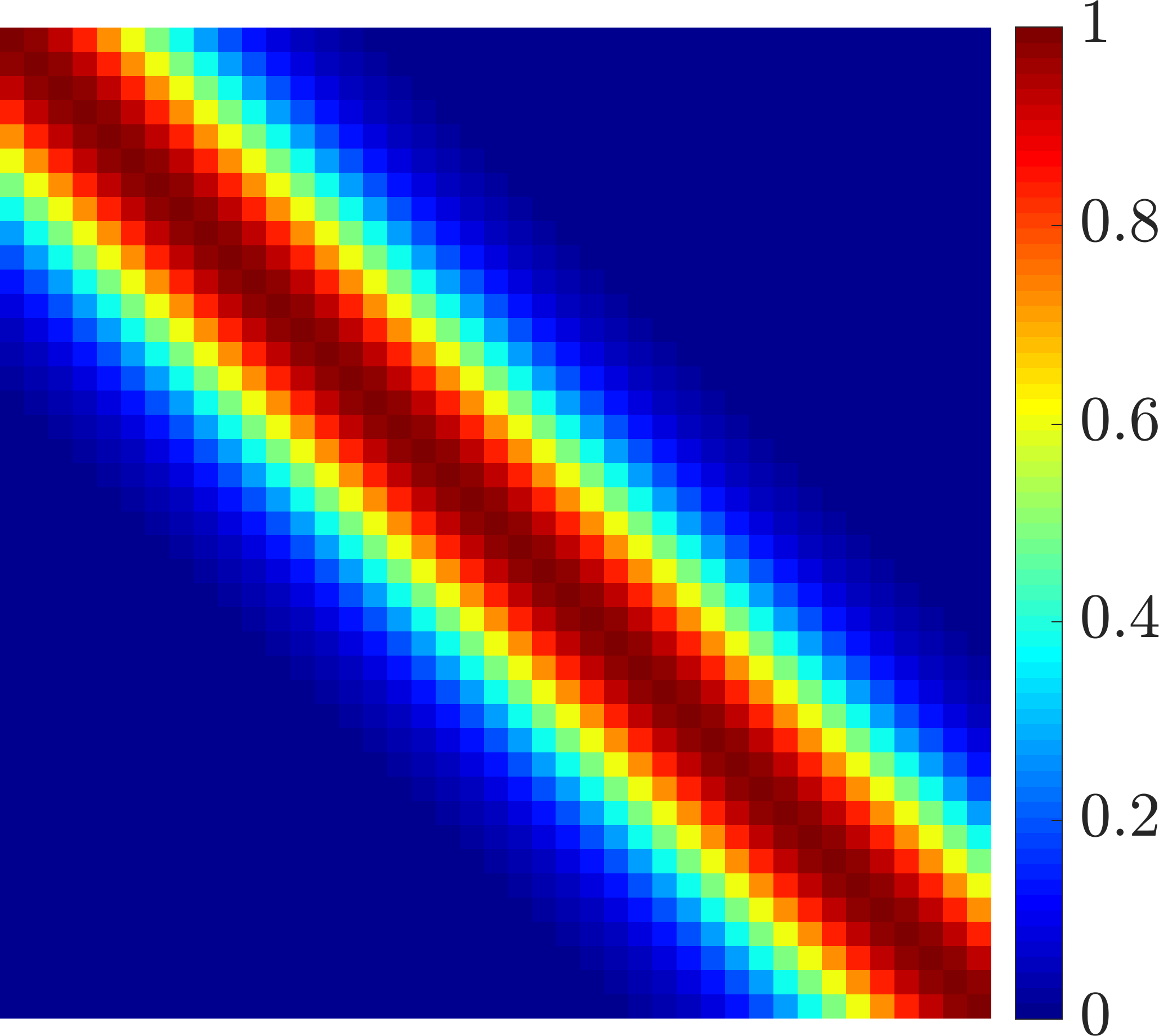}~
 }
  \subfloat{\includegraphics[width=.3\columnwidth]{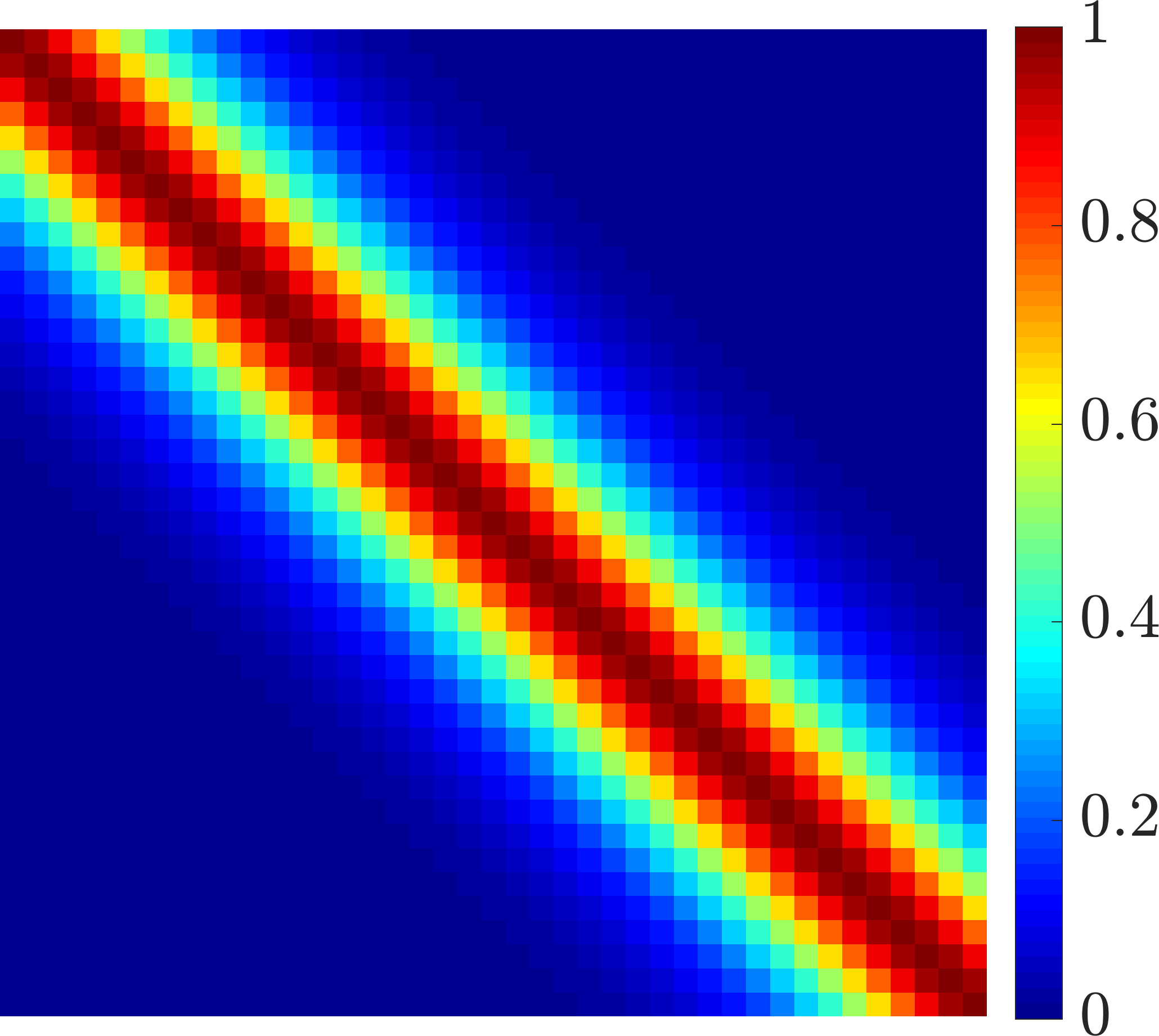}
 } 
  \caption{Illustrative examples of the SE and Mat\'ern ($\nu = 5/2$) covariance function as the distance parameter is increased from $0$ to $4$ and their corresponding function values in the kernel space, from the left respectively. The length-scale parameter is set to one.}
  \label{fig:kernels}
\end{figure}

\subsubsection{Useful kernels}
\label{subsubsec:kernelex}
The \emph{squared exponential} (SE) covariance function has the form $k(r) = \exp(-\frac{r^2}{2l^2})$ where $r=\lVert \boldsymbol x - \boldsymbol x_* \rVert$ is the distance between two input arguments of the covariance function and $l$ is the \emph{characteristic length-scale}. This covariance function is the most common kernel used in GPs and is infinitely differentiable. The Mat\'ern family of covariance functions \citep{stein1999interpolation} has proven powerful features to model structural correlations \citep{maani2014com,kim2015gpmap}. For a single query point $\boldsymbol x_*$ the function is given by 
\begin{equation}
\label{eq:Matern}
k(r) = \frac{1}{\Gamma(\nu) 2^{\nu-1}}\left[\frac{\sqrt{2\nu}r}{l}\right]^{\nu} K_{\nu}\left(\frac{\sqrt{2\nu}r}{l} \right)
\end{equation}
where $\Gamma(\cdot)$ is the Gamma function, $K_{\nu}(\cdot)$ is the modified Bessel function of 
the second kind of order $\nu$, $l$ is the characteristic length scale, 
and $\nu$ is a positive parameter used to control the smoothness of the covariance. In the limit for $\nu \rightarrow \infty$ this covariance function
converges to the SE kernel. 

Examples of the SE and Mat\'ern ($\nu = 5/2$) covariance functions as the distance parameter $r$ increases are shown in Figure~\ref{fig:kernels}. The functions are also plotted in kernel space.

\section{Problem statement}
\label{sec:rigprob}

The problem of robotic information gathering is formulated as a maximization problem subject to finite resources, i.e.{\@} a budget $b$.
In~\cite{hollinger2014sampling}, this problem is defined as follows.

\begin{definition}[Trajectory]
 Let $\mathcal{X}_f$ denotes the free workspace. A trajectory, $\mathcal{P} \in \mathcal{X}_f$, is a sequence of reachable points in which any two consecutive points are connected using a collision-free path and with respect to the robot motion constraints.
\end{definition}
\begin{problem}[Informative motion planning]
\label{prob:rig}
Let $\mathcal{A}$ be the space of all possible trajectories and $f_I(\mathcal{P})$ be a function that quantifies the information
quality along a trajectory $\mathcal{P}$. Let $f_c(\mathcal{P})$ be a function that returns the cost associated with trajectory $\mathcal{P}$.
Given the available budget $b$, the problem can be formulated as follows.
\begin{equation}
 \mathcal{P}^* =  \underset{\mathcal{P} \in \mathcal{A}}\argmax \ f_I(\mathcal{P}) \quad \text{s.t.} \ f_c(\mathcal{P}) \leq b 
\end{equation}
\end{problem}
Now we express the assumptions in RIG algorithms.
\begin{assumption}
 The cost function $f_c(\mathcal{P})$ is strictly positive, monotonically increasing, bounded, and additive such as distance and energy.
\end{assumption}

\begin{remark}
 The information function $f_I(\mathcal{P})$ can be modular, time-varying modular, or submodular.
\end{remark}
The information function assumption follows from~\cite{hollinger2014sampling}, even though we focus our attention on the submodular class of information 
functions as the information gathered at any future time during navigation depends on prior robot trajectories. Another reason to 
consider submodular information functions is to avoid information ``double-counting''. This allows us to develop an information-theoretic 
convergence criterion for RIG/IIG as the amount of available information remains 
bounded.
The following assumptions are directly from~\cite{hollinger2014sampling} which in turn are equivalent or adapted from~\cite{bry2011rapidly,karaman2011sampling}.
The \texttt{Steer} function used in Assumption~\ref{assump:interpoint} extends nodes towards newly sampled points.
\begin{assumption}
\label{assump:interpoint}
 Let $\boldsymbol x_a$, $\boldsymbol x_b$, and $\boldsymbol x_c \in \mathcal{X}_{f}$ be three points within radius $\Delta$ of each other. Let the trajectory $e_1$ be generated
 by $\texttt{Steer}(\boldsymbol x_a, \boldsymbol x_c,\Delta)$, $e_2$ be generated by $\texttt{Steer}(\boldsymbol x_a, \boldsymbol x_b,\Delta)$, and $e_3$ be generated by $\texttt{Steer}(\boldsymbol x_b, \boldsymbol x_c,\Delta)$.
 If $\boldsymbol x_b \in e_1$, then the concatenated trajectory $e_2+e_3$ must be equal to $e_1$ and have equal cost and information.
\end{assumption}
This assumption is required as in the limit drawn samples are infinitely close together, 
and the \texttt{Steer} function, cost, and information need to be consistent for any intermediate point.
\begin{assumption}
\label{assump:ballr}
 There exists a constant $r \in \mathbb{R}_{>0}$ such that for any point $\boldsymbol x_a \in \mathcal{X}_f$ there exists an $\boldsymbol x_b \in \mathcal{X}_f$, such that
 $1)$ the ball of radius $r$ centered at $\boldsymbol x_a$ lies inside $\mathcal{X}_f$ and $2)$ $\boldsymbol x_a$ lies inside the ball of radius $r$ centered at $\boldsymbol x_b$.
\end{assumption}
This assumption ensures that there is enough free space near any point for extension of the graph. Violation of this assumption in practice can lead to failure of the algorithm to find a path. 
\begin{assumption}[Uniform sampling]\footnote{Results extend naturally to any absolutely continuous distribution with density bounded away from zero on workspace $\mathcal{X}$~\citep{karaman2011sampling}.}
 Points returned by the sampling function \texttt{sample} are i.i.d. and drawn from a uniform distribution.
\end{assumption}

\subsection{Incremental informative motion planning}
\label{subsec:incinfmp}
Now we define the problem of incremental informative motion planning as follows.
\begin{problem}[Incremental informative motion planning]
\label{prob:iig}
Let $s_{0:t_s} \in \mathcal{S}$ be the current estimate of the state up to time $t_s$. Let $\mathcal{A}_t$ be the space of all possible trajectories 
at time $t$ and $f_I(\mathcal{P}_t)$ be a function that quantifies the information quality along a trajectory $\mathcal{P}_t$. 
Let $f_c(\mathcal{P}_t)$ be a function that returns the cost associated with trajectory $\mathcal{P}_t$.
Given the available budget $b_t$, the problem can be formulated as follows.
\begin{align}
\label{eq:incinfmp}
 \nonumber \mathcal{P}^*_t =  &\underset{\mathcal{P}_t \in \mathcal{A}_t}\argmax \ f_I(\mathcal{P}_t) \quad \forall \ t > t_s \\
 &\quad \text{s.t.} \ f_c(\mathcal{P}_t) \leq b_t\ \text{and}\ S = s_{0:t_s}
\end{align}
\end{problem}
\begin{remark}
 The state $S$ can include the representation of the environment (map), the robot trajectory, 
 and possibly any other variables defined in the state vector. In general, the information function $f_I(\mathcal{P}_t)$ is responsible 
 for incorporating the state uncertainty in the information gain calculations.
\end{remark}
\begin{remark}
 In practice we solve the problem incrementally and use a planning horizon $T > t_s$ that in the limit goes to $\infty$.
\end{remark}

The main difference between Problem~\ref{prob:rig} and Problem~\ref{prob:iig} is that in the latter the robot does not have the full knowledge of the environment \emph{a priori}. Therefore Problem~\ref{prob:iig} is not only the problem of information gathering but also \emph{planning for estimation} as the robot needs to infer the map (and in general its pose in the SLAM problem) sequentially. Note that we do not impose any assumptions on the observability of the robot pose and the map; therefore, they can be partially observable as is the case in POMDPs.

The aforementioned problems are both in their offline (nonadaptive) and online (adaptive) forms NP-hard~\citep{singh2009efficient}.
We build our proposed incremental information gathering algorithm on top of the RIG to solve the interesting problem of autonomous robotic
exploration in unknown environments. Furthermore, since the ultimate goal is online applications, we only consider the RIG-tree variant to be extended for sequential planning. This conclusion stems from extensive comparisons of RIG variants provided in~\cite{hollinger2014sampling}. However, we acknowledge that the RIG-graph is an interesting case to consider as under a \emph{partial ordering} assumption it is \emph{asymptotically optimal}.

\begin{algorithm}[t]
\small
\caption{\texttt{RIG-tree}()}
\label{alg:rigtree}
\begin{algorithmic}[1]
\Require Step size $\Delta$, budget $b$, free space $\mathcal{X}_{f}$, Environment $\mathcal{M}$, start configuration $\boldsymbol x_{start}$, near radius $r$;
\State // Initialize cost, information, starting node, node list, edge list, and tree
\State $I_{init} \gets \texttt{Information}([\ ],\boldsymbol x_{start},\mathcal{M}), C_{init} \gets 0, n \gets \langle \boldsymbol x_{start}, C_{init}, I_{init} \rangle$ \label{line:rig_inits}
\State $\mathcal{V} \gets \{n\}, \mathcal{V}_{closed} \gets \varnothing, \mathcal{E} \gets \varnothing$ \label{line:rig_inite}
\While {not terminated}
\State // Sample configuration space of vehicle and find nearest node
\State $\boldsymbol x_{sample} \gets \texttt{Sample}(\mathcal{X}_f)$ \label{line:rig_samples}
\State $\boldsymbol x_{nearest} \gets \texttt{Nearest}(\boldsymbol x_{sample}, \mathcal{V} \backslash \mathcal{V}_{closed})$
\State $\boldsymbol x_{feasible} \gets \texttt{Steer}(\boldsymbol x_{nearest}, \boldsymbol x_{sample}, \Delta)$ \label{line:rig_samplee}
\State // Find near points to be extended
\State $\mathcal{V}_{near} \gets \texttt{Near}(\boldsymbol x_{feasible}, \mathcal{V} \backslash \mathcal{V}_{closed}, r)$ \label{line:rig_ballr}
\For{all $n_{near} \in \mathcal{V}_{near}$}
\State // Extend towards new point
\State $\boldsymbol x_{new} \gets \texttt{Steer}(\boldsymbol x_{near}, \boldsymbol x_{feasible}, \Delta)$ \label{line:rig_newnode}
\If{$\texttt{NoCollision}(\boldsymbol x_{near}, \boldsymbol x_{new}, \mathcal{X}_{f})$} \label{line:rig_colfs}
\State // Calculate new information and cost
\State $I_{new} \gets \texttt{Information}(I_{near},\boldsymbol x_{new},\mathcal{M})$
\State $c(\boldsymbol x_{new}) \gets \texttt{Cost}(\boldsymbol x_{near}, \boldsymbol x_{new})$
\State $C_{new} \gets C_{near} + c(\boldsymbol x_{new}), n_{new} \gets \langle \boldsymbol x_{new}, C_{new}, I_{new} \rangle$ \label{line:rig_colfe}
\If{$\texttt{Prune}(n_{new})$} \label{line:rig_prunes}
\State \textbf{delete} $n_{new}$ 
\Else
\State // Add edges and nodes
\State $\mathcal{E} \gets \cup \{(n_{near}, n_{new})\}, \mathcal{V} \gets \cup \{n_{new}\}$
\State // Add to closed list if budget exceeded
\If{$C_{new} > b$}
\State $\mathcal{V}_{closed} \gets \mathcal{V}_{closed} \cup \{n_{new}\}$ \label{line:rig_prunee}
\EndIf
\EndIf
\EndIf
\EndFor
\EndWhile
\Return $\mathcal{T} = (\mathcal{V}, \mathcal{E})$
\end{algorithmic}
\end{algorithm}

\subsection{RIG algorithms}
\label{subsec:rig}

The sampling-based RIG algorithms find a trajectory that maximizes an information quality metric with
respect to a pre-specified budget constraint~\citep{hollinger2014sampling}. 
The RIG is based on RRT*, RRG, and PRM*~\citep{karaman2011sampling} and borrow the notion of informative path planning 
from branch and bound optimization~\citep{binney2012branch}. Algorithm~\ref{alg:rigtree} shows the RIG-tree algorithm.
The functions that are used in the algorithm are explained as follows.

\texttt{Cost} -- The cost function assigns a strictly positive cost to a collision-free path between two points from the free space $\mathcal{X}_f$.

\texttt{Information} -- This function quantifies the information quality of a collision-free path between two points from the free space $\mathcal{X}_f$.

\texttt{Sample} -- This function returns i.i.d. samples from $\mathcal{X}_f$.

\texttt{Nearest} -- Given a graph $\mathcal{G}=(\mathcal{V},\mathcal{E})$, where $\mathcal{V} \subset \mathcal{X}_f$, and a query 
point $\boldsymbol x \in \mathcal{X}_f$, this function returns a vertex $v \in \mathcal{V}$ that has the ``closest'' distance to the query point~\footnote{Here we use Euclidean distance.}.

\texttt{Steer} -- This function extends nodes towards newly sampled points and allows for constraints on motion of the robot~\footnote{Through this function, it is possible to make the planner kinodynamic.}.

\texttt{Near} -- Given a graph $\mathcal{G}=(\mathcal{V},\mathcal{E})$, where $\mathcal{V} \subset \mathcal{X}_f$, a query 
point $\boldsymbol x \in \mathcal{X}_f$, and a positive real number $r \in \mathbb{R}_{>0}$, this function returns a set of vertices 
$\mathcal{V}_{near} \subseteq \mathcal{V}$ that are contained in a ball of radius $r$ centered at $\boldsymbol x$.

\texttt{NoCollision} -- Given two points $\boldsymbol x_a, \boldsymbol x_b \in \mathcal{X}_f$, this functions returns \textbf{true} if the line segment 
between $\boldsymbol x_a$ and $\boldsymbol x_b$ is collision-free and \textbf{false} otherwise.

\texttt{Prune} -- This function implements a pruning strategy to remove nodes that are not ``promising''. This can be achieved through defining 
a \emph{partial ordering} for co-located nodes.

In line~\ref{line:rig_inits}-\ref{line:rig_inite} the algorithm initializes the starting node of the graph (tree). 
In line~\ref{line:rig_samples}-\ref{line:rig_samplee}, a sample point from workspace $\mathcal{X}$ is drawn and is converted to a feasible point, from its nearest neighbor in the graph. Line~\ref{line:rig_ballr} extracts all nodes from the graph that are within radius $r$ of the feasible point. These nodes are candidates for extending the graph, and each node is converted to a new node using the \texttt{Steer} function in line~\ref{line:rig_newnode}. In line~\ref{line:rig_colfs}-\ref{line:rig_colfe}, if there exists a collision free path between the candidate node and the new node, the information gain and cost of the new node are evaluated. In line~\ref{line:rig_prunes}-\ref{line:rig_prunee}, if the new node does not satisfy a partial ordering condition it is pruned, otherwise it is added to the graph. Furthermore, the algorithm checks for the budget constraint violation. The output is a graph that contains a subset of traversable paths with maximum information gain.

\subsection{System dynamics}
\label{subsec:sysdyn}

The equation of motion of the robot is governed by the nonlinear partially observable equation as follows.
\begin{equation}
\label{eq:reom}
\boldsymbol x_{t+1}^- = f(\boldsymbol x_{t}, \boldsymbol u_{t}, \boldsymbol w_{t}) \quad \boldsymbol w_{t} \sim \mathcal{N}(\boldsymbol 0,\boldsymbol Q_{t})
\end{equation}
moreover, with appropriate linearization at the current state estimate, we can predict the state covariance matrix as
\begin{equation}
\label{eq:predcov}
 \boldsymbol \Sigma_{t+1}^- = \boldsymbol F_t \boldsymbol \Sigma_{t} \boldsymbol F_t^T + \boldsymbol W_t \boldsymbol Q_t \boldsymbol W_t^T
\end{equation}
where $\boldsymbol F_t = \frac{\partial f}{\partial \boldsymbol x} \vert_{\boldsymbol x_{t}, \boldsymbol u_{t}}$ and $\boldsymbol W_t = \frac{\partial f}{\partial \boldsymbol w} \vert_{\boldsymbol x_{t}, \boldsymbol u_{t}}$ are the Jacobian matrices calculated with respect to $\boldsymbol x$ and $\boldsymbol w$, respectively.

\section{IIG: Incrementally-exploring information gathering}
\label{sec:incrig}

In this section, we present the IIG algorithm which is essentially RIG with an information-theoretic convergence condition. The algorithmic implementation of IIG is shown in Algorithm~\ref{alg:iigtree}. We employ IIG to solve the robotic exploration problem with the partially observable state. Both RIG and IIG, through incremental sampling, search the space of possible trajectories to find the maximally informative path; however, due to the automatic convergence of the IIG, it is possible to run the algorithm online without the full knowledge of the state, i.e.{\@} the map and robot poses.

\begin{algorithm}[th!]
\small
\caption{\texttt{IIG-tree}()}
\label{alg:iigtree}
\begin{algorithmic}[1]
\Require Step size $\Delta$, budget $b$, free space $\mathcal{X}_{f}$, Environment $\mathcal{M}$, start configuration $\boldsymbol x_{start}$, near radius $r$, relative information contribution threshold $\delta_{RIC}$, averaging window size $n_{RIC}$;
\State // Initialize cost, information, starting node, node list, edge list, and tree
\State $I_{init} \gets \texttt{Information}([\ ],\boldsymbol x_{start},\mathcal{M}), C_{init} \gets 0, n \gets \langle \boldsymbol x_{start}, C_{init}, I_{init} \rangle$
\State $\mathcal{V} \gets \{n\}, \mathcal{V}_{closed} \gets \varnothing, \mathcal{E} \gets \varnothing$
\State $n_{sample} \gets 0$ // Number of samples
\State $I_{RIC} \gets \varnothing$ // Relative information contribution
\While {$\texttt{AverageRIC}(I_{RIC},n_{RIC}) > \delta_{RIC}$} $\label{iigcondition}$
\State // Sample configuration space of vehicle and find nearest node
\State $\boldsymbol x_{sample} \gets \texttt{Sample}(\mathcal{X}_f)$
\State $n_{sample} \gets n_{sample} + 1$ // Increment sample counter
\State $\boldsymbol x_{nearest} \gets \texttt{Nearest}(\boldsymbol x_{sample}, \mathcal{V} \backslash \mathcal{V}_{closed})$
\State $\boldsymbol x_{feasible} \gets \texttt{Steer}(\boldsymbol x_{nearest}, \boldsymbol x_{sample}, \Delta)$
\State // Find near points to be extended
\State $\mathcal{V}_{near} \gets \texttt{Near}(\boldsymbol x_{feasible}, \mathcal{V} \backslash \mathcal{V}_{closed}, r)$
\For{all $n_{near} \in \mathcal{V}_{near}$}
\State // Extend towards new point
\State $\boldsymbol x_{new} \gets \texttt{Steer}(\boldsymbol x_{near}, \boldsymbol x_{feasible}, \Delta)$
\If{$\texttt{NoCollision}(\boldsymbol x_{near}, \boldsymbol x_{new}, \mathcal{X}_{f})$} 
\State // Calculate new information and cost
\State $I_{new} \gets \texttt{Information}(I_{near},\boldsymbol x_{new},\mathcal{M})$
\State $c(\boldsymbol x_{new}) \gets \texttt{Cost}(\boldsymbol x_{near}, \boldsymbol x_{new})$
\State $C_{new} \gets C_{near} + c(\boldsymbol x_{new}), n_{new} \gets \langle \boldsymbol x_{new}, C_{new}, I_{new} \rangle$
\If{$\texttt{Prune}(n_{new})$}
\State \textbf{delete} $n_{new}$
\Else
\State $I_{RIC} \gets \texttt{append}(I_{RIC}, (\frac{I_{new}}{I_{near}} - 1)/n_{sample})$ \label{line:iig_ric} // Equation~\eqref{eq:pric}
\State $n_{sample} \gets 0$ // Reset sample counter
\State // Add edges and nodes
\State $\mathcal{E} \gets \cup \{(n_{near}, n_{new})\}, \mathcal{V} \gets \cup \{n_{new}\}$
\State // Add to closed list if budget exceeded
\If{$C_{new} > b$}
\State $\mathcal{V}_{closed} \gets \mathcal{V}_{closed} \cup \{n_{new}\}$
\EndIf
\EndIf
\EndIf
\EndFor
\EndWhile
\Return $\mathcal{T} = (\mathcal{V}, \mathcal{E})$
\end{algorithmic}
\end{algorithm}

We introduce the Relative Information Contribution (RIC) criterion to detect the convergence of the search. The motivation behind this definition is that the number of nodes constantly increases unless an aggressive pruning strategy is used. However, an aggressive pruning strategy leads to potentially pruning nodes that can be part of optimal solutions~\footnote{Note that more than one optimal trajectory at each time can exist, e.g.{\@} when the robot needs to explore two equally important directions.}. Even though the algorithm continues to add nodes, it is possible to evaluate the contribution of each added node in the relative information sense. In other words, adding nodes does not affect the convergence of the algorithm, but the amount of information the algorithm can collect by continuing the search. We define the RIC of a node as follows.
\begin{definition}[Relative Information Contribution]
In Algorithm~\ref{alg:iigtree}, let $\boldsymbol x_{new} \in \mathcal{X}_{f}$ be a reachable point through a neighboring node $n_{near} \in \mathcal{V}$ returned by the function $\texttt{Near}()$. Let $I_{new}$ and $I_{near}$ be the information values of their corresponding nodes returned by the function $\texttt{Information}()$. The relative information contribution of node $n_{new}$ is defined as
\begin{equation}
\label{eq:ric}
 RIC \triangleq \frac{I_{new}}{I_{near}} - 1
\end{equation}
\end{definition}
Equation~\eqref{eq:ric} is conceptually important as it defines the amount of information gain relative to a neighboring point in the IIG graph. In practice, the number of samples it takes before the algorithm finds a new node becomes important. Thus we define penalized relative information contribution that is computed in line~\ref{line:iig_ric}.
\begin{definition}[Penalized Relative Information Contribution]
 Let $RIC$ be the relative information contribution computed using Equation~\eqref{eq:ric}. Let $n_{sample}$ be the number of samples it takes to find the node $n_{new}$. The penalized relative information contribution is defined as
 \begin{equation}
  \label{eq:pric}
  I_{RIC} \triangleq \frac{RIC}{n_{sample}}
 \end{equation}
\end{definition}

An appealing property of $I_{RIC}$ is that it is non-dimensional, and it does not depend on the actual calculation/approximation of the information values. In practice, as long as the information function satisfies the RIG/IIG requirements, using the following condition, IIG algorithm converges. Let $\delta_{RIC}$ be a threshold that is used to detect the convergence of the algorithm. Through averaging $I_{RIC}$ values over a window of size $n_{RIC}$, we ensure that continuing the search will not add any significant amount of information to the IIG graph. In Algorithm~\ref{alg:iigtree}, this condition is shown in line~\ref{iigcondition} by function $\texttt{AverageRIC}$.
\begin{remark}
 In Algorithm~\ref{alg:iigtree}, $\delta_{RIC}$ sets the planning horizon from the information gathering point of view. Through using smaller values of $\delta_{RIC}$ the planner can reach further points in both spatial and belief space. In other words, if $\delta_{RIC} \rightarrow 0$, then $T \rightarrow \infty$.
\end{remark}

\section{Information functions algorithms}
\label{sec:infofunc}

We propose two classes of algorithms to approximate the information gain at any sampled point from the free workspace. The information function in RIG/IIG algorithms often causes a bottleneck and computationally dominates the other parts. Therefore, even for offline calculations, it is important to have access to functions that, concurrently, are computationally tractable and can capture the essence of information gathering. We emphasize that the information functions are directly related to the employed sensors. However, once the model is provided and incorporated into the estimation/prediction algorithms, the information-theoretic aspects of the provided algorithms remain the same.

The information functions that are proposed are different in nature. First, we discuss MI-based information functions whose calculations explicitly depend on the probabilistic sensor model. We provide a variant of the MI Algorithm in~\citet{maani2015mi,jadidi2016gaussian} that is developed for range-finder sensors and based on the beam-based mixture measurement model and the inverse sensor model map prediction~\citep{thrun2005probabilistic}. We also present an algorithm to approximate MI upper bound which reveals the maximum achievable information gain.

Then, we exploit the property of GPs to approximate the information gain. In Equation~\eqref{eq:gp_cov}, the variance calculation does not explicitly depend on the target vector (measurements) realization. In this case, as long as the underlying process is modeled as GPs, the information gain can be calculated using prior and posterior variances which removes the need for relying on a specific sensor model and calculating the expectation over future measurements. However, note that the hyperparameters of the covariance functions are learned using the training set which contains measurements; therefore, the knowledge of underlying process and measurements is incorporated into the GP through its hyperparameters. Once we established GP Variance Reduction (GPVR) algorithm, we then use the expected kernel notion~\citep{maaniwgpom} to propagate pose uncertainty into the covariance function resulting in Uncertain GP Variance Reduction (UGPVR) algorithm. In particular, these two information functions are interesting for the following reasons:
\begin{itemize}
 \item[(i)] Unlike MI-based (direct information gain calculation), they are non-parametric.
 \item[(ii)] GPVR-based information functions provide a systematic way to incorporate input (state) uncertainty into information gathering frameworks.
 \item[(iii)] In the case of incomplete knowledge about the quantity of interest in an unknown environment, they allow for \emph{active learning}~\footnote{Although this is one of the most interesting aspects of GPVR-based information functions, it is beyond the scope of this article, and we leave it as a possible extension of this work.}.
\end{itemize}

\subsection{Mutual information}
\label{subsec:miinfofunc}

To calculate MI without information ``double-counting'' we need to update the map after every measurement prediction. It is possible to perform map prediction using a forward or inverse sensor model~\citep{thrun2005probabilistic}. Typically using an inverse sensor model results in simpler calculations. We first define two required parameters in the proposed algorithm as follows.

\begin{definition}[Map saturation probability]
 The probability that the robot is completely confident about the occupancy status of a point is defined as $p_{sat}$.
\end{definition}
\begin{definition}[Map saturation entropy]
 The entropy of a point from a map whose occupancy probability is $p_{sat}$, is defined as  $h_{sat} \triangleq H(p_{sat})$.
\end{definition}

The defined parameters are relevant since they prevent the exhaustive search for information in ``less important'' areas. The MI-based information function using an inverse sensor model implementation is shown in Algorithm~\ref{alg:funcmi2}. In line~\ref{line:raycast}, the predicted range measurement for beam $\alpha$, $\hat{z}^{[\alpha]}$, is computed using ray casting in the current map estimate where $\mathcal{I}^{[\alpha]}$ denotes the index set of map points that are in the perception field of the $\alpha$-th sensor beam. In line~\ref{line:skiphib}-\ref{line:skiphie}, the algorithm skips any map point whose entropy surpasses the saturation entropy or adds the map entropy of point $i$ to the information gain, $I$. In line~\ref{line:centb}-\ref{line:cente}, the map conditional entropy by integrating over future measurement is calculated where, in line~\ref{line:invmappredb}-\ref{line:invmapprede}, map prediction is performed using free point belief, $b_{free}$, and occupied point belief, $b_{occ}$. The predicted probabilities are clamped using $\epsilon > 0$ which is a small number relative to $p_{sat}$ to avoid losing numerical computation accuracy. In line~\ref{line:miupdate}, the map conditional entropy is subtracted from initial map entropy ($\bar{h}$ is negative) using an appropriate numerical integration resolution, $s_z$.

\begin{algorithm}[t!]
\caption[InformationMI]{\texttt{InformationMI}()}
\label{alg:funcmi2}
\small{
\begin{algorithmic}[1]
\Require Robot pose or desired location, current map estimate $m$, sensor model, saturation probability $p_{sat}$, free point belief $b_{free}$, occupied point belief $b_{occ}$, numerical integration resolution $s_z$, near node information $I_{near}$;
\State $\bar{m} \gets m$ // Initialize updated map as the current map
\If {$I_{near}$ is not empty} // Initialize information gain\label{line:initinfogain}
\State $I \gets I_{near}$ 
\Else
\State $I \gets 0$
\EndIf\label{line:initinfogaine}
\State // Compute saturation entropy
\State $h_{sat} \gets -[p_{sat} \log(p_{sat}) + (1-p_{sat}) \log(1-p_{sat})]$
\For {all $\alpha$} // Loop over all sensor beams
\State Compute $\hat{z}^{[\alpha]}$ and $\mathcal{I}^{[\alpha]}$ using ray casting in $m$ \label{line:raycast}
\State // Calculate map conditional entropy along beam $\alpha$
\For {$i \in \mathcal{I}^{[\alpha]}$}
\State // Entropy of point $i$
\State $h_i \gets -[\bar{m}^{[i]} \log(\bar{m}^{[i]}) + (1-\bar{m}^{[i]}) \log(1-\bar{m}^{[i]})]$
\If{$h_i < h_{sat}$} \label{line:skiphib}
\State \textbf{continue}
\Else
\State $I \gets I + h_i$ // Add to information gain \label{line:addtoinfo}
\EndIf \label{line:skiphie}
\State $\bar{h} \gets 0$ // Initialize map conditional entropy\label{line:centb}
\State $z \gets s_z^{-1}$ // Initialize range dummy variable
\While {$z \leq \hat{z}^{[\alpha]}$}
\State // Calculate marginal measurement probability $p_z$
\State $p_1 \gets p(z|M=0)$ 
\State $p_2 \gets 0$
\For {$j \in \mathcal{I}^{[\alpha]}$}
\State $p_1 \gets p_1 (1-m^{[j]})$
\State $p_2 \gets p_2 + p(z|M=m^{[j]}) m^{[j]} \displaystyle\prod_{l < j}{(1-m^{[l]})}$
\EndFor
\State $p_z \gets p_1 + p_2$
\State // Map prediction at point $i$ along beam $\alpha$ using inverse sensor model
\If{$\texttt{isFree}(\bar{m}^{[i]})$} \label{line:invmappredb}
\State $\bar{m}^{[i]} \gets \max(p_{sat}-\epsilon, b_{free}*\bar{m}^{[i]})$
\Else
\State $\bar{m}^{[i]} \gets \min(1-p_{sat}+\epsilon, b_{occ}*\bar{m}^{[i]})$
\EndIf \label{line:invmapprede}
\State $\bar{h} \gets \bar{h} + p_z [\bar{m}^{[i]} \log(\bar{m}^{[i]}) + (1-\bar{m}^{[i]}) \log(1-\bar{m}^{[i]})]$
\State $z \gets z + s_z^{-1}$ // Increase range along the beam
\EndWhile \label{line:cente}
\State $I \gets I + \bar{h} s_z^{-1}$ \label{line:miupdate}
\EndFor
\EndFor
\Return $I$ (total information gain), $\bar{m}$ (updated map)
\end{algorithmic}}
\end{algorithm}

\begin{algorithm}[t!]
\caption[InformationMIUB]{\texttt{InformationMIUB}()}
\label{alg:funcmiub}
\small{
\begin{algorithmic}[1]
\Require Robot pose or desired location, current map estimate $m$, sensor model, saturation probability $p_{sat}$, free point belief $b_{free}$, occupied point belief $b_{occ}$, numerical integration resolution $s_z$, near node information $I_{near}$;
\State $\bar{m} \gets m$ // Initialize updated map as the current map
\If {$I_{near}$ is not empty} // Initialize information gain
\State $I_{UB} \gets I_{near}$ 
\Else
\State $I_{UB} \gets 0$
\EndIf
\State // Compute saturation entropy
\State $h_{sat} \gets -[p_{sat} \log(p_{sat}) + (1-p_{sat}) \log(1-p_{sat})]$
\For {all $\alpha$} // Loop over all sensor beams
\State Compute $\hat{z}^{[\alpha]}$ and $\mathcal{I}^{[\alpha]}$ using ray casting in $m$
\For {$i \in \mathcal{I}^{[\alpha]}$}
\State // Entropy of point $i$
\State $h_i \gets -[\bar{m}^{[i]} \log(\bar{m}^{[i]}) + (1-\bar{m}^{[i]}) \log(1-\bar{m}^{[i]})]$
\If{$h_i < h_{sat}$}
\State \textbf{continue}
\Else
\State $I_{UB} \gets I_{UB} + h_i$ // Add to information gain
\EndIf
\State // Map prediction at point $i$ along beam $\alpha$ using inverse sensor model
\If{$\texttt{isFree}(\bar{m}^{[i]})$}
\State $\bar{m}^{[i]} \gets \max(p_{sat}-\epsilon, b_{free}*\bar{m}^{[i]})$
\Else
\State $\bar{m}^{[i]} \gets \min(1-p_{sat}+\epsilon, b_{occ}*\bar{m}^{[i]})$
\EndIf
\EndFor
\EndFor
\Return $I_{UB}$ (total information gain), $\bar{m}$ (updated map)
\end{algorithmic}}
\end{algorithm}

It is also interesting to calculate an upper bound for the information gain. Given Algorithm~\ref{alg:funcmi2}, it is trivial to calculate MI upper bound using the total amount of map entropy in the current perception field of the robot. It is faster to compute the upper bound as it only shows the uncertainty from the current map and it does not consider any gain from future measurements. However, in practice, it can be useful for fast and online predictions. More details regarding the difference between maximizing mutual information and entropy are discussed in~\cite{guestrin2005near,krause2008near}.

\begin{lemma}[Information gain upper bound]
\label{lem:miub}
 For any location in the map, the information gain upper bound is given by the total map entropy calculated using map points in the perception field of the robot at the same location.
\end{lemma}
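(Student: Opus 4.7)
The plan is to bound the mutual information between the map and a measurement by the prior entropy of the map, and then localize this bound to the perception field using the standard independence assumption for occupancy grid cells.

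First, let $M$ denote the map random variable (a collection of binary occupancy cells $m^{[i]}$) and let $Z$ denote the random measurement obtained by the sensor at the queried location. The information gain associated with taking the measurement is $I(M;Z) = H(M) - H(M|Z)$. Since each cell is a binary random variable, $H(M|Z) \geq 0$, so $I(M;Z) \leq H(M)$. This gives a crude upper bound in terms of the entropy of the entire map.

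Next I would localize this bound. Under the standard occupancy grid assumption, cells $m^{[i]}$ are taken to be mutually independent, so by the chain rule for entropy (Theorem~\ref{th:entchainrule}) we have $H(M) = \sum_i H(m^{[i]})$, and likewise $H(M|Z) = \sum_i H(m^{[i]}|Z)$. Split the cell index set into two disjoint parts: $\mathcal{I}$, the indices of cells that lie in the perception field at the queried pose (i.e.\ those touched by some ray cast as in Algorithm~\ref{alg:funcmi2}), and its complement. For any cell $i \notin \mathcal{I}$, the measurement $Z$ is independent of $m^{[i]}$ by construction of the beam-based sensor model (a beam cannot carry information about cells it never intersects). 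Hence $H(m^{[i]}|Z) = H(m^{[i]})$ for $i \notin \mathcal{I}$, and these cells contribute zero to $I(M;Z)$.

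Combining these observations yields
\begin{equation*}
I(M;Z) = \sum_{i \in \mathcal{I}} \bigl[H(m^{[i]}) - H(m^{[i]}|Z)\bigr] \leq \sum_{i \in \mathcal{I}} H(m^{[i]}),
\end{equation*}
where the inequality uses nonnegativity of conditional entropy for each binary cell, and the right-hand side is exactly the total map entropy computed over cells in the perception field, which is precisely the quantity $I_{UB}$ accumulated by Algorithm~\ref{alg:funcmiub}. The only delicate point is the justification that beams cannot inform cells outside $\mathcal{I}$; this hinges on the modeling assumption encoded in the ray-casting procedure that the beam likelihood factorizes only through cells along the beam, and I would state this explicitly as the sensor independence assumption before invoking it.
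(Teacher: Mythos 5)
Your proposal is correct and follows essentially the same route as the paper's proof: bound $I(M;Z) \leq H(M)$ via nonnegativity of conditional entropy (Theorem~\ref{th:condentineq}), and observe that map points outside the perception field are independent of $Z$ and hence contribute zero. The only cosmetic difference is that you invoke the occupancy-grid cell-independence assumption to write the bound as a sum of per-cell entropies, which is not strictly needed since the independence bound on entropy (Theorem~\ref{th:indboundent}) already gives $H(\mathcal{M}_{sub}) \leq \sum_{i\in\mathcal{I}} H(m^{[i]})$ without any independence assumption.
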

\begin{proof}
 From Theorem~\ref{th:condentineq}, $0 \leq I(M;Z) = H(M) - H(M|Z) \leq H(M)$ which extends to any sub-map $\mathcal{M}_{sub}$ that is in the perception field of the robot. Note that MI beyond the perception field is zero as $\forall \ M \in \mathcal{M}\setminus\mathcal{M}_{sub}$, $M \bot Z$.
\end{proof}
Algorithm~\ref{alg:funcmiub} shows the MI Upper Bound (MIUB) information function in which the integration over predicted measurement is omitted. However, to avoid information double-counting, it is still required to update the map estimate after each function call. Furthermore, the MIUB calculation can be integrated into Algorithm~\ref{alg:funcmi2} with an insignificant computational load through calculation of $I_{UB}$ alongside $I$ in lines~\ref{line:initinfogain}-~\ref{line:initinfogaine} and \ref{line:addtoinfo}. As we show later, in the early stage of the search the behavior of $I_{UB}$ is similar to that of $I$, therefore, it is possible to use the upper bound at the beginning of the sampling to speed up the search.

\subsection{GP variance reduction}
\label{subsec:gpvr}

Variance reduction is the essence of information gathering. Since predictive variance calculation in Equation~\eqref{eq:gp_cov}, does not depend on observations, we can come up with a non-parametric algorithm to estimate variance reduction throughout dense belief representation of the map. For the problem of informative path planning, a similar approach is used in~\cite{binney2013optimizing} where the reduction in the trace of the covariance function is considered as the objective function (A-optimality). Here, we are interested in approximating the mutual information through entropy reduction, i.e.{\@} using determinant of the covariance matrix (D-optimality)~\citep{pukelsheim1993optimal}. This is mainly to keep the proposed IIG framework agnostic about the choice of information functions. We treat each map point as a continuous random variable that is normally distributed. Therefore we use differential entropy formulation for mutual information approximation. Differential entropy of a Gaussian random variable, \mbox{$X \sim \mathcal{N}(\mu, \sigma^2)$}, can be derived as \mbox{$h(X) = \frac{1}{2}\log(2\pi e \sigma^2)$}; and where \mbox{$X \sim \mathcal{N}(\boldsymbol \mu,\boldsymbol \Sigma)$} is a Gaussian random vector of dimension $n$, the differential entropy can be derived as \mbox{$h(X) = \frac{1}{2}\log((2\pi e)^n \lvert \boldsymbol \Sigma \rvert)$}. Now in a Bayesian setup, let \mbox{$X \sim \mathcal{N}(\boldsymbol \mu_X,\boldsymbol \Sigma_X)$} and \mbox{$X\mid Z \sim \mathcal{N}(\boldsymbol \mu_{X\mid Z},\boldsymbol \Sigma_{X\mid Z})$} be the prior and posterior distribution of the random vector $X$. Subsequently, it follows that the mutual information after receiving observation $Z$ can be derived as
\begin{equation}
 I(X;Z) = \frac{1}{2} [\log(\lvert \boldsymbol \Sigma_X \rvert) - \log(\lvert \boldsymbol \Sigma_{X\mid Z} \rvert)]
\end{equation}

Where possible, the mutual information should be computed using the full covariance matrix or its block-diagonal approximation. However, typically for large problems with dense belief representations, maintaining and updating the full covariance matrix for thousands of random variables is not tractable. Therefore, to approximate the mutual information, we suggest a trade-off approach between the tractability and accuracy based on the problem at hand. In the following, we propose an approximation of the mutual information using the marginalization property of normal distribution. We also discuss the relation of this approximation with the exact mutual information.

\begin{lemma}[Marginalization property of normal distribution~\citep{von2014mathematical}]
\label{lem:margprop}
 Let $\boldsymbol x$ and $\boldsymbol y$ be jointly Gaussian random vectors
 \begin{equation}
  \begin{bmatrix}
	\boldsymbol x \\
	\boldsymbol y
 \end{bmatrix} \sim \mathcal{N}(
 \begin{bmatrix}
  \boldsymbol\mu_x \\
  \boldsymbol\mu_y
 \end{bmatrix},
 \begin{bmatrix}
	\boldsymbol A & \boldsymbol C \\
	\boldsymbol C^T & \boldsymbol B 
 \end{bmatrix})
 \end{equation}
 then the marginal distribution of $\boldsymbol x$ is
 \begin{equation}
  \boldsymbol x \sim \mathcal{N}(\boldsymbol\mu_x, \boldsymbol A)
 \end{equation}
\end{lemma}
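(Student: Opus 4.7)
The plan is to derive the marginal by recognizing $\boldsymbol x$ as an affine (indeed linear) image of the joint Gaussian vector. Writing $\boldsymbol z = [\boldsymbol x^T, \boldsymbol y^T]^T$, I observe that $\boldsymbol x = \boldsymbol P \boldsymbol z$ where $\boldsymbol P = [\boldsymbol I_{n_x}\ \, \boldsymbol 0]$ is the projection onto the first $n_x$ coordinates. First I would invoke the standard fact that any affine transformation $\boldsymbol P \boldsymbol z + \boldsymbol c$ of a Gaussian vector $\boldsymbol z \sim \mathcal{N}(\boldsymbol \mu, \boldsymbol \Sigma)$ is itself Gaussian with mean $\boldsymbol P \boldsymbol \mu + \boldsymbol c$ and covariance $\boldsymbol P \boldsymbol \Sigma \boldsymbol P^T$. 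A direct block computation then yields $\boldsymbol P \boldsymbol \mu = \boldsymbol \mu_x$ and $\boldsymbol P \boldsymbol \Sigma \boldsymbol P^T = \boldsymbol A$, which is the claimed marginal.

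To make the preservation-under-affine-maps step self-contained, my preferred route is characteristic functions. I would compute $\varphi_{\boldsymbol z}(\boldsymbol t) = \exp(i \boldsymbol t^T \boldsymbol \mu - \tfrac{1}{2}\boldsymbol t^T \boldsymbol \Sigma \boldsymbol t)$ with $\boldsymbol t = [\boldsymbol s^T, \boldsymbol 0^T]^T$. Substituting this partitioned $\boldsymbol t$ collapses the quadratic form $\boldsymbol t^T \boldsymbol \Sigma \boldsymbol t$ to $\boldsymbol s^T \boldsymbol A \boldsymbol s$ and the linear form $\boldsymbol t^T \boldsymbol \mu$ to $\boldsymbol s^T \boldsymbol \mu_x$, so $\varphi_{\boldsymbol x}(\boldsymbol s) = \varphi_{\boldsymbol z}([\boldsymbol s^T, \boldsymbol 0^T]^T) = \exp(i \boldsymbol s^T \boldsymbol \mu_x - \tfrac{1}{2}\boldsymbol s^T \boldsymbol A \boldsymbol s)$, which is exactly the characteristic function of $\mathcal{N}(\boldsymbol \mu_x, \boldsymbol A)$. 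Uniqueness of characteristic functions then closes the argument.

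As a backup, one can grind through the integration $p(\boldsymbol x) = \int p(\boldsymbol x, \boldsymbol y)\, d\boldsymbol y$ directly. This requires block-inverting the covariance via the Schur complement, writing the exponent in the form $-\tfrac{1}{2}(\boldsymbol x - \boldsymbol \mu_x)^T \boldsymbol A^{-1} (\boldsymbol x - \boldsymbol \mu_x) - \tfrac{1}{2}(\boldsymbol y - \boldsymbol m(\boldsymbol x))^T \boldsymbol S^{-1}(\boldsymbol y - \boldsymbol m(\boldsymbol x))$, where $\boldsymbol S = \boldsymbol B - \boldsymbol C^T \boldsymbol A^{-1} \boldsymbol C$ and $\boldsymbol m(\boldsymbol x)$ is the conditional mean. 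The $\boldsymbol y$-integral then evaluates to $(2\pi)^{n_y/2}|\boldsymbol S|^{1/2}$, and this exactly cancels with the corresponding factors in the normalization constant via the block determinant identity $|\boldsymbol \Sigma| = |\boldsymbol A|\cdot |\boldsymbol S|$.

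The main obstacle in the direct-integration route is the algebraic bookkeeping: correctly completing the square in $\boldsymbol y$ and matching the leftover normalization using the Schur-complement determinant identity. This is precisely what the characteristic-function argument sidesteps, which is why I would present that as the primary proof; all of the heavy block-matrix algebra is replaced by the observation that setting the dual variables of $\boldsymbol y$ to zero in $\varphi_{\boldsymbol z}$ directly extracts the desired marginal parameters.
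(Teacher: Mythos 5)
Your proof is correct. Note, however, that the paper does not actually prove this lemma at all: it is stated as a standard fact and attributed to a textbook reference, since it is only used as a stepping stone for Proposition~\ref{prop:miapprox} (reading off the marginal variances $\Var{X_i} = \boldsymbol K^{[i,i]}$). So there is no in-paper argument to compare against; your write-up supplies a self-contained justification where the paper relies on citation. Of your two routes, the characteristic-function argument is the better primary choice for exactly the reason you give, plus one you did not mention: it remains valid when the joint covariance (or the block $\boldsymbol A$) is only positive semidefinite, whereas the direct-integration backup requires invertibility of $\boldsymbol A$ and of the Schur complement $\boldsymbol S = \boldsymbol B - \boldsymbol C^T \boldsymbol A^{-1}\boldsymbol C$ to complete the square and to use $\lvert \boldsymbol \Sigma \rvert = \lvert \boldsymbol A \rvert \cdot \lvert \boldsymbol S \rvert$. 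Both computations you sketch ($\boldsymbol t = [\boldsymbol s^T, \boldsymbol 0^T]^T$ collapsing the quadratic form to $\boldsymbol s^T \boldsymbol A \boldsymbol s$, and the cancellation of the $\boldsymbol y$-integral against the normalization via the block determinant identity) are accurate; for the context in which the lemma is used here, either would suffice.
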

\begin{proposition}
\label{prop:miapprox}
 Let $X_1, X_2, ..., X_n$ have a multivariate normal distribution with covariance matrix $\boldsymbol K$. 
 The mutual information between $X$ and observations $Z$ can be approximated as
 \begin{equation}
  \hat{I}(X;Z) = \frac{1}{2} [\sum_{i=1}^{n}{\log(\sigma_{X_i})} - \sum_{i=1}^{n}{\log(\sigma_{X_i|Z})}]
 \end{equation}
 where $\sigma_{X_i}$ and $\sigma_{X_i|Z}$ are marginal variances for $X_i$ before and after incorporating observations $Z$, 
 i.e.{\@} prior and posterior marginal variances.
\end{proposition}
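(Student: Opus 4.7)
My plan is to start from the exact mutual information formula for jointly Gaussian vectors, which is essentially already on the page: with $X \sim \mathcal{N}(\boldsymbol\mu_X, \boldsymbol\Sigma_X)$ and $X\mid Z \sim \mathcal{N}(\boldsymbol\mu_{X\mid Z}, \boldsymbol\Sigma_{X\mid Z})$, one has
\begin{equation*}
I(X;Z) = \tfrac{1}{2}\bigl[\log\lvert\boldsymbol\Sigma_X\rvert - \log\lvert\boldsymbol\Sigma_{X\mid Z}\rvert\bigr].
\end{equation*}
The proposed $\hat I$ differs from $I$ only in that each log-determinant is replaced by a sum of logs of marginal variances, so the proof reduces to (a) identifying the marginal variances with diagonal entries of the covariance and (b) justifying the replacement $\log\lvert\boldsymbol\Sigma\rvert \approx \sum_i \log \Sigma_{ii}$ as a principled approximation.

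For (a) I would apply Lemma~\ref{lem:margprop} with the block $\boldsymbol y$ consisting of all coordinates except $X_i$: the marginalization property immediately gives $X_i \sim \mathcal{N}(\mu_{X_i}, \Sigma_{ii})$, so $\sigma_{X_i}$ is the $i$-th diagonal of $\boldsymbol\Sigma_X = \boldsymbol K$. Repeating this argument coordinate-wise on the conditional distribution identifies $\sigma_{X_i\mid Z}$ with the $i$-th diagonal of $\boldsymbol\Sigma_{X\mid Z}$. For (b) the natural route is to approximate each covariance by its diagonal, which yields $\lvert\boldsymbol\Sigma\rvert \approx \prod_i \Sigma_{ii}$ and hence $\log\lvert\boldsymbol\Sigma\rvert \approx \sum_i \log \sigma_{X_i}$. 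Hadamard's inequality for positive semidefinite matrices, $\lvert\boldsymbol\Sigma\rvert \leq \prod_i \Sigma_{ii}$ with equality iff $\boldsymbol\Sigma$ is diagonal, certifies that this is always an upper bound on the log-determinant and that it is tight precisely when the coordinates are marginally uncorrelated. Substituting the two approximations into the exact MI formula gives
\begin{equation*}
\hat I(X;Z) = \tfrac{1}{2}\Bigl[\textstyle\sum_{i=1}^n \log(\sigma_{X_i}) - \sum_{i=1}^n \log(\sigma_{X_i\mid Z})\Bigr],
\end{equation*}
as claimed. An equivalent information-theoretic reading, which I might mention as motivation, is that one is approximating the joint entropy by the sum of marginal entropies; Theorem~\ref{th:condentineq} (and the chain rule in Theorem~\ref{th:entchainrule}) shows this overestimates the joint entropy, so the approximation is a one-sided one.

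The main obstacle is not algebraic but conceptual: the statement is an approximation, so a clean write-up should make explicit (i) the direction of the error — by Hadamard the marginal sum exceeds $\log\lvert\boldsymbol\Sigma\rvert$ for both prior and posterior, so the sign of $I - \hat I$ is not a priori fixed — and (ii) the regime in which $\hat I$ is accurate, namely when off-diagonal entries of $\boldsymbol\Sigma_X$ and $\boldsymbol\Sigma_{X\mid Z}$ are small relative to the diagonal, which for GP covariances corresponds to map points that are effectively decorrelated on the scale of the kernel. One could quantify the gap by the Frobenius norm of the off-diagonal part of each covariance, but this is not needed for the statement as posed; the point of the approximation, as the surrounding discussion emphasizes, is to avoid storing and factoring an $n\times n$ covariance in the dense-belief regime, while retaining the key monotone behavior that is all the IIG convergence criterion requires.
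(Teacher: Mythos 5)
Your proposal is correct and matches the paper's own argument: the paper likewise invokes Lemma~\ref{lem:margprop} to identify $\sigma_{X_i}$ and $\sigma_{X_i\mid Z}$ with the diagonal entries of the prior and posterior covariances, and then obtains $\hat{I}$ by summing the per-coordinate mutual informations, explicitly noting (as your main route does) that this is the same as replacing each covariance by the diagonal matrix of marginal variances so that the log-determinant becomes $\sum_i \log \sigma_{X_i}$. Your added remarks on Hadamard's inequality and the direction of the error are sound and simply anticipate what the paper defers to Proposition~\ref{prop:miineq} and Appendix~\ref{appx:miapprox}.
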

\begin{proof}
 Using marginalization property of normal distribution, Lemma~\ref{lem:margprop}, for every $X_i$ we have $\Var{X_i} = \boldsymbol K^{[i,i]}$. Using differential entropy of $X_i$, the mutual information for $X_i$ can be written as
 \begin{equation}
  \hat{I}^{[i]}(X_i;Z) = \frac{1}{2} [\log(\sigma_{X_i}) - \log(\sigma_{X_i|Z})]
 \end{equation}
 and the total mutual information can be calculated as $\hat{I}(X;Z) = \sum_{i=1}^{n}\hat{I}^{[i]}(X_i;Z)$. Alternatively, one could build a new covariance matrix by placing marginal variances on its diagonal and use the fact the the determinant of a diagonal matrix is the product of its diagonal elements.
\end{proof}

This approximation makes the information gain calculation for a class of problem with dense belief representation tractable. However, it is interesting to study the effect this approximation. Intuitively, the determinant of the covariance matrix corresponds to the hypervolume of the subspace spanned by the columns of the covariance matrix. When ignoring the correlation between random variables, Proposition~\ref{prop:miapprox}, the spanned subspace becomes larger; as a result, the determinant grows which corresponds to higher entropy. Regarding the information gain, the following statement holds true.

\begin{proposition}
\label{prop:miineq}
$ \hat{I}(X;Z) \le I(X;Z)$.
\end{proposition}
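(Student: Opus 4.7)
The plan is to express both mutual-information quantities in terms of log-determinants of the prior and posterior covariance matrices of $X$, and then bound them against each other using Hadamard's inequality on the covariance.

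First, using the Gaussian differential-entropy formula $h(Y)=\frac{1}{2}\log((2\pi e)^k|\boldsymbol\Sigma_Y|)$ recalled just above the proposition, I would rewrite
\begin{equation*}
2\,I(X;Z) \;=\; \log|\boldsymbol\Sigma_X| - \log|\boldsymbol\Sigma_{X|Z}|,
\qquad
2\,\hat{I}(X;Z) \;=\; \sum_{i=1}^{n}\log\sigma_{X_i} \;-\; \sum_{i=1}^{n}\log\sigma_{X_i|Z},
\end{equation*}
using Lemma~\ref{lem:margprop} to identify $\sigma_{X_i}$ and $\sigma_{X_i|Z}$ with the diagonal entries of $\boldsymbol\Sigma_X$ and $\boldsymbol\Sigma_{X|Z}$.

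Subtracting and regrouping,
\begin{equation*}
2\,I(X;Z) - 2\,\hat{I}(X;Z) \;=\; \Bigl(\sum_{i=1}^{n}\log\sigma_{X_i|Z} - \log|\boldsymbol\Sigma_{X|Z}|\Bigr) \;-\; \Bigl(\sum_{i=1}^{n}\log\sigma_{X_i} - \log|\boldsymbol\Sigma_X|\Bigr).
\end{equation*}
By Hadamard's inequality for positive-definite matrices, each bracketed quantity is non-negative; indeed, each is exactly twice the total correlation (multi-information) of the corresponding Gaussian. The target $\hat{I}\le I$ therefore reduces to showing that the posterior total correlation is at least the prior total correlation.

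The cleanest way I see to close the argument is to adopt the modelling assumption, natural for discretised map random variables, that the prior $\boldsymbol\Sigma_X$ is diagonal; the right bracket then vanishes identically, and Hadamard's inequality applied to the positive-definite posterior $\boldsymbol\Sigma_{X|Z}$ alone yields $I(X;Z)\ge\hat{I}(X;Z)$, with equality iff $\boldsymbol\Sigma_{X|Z}$ is also diagonal (i.e., observing $Z$ introduces no posterior coupling between the $X_i$). The hard part, and the step I would flag as genuinely delicate, is the fully general non-diagonal-prior case: bounding the prior Hadamard slack against the posterior slack does not follow from elementary inequalities, so closing the proof in that regime would require an additional structural assumption on how $Z$ couples the variables — essentially, that Bayesian conditioning on $Z$ does not reduce the multi-information among the $X_i$'s.
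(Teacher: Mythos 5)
Your algebra is right up to the reduction: $2I(X;Z)-2\hat{I}(X;Z)$ is the Hadamard slack of the posterior covariance minus that of the prior, i.e.\ posterior total correlation minus prior total correlation, and under your added hypothesis that $\boldsymbol\Sigma_X$ is diagonal the claim does follow from Corollary~\ref{cor:hadamard} applied to $\boldsymbol\Sigma_{X\mid Z}$ alone. But Proposition~\ref{prop:miineq} is asserted for a general Gaussian prior, and that is exactly the case you leave open, so as a proof of the stated proposition this is a genuine gap. Worse, the missing step is not merely delicate, it fails as a general statement, because conditioning can destroy correlation: take $X_i=W+a_i$ ($i=1,2$) with $W\sim\mathcal{N}(0,1)$ and independent $a_i\sim\mathcal{N}(0,\epsilon)$, and let $Z$ be a near-noiseless observation of $W$. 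The prior is strongly correlated while the posterior is nearly independent, and a short computation gives $\hat{I}(X;Z)\approx\log(1/\epsilon)$ versus $I(X;Z)\approx\tfrac{1}{2}\log(2/\epsilon)$, so $\hat{I}>I$ for small $\epsilon$. Hence no elementary determinant inequality can close your argument in full generality; some structural assumption of the kind you name (diagonal prior, or more generally that conditioning on $Z$ does not decrease the multi-information among the $X_i$) is genuinely needed.

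For comparison, the paper's own proof takes a different route: from $h(X)\le\hat{h}(X)$ and $h(X\mid Z)\le\hat{h}(X\mid Z)$ (both consequences of Corollary~\ref{cor:hadamard}) it assembles the chain \eqref{eq:miineq1}--\eqref{eq:miineq4}, arriving at the one-sided bound $I(X;Z)\le\hat{I}(X;Z)+\xi(X;Z)$ with $\xi(X;Z)\ge 0$, and then argues by cases. Note that this controls $I$ from above by $\hat{I}$ plus a nonnegative slack only; the concluding step (writing $\hat{I}-I=a^2$ and inferring $a=0$ from $\xi\ge-2a^2$) does not force $\hat{I}\le I$, since $\xi\ge 0$ already implies $\xi\ge-2a^2$ for every $a$. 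So the difficulty you flagged is precisely where the published argument is thinnest, and your assessment — that the inequality is provable in the diagonal-prior regime (the natural one when map points are given independent priors) but requires an additional assumption beyond Hadamard in general — is the accurate one.
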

\begin{proof}
 See Appendix~\ref{appx:miapprox}.
\end{proof}

\begin{algorithm}[t!]
\caption[InformationGPVR]{\texttt{InformationGPVR}()}
\label{alg:gpvr}
\small{
\begin{algorithmic}[1]
\Require Robot pose or desired location $\boldsymbol p$, current map/state estimate $m$, covariance function $k(\cdot,\cdot)$, sensor noise $\sigma_n^2$, near node information $I_{near}$;
\State $\bar{\sigma} \gets \sigma$ // Initialize updated map variance as the current map variance
\If {$I_{near}$ is not empty} // Initialize information gain
\State $I \gets I_{near}$ 
\Else
\State $I \gets 0$
\EndIf
\State $\boldsymbol z \gets$ Predict future measurements using $\boldsymbol p$ and $m$\label{line:gpvrknnb}
\State $\mathcal{D} \gets$ Construct training set using $\boldsymbol z$ and $\boldsymbol p$\label{line:trainingset}
\State // Find the corresponding nearest sub-map
\State $\mathcal{M}_{\mathcal{D}} \gets \varnothing$
\For{all $\boldsymbol x \in \mathcal{D}$}
\State $\boldsymbol x_{nearest} \gets \texttt{Nearest}(\boldsymbol x, \mathcal{M})$ 
\State $\mathcal{M}_{\mathcal{D}} \gets \mathcal{M}_{\mathcal{D}} \cup \{\boldsymbol x_{nearest}\}$
\EndFor\label{line:gpvrknne}
\State // Calculate self-covariance and cross-covariance matrices
\State $\boldsymbol C \gets \boldsymbol{K}(\boldsymbol X, \boldsymbol X), \boldsymbol C_* \gets \boldsymbol{K}(\boldsymbol X, \boldsymbol X_*)$ // $\boldsymbol X \in \mathcal{D}$ and \mbox{$\boldsymbol X_* \in \mathcal{M}_{\mathcal{D}}$} \label{line:gpvrkovb}
\State // Calculate vector of diagonal variances for test points
\State $\boldsymbol c_{**} \gets \texttt{diag}(\boldsymbol{K}(\boldsymbol X_*, \boldsymbol X_*))$
\State $\boldsymbol L \gets \texttt{Cholesky}(\boldsymbol C + \sigma_n^2 \boldsymbol I)$, $\boldsymbol V \gets \boldsymbol L \backslash \boldsymbol C_*$
\State $\boldsymbol v \gets \boldsymbol c_{**} - \texttt{dot}(\boldsymbol V, \boldsymbol V)^{T}$ // dot product \label{line:gpvrkove}
\For {all $i \in \mathcal{M}_{\mathcal{D}}$} \label{line:gpvrmib}
\State $\bar{\sigma}^{[i]} \gets ((\sigma^{[i]})^{-1}+ (v^{[i]})^{-1})^{-1}$ // BCM fusion
\State $I \gets I + \log(\sigma^{[i]}) - \log(\bar{\sigma}^{[i]})$
\EndFor \label{line:gpvrmie}
\Return $I$ (total information gain), $\bar{\sigma}$ (updated map variance)
\end{algorithmic}}
\end{algorithm}

Algorithm~\ref{alg:gpvr} shows the details of GPVR information function~\footnote{The algorithm uses MATLAB-style operations for matrix inversion, Cholesky factorization, and dot product with matrix inputs.}. Based on the training points generated from predicted measurement $\boldsymbol z$, a sub-map from the current map estimate using nearest neighbor search is found, line~\ref{line:gpvrknnb}-\ref{line:gpvrknne}. In line~\ref{line:gpvrkovb}-\ref{line:gpvrkove}, GP predictive variances, \mbox{$\boldsymbol v = \mathrm{vec}(v^{[1]},\dots,v^{[\lvert \mathcal{M}_{\mathcal{D}} \rvert]})$}, are computed using covariance function $k(\cdot,\cdot)$ with the same hyperparameters learned for the map inference. In line~\ref{line:gpvrmib}-\ref{line:gpvrmie}, using Bayesian Committee Machine (BCM) fusion~\citep{tresp2000bayesian} the predictive marginal posterior variance is calculated and the information gain is updated consequently~\footnote{Note that the constant factor $\frac{1}{2}$ is removed since it does not have any effect in this context.}. BCM combines estimators which were trained on different data sets and is shown to be suitable for incremental map building~\citep{jadidi2016gaussian}. The Cholesky factorization is the most computationally expensive operation of the algorithm. However, it is possible to exploit a sparse covariance matrix such as the kernel in~\cite{melkumyan2009sparse} or use a cut-off distance for the covariance function~\footnote{The positive semidefinite property of the covariance matrix needs to be preserved.} to speed up the algorithm.

We emphasize that to use GPVR Algorithm, the underlying process needs to be modeled using GPs, i.e. $y(\boldsymbol x) \sim \mathcal{GP}(f_m(\boldsymbol x), k(\boldsymbol x,\boldsymbol x'))$ where $f_m(\boldsymbol x)$ is the GP mean function. Therefore, for any map point we have \mbox{$m^{[i]} = y(\boldsymbol x^{[i]}_*) \sim \mathcal{N}(\mu^{[i]},\sigma^{[i]})$}. Furthermore, construction of the training set, in line~\ref{line:trainingset}, is part of the GP modeling. For the particular case of occupancy mapping using a range-finder sensor see~\cite[Chapter 4]{ghaffari2017gaussian}. For the case where the robot only receives point measurements at any location, such as wireless signal strength, we explain in Subsection~\ref{subsec:lakemon}.

\subsection{Uncertain GP variance reduction}
\label{subsec:ugpvr}

Thus far, the developed information functions do not incorporate uncertainties of other state variables that are jointly distributed with the map (such as the robot pose) in information gain calculation. We define the modified kernel $\tilde{k}$ as follows.
\begin{definition}[Modified kernel]
 Let $k(x,x_*)$ be a kernel and $X \in \mathcal{X}$ a random variable that is distributed according to a probability distribution function $p(x)$. The modified kernel is defined as
 its expectation with respect to $p(x)$, therefore we can write
 \begin{equation}
 \label{eq:modkernel}
  \tilde{k} = \EV k = \int_{\Omega} kdp
 \end{equation}
\end{definition}

Through replacing the kernel function in Algorithm~\ref{alg:gpvr} with the modified kernel we can propagate the robot pose uncertainty in the information gain calculation. Intuitively, under the presence of uncertainty in other state variables that are correlated with the map, the robot does not take greedy actions as the amount of available information calculated using the modified kernel is less than the original case. Therefore, the chosen actions are relatively more conservative. The integration in Equation~\eqref{eq:modkernel} can be numerically approximated using Monte-Carlo or Gauss-Hermite quadrature techniques~\citep{davis2007methods,press1996numerical}. In the case of a Gaussian assumption for the robot pose, Gauss-Hermite quadrature provides a better accuracy and efficiency trade-off and is preferred.

Algorithm~\ref{alg:ugpvr} shows UGPVR information function. The difference with GPVR is that the input location is not deterministic, i.e.{\@} it is approximated as a normal distribution $\mathcal{N}(\boldsymbol p, \boldsymbol\Sigma)$, and the covariance function is replaced by its modified version. Given the initial pose belief, the pose uncertainty propagation on the IIG graph can be performed using the robot motion model, i.e.{\@} using Equations~\eqref{eq:reom} and \eqref{eq:predcov}.

The UGPVR estimate at most the same amount of mutual information as GPVR. This is because of taking the expectation of the kernel with respect to the robot pose posterior. If we pick the mode of the robot pose posterior, GPVR only uses that input point for mutual information computation. In contrast, UGPVR averages over all possible values of the robot pose within the support of its distribution. Now it is clear that having an estimate of the robot pose posterior with long tails (yet exponentially bounded) reduces the mutual information even further due to averaging. Furthermore, if the robot pose is not known and we have only access to its estimate, ignoring the distribution can lead to overconfident or inconsistent inference/prediction~\citep[Figure 2]{maaniwgpom}.

\begin{algorithm}[t!]
\caption[InformationUGPVR]{\texttt{InformationUGPVR}()}
\label{alg:ugpvr}
\small{
\begin{algorithmic}[1]
\Require Robot pose or desired location $\mathcal{N}(\boldsymbol p, \boldsymbol\Sigma)$, current map/state estimate $m$, modified covariance function $\tilde{k}(\cdot,\cdot)$, sensor noise $\sigma_n^2$, near node information $I_{near}$;
\State $\bar{\sigma} \gets \sigma$ // Initialize updated map variance as the current map variance
\If {$I_{near}$ is not empty} // Initialize information gain
\State $I \gets I_{near}$ 
\Else
\State $I \gets 0$
\EndIf
\State $\boldsymbol z \gets$ Predict future measurements using $\boldsymbol p$ and $m$
\State $\mathcal{D} \gets$ Construct the training set using $\boldsymbol z$ and $\boldsymbol p$
\State // Find the corresponding nearest sub-map
\State $\mathcal{M}_{\mathcal{D}} \gets \varnothing$
\For {all $\boldsymbol x \in \mathcal{D}$}
\State $\boldsymbol x_{nearest} \gets \texttt{Nearest}(\boldsymbol x, \mathcal{M})$
\State $\mathcal{M}_{\mathcal{D}} \gets \mathcal{M}_{\mathcal{D}} \cup \{\boldsymbol x_{nearest}\}$
\EndFor
\State // Calculate self-covariance and cross-covariance matrices using $\tilde{k}(\cdot,\cdot)$ with respect to $\mathcal{N}(\boldsymbol p, \boldsymbol\Sigma)$
\State $\boldsymbol C \gets \tilde{\boldsymbol{K}}(\boldsymbol X, \boldsymbol X), \boldsymbol C_* \gets \tilde{\boldsymbol{K}}(\boldsymbol X, \boldsymbol X_*)$ // $\boldsymbol X \in \mathcal{D}$ and \mbox{$\boldsymbol X_* \in \mathcal{M}_{\mathcal{D}}$}
\State // Calculate vector of diagonal variances for test points 
\State $\boldsymbol c_{**} \gets \texttt{diag}(\tilde{\boldsymbol{K}}(\boldsymbol X_*, \boldsymbol X_*))$
\State $\boldsymbol L \gets \texttt{Cholesky}(\boldsymbol C + \sigma_n^2 \boldsymbol I)$, $\boldsymbol V \gets \boldsymbol L \backslash \boldsymbol C_*$
\State $\boldsymbol v \gets \boldsymbol c_{**} - \texttt{dot}(\boldsymbol V, \boldsymbol V)^{T}$ // dot product
\For {all $i \in \mathcal{M}_{\mathcal{D}}$}
\State $\bar{\sigma}^{[i]} \gets ((\sigma^{[i]})^{-1}+ (v^{[i]})^{-1})^{-1}$ // BCM fusion
\State $I \gets I + \log(\sigma^{[i]}) - \log(\bar{\sigma}^{[i]})$
\EndFor
\Return $I$ (total information gain), $\bar{\sigma}$ (updated map variance)
\end{algorithmic}}
\end{algorithm}

\section{Path extraction and selection}
\label{sec:pathselect}

In the absence of artificial targets such as frontiers, in general, there is no goal to be found by the planner. IIG searches for traversable paths within the map, and the resulting tree shows feasible trajectories from the current robot pose to each leaf node, expanded using the maximum information gathering policy. Therefore, any path in the tree starting from the robot pose to a leaf node is a feasible action. For robotic exploration scenarios, it is not possible to traverse all the available trajectories since, after execution of one trajectory, new measurements are taken, and the map (and the robot pose) belief is updated; therefore, previous predictions are obsolete, and the robot has to enter the planning state~\footnote{Here we assume the robot remains committed to the selected action, i.e. there is no replanning while executing an action.}. As such, once the RIG/IIG tree is available, next step can be seen as decision-making where the robot selects a trajectory as an executable action. One possible solution is finding a trajectory in the IIG tree that maximizes the information gain.

\begin{algorithm}[t!]
\caption[PathSelection]{\texttt{PathSelection}()}
\label{alg:pathselect}
\small{
\begin{algorithmic}[1]
\Require RIG/IIG tree $\mathcal{T}$, path similarity ratio $s_{ratio}$;
\State // The path length equals the number of nodes in the path.
\State // Find all leaves using depth first search
\State $\mathcal{V}_{leaves} \gets \texttt{DFSpreorder}(\mathcal{T})$ \label{line:path_cuts}
\State // Find all paths by starting from each leaf and following parent nodes
\State $\mathcal{P}_{all} \gets \texttt{Paths2root}(\mathcal{T},\mathcal{V}_{leaves})$
\State $l_{max} \gets$ Find maximum path length in $\mathcal{P}_{all}$
\State $ l_{min} \gets \texttt{ceil}(\kappa l_{max})$ // Minimum path length, $0 < \kappa < 1$
\For {all $\mathcal{P} \in \mathcal{P}_{all}$}
\If {$\texttt{length}(\mathcal{P}) \leq l_{min}$}
\State \textbf{Delete} $\mathcal{P}$
\EndIf
\EndFor \label{line:path_cute}
\State $n_p \gets \lvert\mathcal{P}_{all}\rvert$ // Number of paths in set $\mathcal{P}_{all}$\label{line:path_votes}
\State $vote \gets \texttt{zeros}(n_p,1)$
\State // Find longest independent paths
\State {$i \gets 1$}
\While {$i \leq n_p-1$}
\State {$j \gets i+1$}
\While {$j \leq n_p$}
\State $l_i \gets \texttt{length}(\mathcal{P}_{i})$, $l_j \gets \texttt{length}(\mathcal{P}_{j})$
\State // Find number of common nodes between paths $i$ and~$j$, and the length ratio they share
\State $l_{ij} \gets \texttt{SimilarNodes}(\mathcal{P}_{i},\mathcal{P}_{j}) / \min(l_i, l_j)$
\If {$l_{ij} > s_{ratio}$}
\If {$l_i > l_j$} // Path $i$ is longer
\State $vote^{[i]} \gets vote^{[i]} + 1$
\State $vote^{[j]} \gets vote^{[j]} - 1$
\Else $\ $ // Path $j$ is longer
\State $vote^{[i]} \gets vote^{[i]} - 1$
\State $vote^{[j]} \gets vote^{[j]} + 1$
\EndIf
\Else $\ $ // Two independent paths
\State $vote^{[i]} \gets vote^{[i]} + 1$
\State $vote^{[j]} \gets vote^{[j]} + 1$
\EndIf
\State {$j \gets j+1$}
\EndWhile
\State {$i \gets i+1$}
\EndWhile \label{line:path_votee}
\State // Find paths with maximum vote and select the maximally informative path
\State $\mathcal{P}_{max} \gets \texttt{MaxVotePath}(\mathcal{P}_{all}, vote)$ \label{line:path_maxvote}
\State $\mathcal{P}_{I} \gets \texttt{MaxInformativePath}(\mathcal{P}_{max})$  \label{line:path_mip}
\Return $\mathcal{P}_{I}$
\end{algorithmic}}
\end{algorithm}

We provide a heuristic algorithm based on a voting method. Algorithm~\ref{alg:pathselect} shows the implementation of the proposed method. The algorithm first finds all possible paths using a preorder \emph{depth first search}, function \texttt{DFSpreorder}, and then removes paths that are shorter than a minimum length (using parameter $0 < \kappa < 1$), line~\ref{line:path_cuts}-\ref{line:path_cute}. Note that the path length and the length returned by function \texttt{length} are integers and correspond to the number of nodes in the path; consequently, the path length in Algorithm~\ref{alg:pathselect} is independent of the actual path scale. Then each path is compared with others using the following strategy. If two paths have more than a specified number nodes in common, then we penalize the shorter path by a negative vote and encourage the longer path by a positive vote. However, if two paths do not have many common nodes, then they are considered as two independent paths, and they receive positive votes, line~\ref{line:path_votes}-\ref{line:path_votee}. The function \texttt{SimilarNodes} returns the number of overlapping nodes between two paths. In line~\ref{line:path_maxvote}, the function \texttt{MaxVotePath} returns all paths that have the maximum number of votes. There is usually more than one path with the maximum vote, therefore, in line~\ref{line:path_mip}, the function \texttt{MaxInformativePath} selects the path that overall has the maximum information gain.

  Note that path selection algorithm is independent of the IIG algorithm and is a necessarily step to choose an action (trajectory) from the IIG tree. In other words, the IIG tree (graph) can be seen as the space of all feasible informative trajectories whereas the path selection step finds a trajectory from the graph as an executable action, i.e. the final output of Equation~\eqref{eq:incinfmp}. If one decides to include a target (goal) while running IIG, once the target has been reached by the planner the decision has been made and, therefore, there is no need to use Algorithm~\ref{alg:pathselect}. Moreover, if the target is moving, this can be seen as an instance of the target tracking problem~\citep{levine2010information}.

\section{Information-theoretic robotic exploration}
\label{sec:theoretic}

In this section, we present the information-theoretic basis for applying the IIG-tree algorithm to solve the autonomous robotic exploration problem. Since the developed algorithm does not rely on geometric features (frontiers) for map exploration, an alternative criterion is required for mission termination. We use the entropy independence bound theorem to leverage such a criterion. 

\begin{theorem}[Independence bound on entropy]
\label{th:indboundent}
Let $X_1,X_2,...,X_n$ be drawn according to $p(x_1,x_2,...,x_n)$. Then
 \begin{equation}
 \label{jentineq}
  H(X_1,X_2,...,X_n) \leq \sum_{i=1}^n H(X_i) 
 \end{equation}
with equality if and only if the $X_i$ are Independent.
\end{theorem}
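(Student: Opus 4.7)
The plan is to combine two results already in hand: the chain rule for entropy (Theorem~\ref{th:entchainrule}) and the fact that conditioning reduces entropy (Theorem~\ref{th:condentineq}). These two together make the proof essentially immediate, so the bulk of the work is setting up the decomposition cleanly and then handling the equality case carefully.

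First I would expand the joint entropy using the chain rule,
\begin{equation*}
 H(X_1,X_2,\dots,X_n) = \sum_{i=1}^n H(X_i \mid X_{i-1},\dots,X_1),
\end{equation*}
where the $i=1$ term is interpreted as $H(X_1)$. Then, applying Theorem~\ref{th:condentineq} term-by-term gives $H(X_i \mid X_{i-1},\dots,X_1) \leq H(X_i)$ for every $i$, and summing these inequalities yields the bound~\eqref{jentineq}.

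For the equality statement, I would argue in both directions. If the $X_i$ are independent, then $p(x_1,\dots,x_n) = \prod_i p(x_i)$, which plugged into the definition of joint entropy immediately gives $H(X_1,\dots,X_n) = \sum_i H(X_i)$. Conversely, if equality holds in~\eqref{jentineq}, then the sum of the nonnegative slacks $H(X_i) - H(X_i \mid X_{i-1},\dots,X_1) = I(X_i; X_{i-1},\dots,X_1)$ must be zero, so each such mutual information vanishes. By the nonnegativity corollary (the equality case of Theorem~\ref{th:condentineq}), this forces $X_i$ to be independent of $(X_{i-1},\dots,X_1)$ for every $i \geq 2$, which is exactly the definition of mutual independence of $X_1,\dots,X_n$.

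I do not anticipate a genuine obstacle here, since every ingredient is already available in the preliminaries. The only place to be careful is the equality direction: it is tempting to conclude independence from pairwise vanishing of $I(X_i;X_j)$, but what is actually used (and what is actually correct) is the vanishing of $I(X_i;X_{i-1},\dots,X_1)$ for each $i$, and this chain of conditional independencies telescopes into full mutual independence via the factorization $p(x_1,\dots,x_n) = \prod_i p(x_i \mid x_{i-1},\dots,x_1) = \prod_i p(x_i)$.
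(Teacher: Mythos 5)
Your proof is correct and follows exactly the route the paper takes: apply the chain rule for entropy (Theorem~\ref{th:entchainrule}) and then bound each term via conditioning reduces entropy (Theorem~\ref{th:condentineq}), which the paper states as its entire proof. Your additional care with the equality case — noting that it is the vanishing of $I(X_i;X_{i-1},\dots,X_1)$, not merely pairwise mutual informations, that telescopes into the full factorization — is a correct and welcome elaboration of what the paper leaves implicit.
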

\begin{proof}
 The proof follows directly from Theorem~\ref{th:entchainrule} and~\ref{th:condentineq}.
\end{proof}

This theorem states that the joint entropy is always smaller than sum of entropies independently and both sides are equal if and only if the random variables are independent. We start from this inequality and prove that for robotic information 
gathering or map exploration the least upper bound of the average map entropy that is calculated by assuming independence between map points can be a threshold for mission termination. This is formally stated in the following theorem.

\begin{theorem}[The least upper bound of the average map entropy]
\label{th:lupent}
 Let $n \in \mathbb N$ be the number of map points. In the limit, for a completely explored occupancy map, the least upper bound of the average map entropy is given by \mbox{$h_{sat} = H(p_{sat})$}.
\end{theorem}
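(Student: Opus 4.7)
The plan is to bound the joint map entropy by the sum of marginal entropies via Theorem~\ref{th:indboundent} and then evaluate the right-hand side at the saturation belief.

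First, I would unpack the hypothesis ``completely explored.'' By the definition of $p_{sat}$, when the robot is maximally confident about cell $i$, the marginal occupancy probability $p^{[i]}$ equals either $p_{sat}$ or $1-p_{sat}$. Because the binary entropy $H(p) = -p\log p - (1-p)\log(1-p)$ is symmetric about $1/2$, both cases give $H(M_i) = h_{sat}$. Hence, in the limit of a completely explored occupancy map, every cell contributes exactly $h_{sat}$ to the sum of marginal entropies, so
\begin{equation*}
\sum_{i=1}^{n} H(M_i) \;=\; n\, h_{sat}.
\end{equation*}

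Next, I would apply the independence bound (Theorem~\ref{th:indboundent}) directly to the joint map random variable $M = (M_1,\dots,M_n)$ and divide by $n$ to obtain an average-entropy statement:
\begin{equation*}
\frac{1}{n}\, H(M_1,\dots,M_n) \;\le\; \frac{1}{n}\sum_{i=1}^{n} H(M_i) \;=\; h_{sat}.
\end{equation*}
Thus $h_{sat}$ is an upper bound on the average map entropy at saturation. To establish that it is the \emph{least} upper bound, I would invoke the equality clause of Theorem~\ref{th:indboundent}: the bound $\sum_i H(M_i)$ is attained precisely when $M_1,\dots,M_n$ are jointly independent. Since there exist probability models on the map for which the cells are independent (this is in fact the standard occupancy-grid assumption used to compute the bound in Algorithms~\ref{alg:funcmi2} and~\ref{alg:funcmiub}), the bound $h_{sat}$ is realised and no strictly smaller constant can upper-bound the average entropy for all admissible distributions. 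Therefore $h_{sat}$ is the supremum, i.e.\ the least upper bound, of the average map entropy at saturation.

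The argument is essentially two lines of information theory once the modelling assumption is pinned down; the main subtlety I anticipate is interpretive rather than technical. Specifically, one must be careful that ``completely explored'' is taken to mean ``every cell has reached saturation confidence'' (so marginal entropies equal $h_{sat}$), and that ``average map entropy'' refers to $\tfrac{1}{n}H(M_1,\dots,M_n)$ rather than $\tfrac{1}{n}\sum_i H(M_i)$; otherwise the ``least upper bound'' phrasing becomes an equality by definition and the content of the theorem collapses. With this convention, the independence bound supplies the inequality and its equality clause supplies tightness, yielding the claim.
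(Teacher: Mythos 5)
Your proposal is correct and takes essentially the same route as the paper: apply the independence bound on entropy (Theorem~\ref{th:indboundent}), divide by $n$, and evaluate the marginal entropies at the saturation belief so that the right-hand side becomes $h_{sat}=H(p_{sat})$. The only difference is that you justify the \emph{least} upper bound explicitly via the equality clause (attainment when the map cells are independent), whereas the paper simply passes to the limit $p(m)\rightarrow p_{sat}$ and asserts the supremum; your treatment of that step is, if anything, more complete.
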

\begin{proof}
 From Theorem~\ref{th:indboundent} and through multiplying each side of the inequality by $\frac{1}{n}$, we can write the average map entropy as
 \begin{equation}
 \label{eq:avemapentineq}
   \frac{1}{n} H(M) < \frac{1}{n} \sum_{i=1}^n H(M = m^{[i]}) 
 \end{equation}
 by taking the limit as $p(m) \rightarrow p_{sat}$, then
 \begin{equation}
  \nonumber \underset{p(m) \rightarrow p_{sat}}\lim \ \frac{1}{n} H(M) < \underset{p(m) \rightarrow p_{sat}}\lim \ \frac{1}{n} \sum_{i=1}^n H(M = m^{[i]}) 
 \end{equation}
 \begin{equation}
  \nonumber \underset{p(m) \rightarrow p_{sat}}\lim \ \frac{1}{n} H(M) < H(p_{sat}) 
 \end{equation}
 \begin{equation}
  \sup \ \frac{1}{n} H(M) = H(p_{sat})
 \end{equation}
\end{proof}
The result from Theorem~\ref{th:lupent} is useful because the calculation of the right hand side of the inequality~\eqref{eq:avemapentineq} is trivial. In contrast, calculation of the left hand side assuming the map belief is represented by a multi-variate Gaussian, requires maintaining the full map covariance matrix and computation of its determinant. This is not practical, since the map often has a dense belief representation and can be theoretically expanded unbounded (to a very large extent). In the following, we present some notable remarks and consequences of Theorem~\ref{th:lupent}.

\begin{remark}
 The result from Theorem~\ref{th:lupent} also extends to continuous random variables and differential entropy.
\end{remark}
\begin{remark}
 Note that we do not assume any distribution for map points. The entropy can be calculated either with the assumption that
 the map points are normally distributed or treating them as Bernoulli random variables.
\end{remark}
\begin{remark}
 Since $0 < p_{sat} < 1$ and $H(p_{sat}) = H(1 - p_{sat})$, one saturation entropy can be set for the entire map.
\end{remark}
\begin{corollary}[information gathering termination]
\label{coro:infogatherterm}
 Given a saturation entropy $h_{sat}$, the problem of search for information gathering for desired random variables $X_1, X_2,...,X_n$ whose 
 support is alphabet $\mathcal{X}$, can be terminated when $\frac{1}{n} \sum_{i=1}^n H(X_i) \leq h_{sat}$.
\end{corollary}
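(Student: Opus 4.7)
The plan is to combine the two preceding entropy results — the independence bound on entropy (Theorem~\ref{th:indboundent}) and the least-upper-bound theorem (Theorem~\ref{th:lupent}) — to obtain a tractable, sufficient termination certificate. The conceptual role of the corollary is to replace the intractable joint entropy $H(X_1,\dots,X_n)$ with the sum of marginal entropies $\sum_i H(X_i)$, which only requires storing per-point beliefs and is therefore computable on dense map representations.

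First I would apply Theorem~\ref{th:indboundent} and divide both sides by $n$ to obtain
\[
\frac{1}{n}H(X_1,\dots,X_n) \;\leq\; \frac{1}{n}\sum_{i=1}^n H(X_i).
\]
If the termination hypothesis $\frac{1}{n}\sum_i H(X_i) \leq h_{sat}$ holds, then by transitivity $\frac{1}{n}H(X_1,\dots,X_n)\leq h_{sat}$. Next, by Theorem~\ref{th:lupent}, $h_{sat}$ is the least upper bound of the average joint map entropy as the marginals approach $p_{sat}$. Hence reaching the termination criterion certifies that the average per-variable joint entropy has entered the saturation regime identified by the previous theorem, so additional samples cannot meaningfully reduce the uncertainty; termination is information-theoretically justified.

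The subtle point worth emphasizing is that the implication runs only in one direction: a small marginal sum forces a small joint entropy, but not conversely. This is exactly the right property for a practical stopping rule — the criterion is conservative (it will not prematurely stop while the joint entropy still exceeds $h_{sat}$), while remaining tractable because only marginal entropies, not the full joint distribution or map covariance, need to be tracked. The main (essentially only) point of care is that Theorem~\ref{th:indboundent} is written for discrete variables, so to cover the continuous case one invokes the analogous chain-rule and nonnegativity arguments for differential entropy — a straightforward extension already flagged in the remarks following Theorem~\ref{th:lupent}.
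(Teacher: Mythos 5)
Your proposal is correct and follows essentially the same route as the paper: the corollary is stated there as an immediate consequence of the independence bound (Theorem~\ref{th:indboundent}) together with the saturation-entropy interpretation from Theorem~\ref{th:lupent}, which is exactly the chain you spell out (divide the independence bound by $n$, compare the tractable average of marginal entropies to $h_{sat}$, and note the criterion is a one-directional, conservative certificate). Your added remarks on the direction of the implication and the differential-entropy extension are consistent with the remarks the paper itself makes after Theorem~\ref{th:lupent}.
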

\begin{corollary}[Map exploration termination]
\label{coro:expterm}
 The problem of autonomous robotic exploration for mapping can be terminated when $\frac{1}{n} \sum_{i=1}^n H(M = m^{[i]}) \leq H(p_{sat})$.
\end{corollary}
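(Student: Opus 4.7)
The plan is to derive this as an immediate specialization of Corollary~\ref{coro:infogatherterm} (information gathering termination) by instantiating the generic random variables $X_1,\dots,X_n$ as the occupancy cells $M = m^{[i]}$ of the map, and the generic saturation entropy $h_{sat}$ as its map-specific counterpart $H(p_{sat})$ from the definition of map saturation entropy. No new inequalities are required; the work is essentially a translation of notation together with a short argument explaining why the stated inequality is indeed an admissible termination condition for exploration.

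First I would note that, by Theorem~\ref{th:lupent}, the least upper bound of the average map entropy in the limit of full exploration equals $H(p_{sat})$. Thus $H(p_{sat})$ is precisely the target value that the independence-based upper bound on $\tfrac{1}{n}H(M)$ asymptotically approaches as every cell's marginal belief saturates. Second, I would invoke Corollary~\ref{coro:infogatherterm} with $X_i \leftarrow M = m^{[i]}$ and $h_{sat} \leftarrow H(p_{sat})$; the hypotheses there are satisfied because each map cell $m^{[i]}$ is a random variable on the alphabet $\{\text{free}, \text{occupied}\}$ (or, equivalently, a continuous random variable if the Gaussian-belief representation is used, by the remark following Theorem~\ref{th:lupent}). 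This gives the inequality
\begin{equation}
  \frac{1}{n}\sum_{i=1}^{n} H(M = m^{[i]}) \leq H(p_{sat})
\end{equation}
as a valid termination criterion.

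Finally, I would include a one-line justification of why this condition is semantically meaningful for exploration, rather than merely formally inherited: by Theorem~\ref{th:indboundent}, the quantity $\tfrac{1}{n}\sum_i H(M=m^{[i]})$ is an upper bound on the true normalized joint entropy $\tfrac{1}{n}H(M)$, so driving this upper bound to at most $H(p_{sat})$ is sufficient to guarantee that the average per-cell uncertainty has reached the saturation level, which is exactly the operational definition of a fully explored map. Symmetry of the binary entropy ($H(p_{sat}) = H(1-p_{sat})$, as noted in the third remark) ensures that a single threshold suffices regardless of whether individual cells saturate toward free or occupied.

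I do not foresee a main obstacle in this derivation, since both Theorem~\ref{th:lupent} and Corollary~\ref{coro:infogatherterm} have already been established; the only subtle point is making explicit that the inequality being tested in practice is the independence-based surrogate $\tfrac{1}{n}\sum_i H(M=m^{[i]})$ and not the intractable $\tfrac{1}{n}H(M)$, and that this surrogate is tractable precisely because the individual marginal cell entropies are closed-form expressions of the current occupancy beliefs.
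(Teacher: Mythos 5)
Your derivation matches the paper's treatment: Corollary~\ref{coro:expterm} is obtained there exactly as the specialization of Corollary~\ref{coro:infogatherterm} to the map points $m^{[i]}$ with saturation entropy $H(p_{sat})$, justified by Theorem~\ref{th:lupent} (which in turn rests on Theorem~\ref{th:indboundent}). Your additional remarks on the surrogate bound and the symmetry $H(p_{sat})=H(1-p_{sat})$ simply make explicit what the paper states in its surrounding remarks, so the proposal is correct and follows essentially the same route.
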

The Corollary~\ref{coro:infogatherterm} generalizes the notion of exploration in the sense of information gathering. Therefore, regardless of the quantity
of interest, we can provide a stopping criterion for the exploration mission.
The Corollary~\ref{coro:expterm} is of great importance for the classic robotic exploration for map completion problem as there is no need to resort to geometric
frontiers with a specific cluster size to detect map completion. Another advantage of setting a threshold in the information space is the natural
consideration of uncertainty in the estimation process before issuing a mission termination signal.

\begin{remark}
 Note that MI-based information functions and map exploration termination condition have different saturation probabilities/entropies that are independent and do not necessarily have similar values. Where it is not clear from the context, we make the distinction explicitly clear.
\end{remark}

\section{Results and discussion}
\label{sec:iigresults}

In this section, we examine the proposed algorithms in several scenarios. We use MATLAB implementations of the algorithms that are also made publicly available on:~\textcolor{BrickRed}{\url{https://github.com/MaaniGhaffari/sampling_based_planners}}.

We first design experiments for comparison of information functions under various sensor parameters such as the number of beams and the sensor range, and their effects on the convergence of IIG. Although the primary objective of this experiment is to evaluate the performance of IIG using each information function, we note that such an experiment can also facilitate sensor selection. In other words, given the map of an environment and a number of sensors with different characteristics, how to select the sensor that has a reasonable balance of performance and cost.

In the second experiment a robot explores an unknown environment; it needs to solve SLAM incrementally, estimate a dense occupancy map representation suitable for planning and navigation, and automatically detect the completion of an exploration mission. Therefore, the purpose of information gathering is map completion while maintaining accurate pose estimation, i.e.{\@} planning for estimation. In these experiments the robot pose is estimated through a pose graph algorithm such as Pose SLAM~\citep{ila2010information}, and the map of the unknown environment is computed using the Incremental Gaussian Processes Occupancy Mapping (I-GPOM)~\citep{maani2014com,jadidi2016gaussian}. Therefore, the state which includes the robot pose and the map is partially observable. 

In the third scenario, we demonstrate another possible application of IIG in a lake monitoring experiment using experimental data collected by an Autonomous Surface Vehicle (ASV). The dataset used for this experiment is publicly available and also used in the original RIG article~\citep{hollinger2014sampling}~\footnote{The dataset is available on:~\textcolor{BrickRed}{\url{http://research.engr.oregonstate.edu/rdml/software}}}. 

We conclude this section by discussing the limitations of the work including our observations and conjectures.

\begin{table}[th!]
\footnotesize
\centering
\caption{Parameters for IIG-tree experiments. ``Online'' parameters are only related to the exploration experiments.}
\begin{tabular}{lll}
\toprule
Parameter			& Symbol	& Value \\ \midrule
\multicolumn{3}{l}{$-$ General parameters:} \\
Occupied probability  		& $p_{occ}$	& 0.65		\\
Unoccupied probability 		& $p_{free}$	& 0.35		\\  
Initial position 		& $x_{init}$	& [10,2] $\m$	\\ 
Map resolution			& $\delta_{map}$ & 0.2 $\m$	\\
$I_{RIC}$ threshold		& $\delta_{RIC}$ & 5e-4		\\
$I_{RIC}$ threshold (Online)	& $\delta_{RIC}$ & 1e-2		\\
\multicolumn{3}{l}{$-$ MI-based parameters:} \\
Hit std				& $\sigma_{hit}$	& 0.05 m	\\
Short decay			& $\lambda_{short}$	& 0.2 m	 	\\
Hit weight			& $z_{hit}$	& 0.7		\\
Short weight			& $z_{short}$	& 0.1		\\ 
Max weight			& $z_{max}$	& 0.1	 	\\
Random weight			& $z_{rand}$	& 0.1		\\ 
Numerical integration resolution & $s_z$	& 2 $\m^{-1}$	\\
Saturation probability 		& $p_{sat}$	& 0.05		\\
Saturation probability (Online)	& $p_{sat}$	& 0.3		\\
Occupied belief  		& $b_{occ}$	& 1.66		\\
Unoccupied belief 		& $b_{free}$	& 0.6		\\
\multicolumn{3}{l}{$-$ Covariance function hyperparameters:} \\
characteristic length-scale	& $l$		& 3.2623 $\m$	\\
Signal variance			& $\sigma_f^2$	& 0.1879	\\
\multicolumn{3}{l}{$-$ Robot motion model:} \\
Motion noise covariance		& \multicolumn{2}{l}{$\boldsymbol Q=\diag(0.1\m, 0.1\m, 0.0026\rad)^2$}	\\ 
Initial pose uncertainty 	& \multicolumn{2}{l}{$\boldsymbol\Sigma_{init}=\diag(0.4\m,0.1\m,0\rad)^2$}	\\ 
\multicolumn{3}{l}{$-$ Path selection:} \\
Minimum path length coefficient 	& $\kappa$	& 0.4 	\\
Path similarity ratio			& $s_{ratio}$	& 0.6  	\\
\multicolumn{3}{l}{$-$ Termination condition (Online):} \\
Saturation entropy		& $h_{sat} = H(p_{sat}=0.1)$	& 0.3251 $\mathrm{nats}$ 	\\	\bottomrule
\end{tabular}
\label{tab:iigparam}
\end{table}

\subsection{Experimental Setup}
\label{subsec:iigexpsetup}

We first briefly describe the experiment setup that is used in Subsections~\ref{subsec:inffunccomp} and \ref{subsec:oniigres}. The parameters for experiments and the information functions are listed in Table~\ref{tab:iigparam}. The robot is equipped with odometric and laser range-finder sensors. The environment is constructed using a binary map of obstacles. For GPVR-based algorithms, the covariance function is Mat\'ern ($\nu = 5/2$)
\begin{equation}
 k_{VR} = \sigma_f^2 k_{\nu=5/2}(r) = \sigma_f^2 (1+\frac{\sqrt{5}r}{l}+\frac{5r^2}{3l^2})\exp(-\frac{\sqrt{5}r}{l})
\end{equation}
with hyperparameters that were learned prior to the experiments and are available in Table~\ref{tab:iigparam}. The modified kernel in UGPVR algorithm was calculated using Gauss-Hermite quadrature with $11$ sample points. 

The path selection parameters and $\delta_{RIC}$ are found empirically; however, these quantities are non-dimensional, and we expect that one can tune them easily. In particular, $\delta_{RIC}$ which sets the planning horizon depends on the desired outcome. Setting $\delta_{RIC} = 0$ is the ideal case which makes the planner infinite-horizon, but in practice due to the numerical resolution of computer systems, a reasonably higher value (Table~\ref{tab:iigparam}) should be chosen. Furthermore, a very small value of $\delta_{RIC}$, analogous to numerical optimization algorithms, results in the late convergence of the planner which is usually undesirable.

The termination condition requires a saturation entropy. For the case of occupancy mapping, the problem is well-studied, and we are aware of desired saturation probability that leads to an occupancy map with sufficient confidence for the status of each point/cell. Also, when a process posterior is represented using mass probabilities such as occupancy maps, finding a saturation probability and, therefore the corresponding saturation entropy, is possible. Perhaps a hard case is when the posterior is a probability density function. In such situations, further knowledge of the marginal posterior distribution of the state variables is required. For example, if we wish to model state variables as Gaussian random variables with the desired variance, we can use differential entropy of the Gaussian distribution to compute the saturation entropy.

The robot pose covariance was approximated using the uncertainty propagation and local linearization of the robot motion model using Equations~\eqref{eq:reom} and \eqref{eq:predcov}. Each run was continued until the algorithm converges without any manual intervention. To obtain the performance of each method in the limit and also examine the convergence of IIG, we used an unlimited budget in all experiments. Moreover, the cost is computed as the Euclidean distance between any two nodes and the budget is the maximum allowable travel distance which is unlimited. The online parameters in Table~\ref{tab:iigparam} refer to exploration experiments and are applied in Subsection~\ref{subsec:oniigres}. The exploration experiments use Corollary~\ref{coro:expterm} as the termination condition for the entire mission.

\subsection{Comparison of information functions}
\label{subsec:inffunccomp}

We now compare the proposed information functions by running IIG-tree using different sensor parameters  in the Cave map~\citep{Radish_data_set}. As in this experiment the map of the environment is given, the map is initialized as an occupancy grid map by assigning each point the occupied or unoccupied probability according to its ground truth status. For GPVR-based methods, an initial variance map is set to the value of $1$ for all points. The experiments are conducted by increasing the sensor number of beams and range, and the results are collected in Tables~\ref{tab:ifuncompbeam} and ~\ref{tab:ifuncomprange}, respectively.

\begin{table}[t]
\footnotesize
\centering
\caption{Comparison of the information functions in offline IIG-tree experiments by increasing the sensor number of beams from $10$ to $50$. The figures  are averaged over $30$ experiments (mean $\pm$ standard error). For all experiments the following parameters are set in common $r_{max} = 5 \m$, $\delta_{RIC} = 5e-4$. The total information gain is reported in $\mathrm{nats}$.}
\begin{tabular}{lccc}
\toprule
No. of beams $n_z$	& \multicolumn{1}{c}{10} 	& \multicolumn{1}{c}{20} & \multicolumn{1}{c}{50} \\ \midrule
 \multicolumn{4}{c}{Mutual information (MI)} \\ \midrule
 
Plan. time (min)		& \textbf{6.88 $\pm$ 0.16}	& 8.44 $\pm$ 0.27	& 14.64 $\pm$ 0.81	 	\\ 
No. of samples			& 911 $\pm$ 19			& 588 $\pm$ 13		& \textbf{435 $\pm$ 20}		\\ 
No. of nodes			& 402 $\pm$ 5			& 280 $\pm$ 5		& \textbf{181 $\pm$ 8} 		\\  
Tot. info. gain			& \textbf{1.40e+04 $\pm$ 148}	& 1.36e+04 $\pm$ 173	& 1.25e+04 $\pm$ 313 		\\  
Tot. cost (m)			& 336.3 $\pm$ 3.8		& 258.5 $\pm$ 3.7	& \textbf{187.2 $\pm$ 7.7} 	\\  \midrule\midrule

\multicolumn{4}{c}{Mutual information upper bound (MIUB)} \\ \midrule
 
Plan. time (min)		& \textbf{4.53 $\pm$ 0.16}	& 6.95 $\pm$ 0.28	& 12.88 $\pm$ 0.51	 	\\ 
No. of samples			& 1014 $\pm$ 24			& 629 $\pm$ 17		& \textbf{486 $\pm$ 15}		\\ 
No. of nodes			& 444 $\pm$ 8			& 303 $\pm$ 7		& \textbf{212 $\pm$ 5} 		\\  
Tot. info. gain 		& \textbf{2.04e+04 $\pm$ 169}	& 1.96e+04 $\pm$ 207	& 1.98e+04 $\pm$ 403 		\\  
Tot. cost (m)			& 358.3 $\pm$ 5.6		& 272.0 $\pm$ 5.0	& \textbf{211.5 $\pm$ 4.4}	\\  \midrule\midrule

\multicolumn{4}{c}{GP variance reduction (GPVR)} \\ \midrule
 
Plan. time (min)		& \textbf{9.31 $\pm$ 0.39}	& 13.10 $\pm$ 0.41	& 15.07 $\pm$ 0.61		\\ 
No. of samples		& 1677 $\pm$ 53			& 1135 $\pm$ 36		& \textbf{874 $\pm$ 36}		\\ 
No. of nodes			& 616 $\pm$ 15			& 477 $\pm$ 13		& \textbf{382 $\pm$ 11}		\\  
Tot. info. gain 		& 7.84e+03 $\pm$ 62		& 9.123e+03 $\pm$ 80	& \textbf{1.31e+04 $\pm$ 176} 	\\  
Tot. cost (m)			& 490.5 $\pm$ 9.5		& 392.4 $\pm$ 7.9	& \textbf{320.6 $\pm$ 7.2} 	\\  \midrule\midrule

\multicolumn{4}{c}{Uncertain GP variance reduction (UGPVR)} \\ \midrule
 
Plan. time (min)		& \textbf{19.75 $\pm$ 0.48}	& 27.89 $\pm$ 0.94	& 47.79 $\pm$ 2.16		\\ 
No. of samples			& 4746 $\pm$ 136		& 2633 $\pm$ 102	& \textbf{1828 $\pm$ 81}	\\ 
No. of nodes			& 1344 $\pm$ 28			& 913 $\pm$ 25		& \textbf{704 $\pm$ 22} 	\\  
Tot. info. gain 		& 3.19e+03 $\pm$ 16		& 3.73e+03 $\pm$ 22	& \textbf{5.55e+03 $\pm$ 52}	\\  
Tot. cost (m)			& 894.0 $\pm$ 16.3		& 621.4 $\pm$ 15.6	& \textbf{497.8 $\pm$ 12.5}	\\  \bottomrule
\end{tabular}
\label{tab:ifuncompbeam}
\end{table}

\begin{table}[t]
\footnotesize
\centering
\caption{Comparison of the information functions in offline IIG-tree experiments by increasing the sensor range, $r_{max}$, from $5\m$ to $20\m$ (averaged over $30$ experiments, mean $\pm$ standard error). For all experiments the following parameters are set in common $n_z = 10$; $\delta_{RIC} = 5e-4$. The total information gain is reported in $\mathrm{nats}$.}
\begin{tabular}{lccc}
\toprule
Range $r_{max}$ (m)	& \multicolumn{1}{c}{5} 	& \multicolumn{1}{c}{10} 	& \multicolumn{1}{c}{20} \\ \midrule
 \multicolumn{4}{c}{Mutual information (MI)} \\ \midrule
 
Plan. time (min)	& \textbf{5.00 $\pm$ 0.13}	& 6.60 $\pm$ 0.10		& 7.88 $\pm$ 0.11 		\\ 
No. of samples		& 979 $\pm$ 22			& 782 $\pm$ 21			& \textbf{780 $\pm$ 20}		\\ 
No. of nodes		& 423 $\pm$ 7			& \textbf{352 $\pm$ 8}		& 360 $\pm$ 6 			\\  
Tot. info. gain		& 1.40e+04 $\pm$ 146		& 1.45e+04 $\pm$ 177		& \textbf{1.57e+04 $\pm$ 220} 	\\  
Tot. cost (m)		& 351.7 $\pm$ 4.8		& \textbf{307.8 $\pm$ 5.4}	& 309.2 $\pm$ 4.0 		\\  \midrule\midrule

\multicolumn{4}{c}{Mutual information upper bound (MIUB)} \\ \midrule
 
Plan. time (min)	& 3.67 $\pm$ 0.10		& \textbf{3.56 $\pm$ 0.08}	& 3.87 $\pm$ 0.11		\\ 
No. of samples		& 1015 $\pm$ 23			& 813 $\pm$ 15			& \textbf{811 $\pm$ 20}		\\ 
No. of nodes		& 444 $\pm$ 7			& \textbf{368 $\pm$ 6}		& 369 $\pm$ 7 			\\  
Tot. info. gain		& \textbf{2.04e+04 $\pm$ 161}	& 1.98e+04 $\pm$ 228		& 1.97e+04 $\pm$ 272 		\\  
Tot. cost (m)		& 360.1 $\pm$ 5.2		& \textbf{309.9 $\pm$ 4.2}	& 311.6 $\pm$ 4.5 		\\  \midrule\midrule

\multicolumn{4}{c}{GP variance reduction (GPVR)} \\ \midrule
 
Plan. time (min)	& \textbf{6.20 $\pm$ 0.22}	& 7.03 $\pm$ 0.24		& 6.60 $\pm$ 0.22	 	\\ 
No. of samples		& 1745 $\pm$ 71			& 1393 $\pm$ 41			& \textbf{1248 $\pm$ 48}	\\ 
No. of nodes		& 636 $\pm$ 18			& 559 $\pm$ 13			& \textbf{523 $\pm$ 16}		\\  
Tot. info. gain		& \textbf{7.97e+03 $\pm$ 56}	& 7.80e+03 $\pm$ 71		& 7.83e+03 $\pm$ 62 		\\  
Tot. cost (m)		& 501.7 $\pm$ 11.2		& 445.0 $\pm$ 7.5		& \textbf{422.7 $\pm$ 9.8} 	\\  \midrule\midrule

\multicolumn{4}{c}{Uncertain GP variance reduction (UGPVR)} \\ \midrule
 
Plan. time (min)	& \textbf{20.52 $\pm$ 0.65}	& 26.3 $\pm$ 0.68	& 26.68 $\pm$ 0.54			\\ 
No. of samples		& 4994 $\pm$ 167		& 3661 $\pm$ 109	& \textbf{3385 $\pm$ 87}		\\ 
No. of nodes		& 1404 $\pm$ 35			& 1139 $\pm$ 25		& \textbf{1078 $\pm$ 19} 		\\
Tot. info. gain		& 3.17e+03 $\pm$ 19		& 3.25e+03 $\pm$ 21	& \textbf{3.27e+03 $\pm$ 21}		\\  
Tot. cost (m)		& 934.4 $\pm$ 22.3		& 765.2 $\pm$ 15.5	& \textbf{724.1 $\pm$ 12.3}		\\  \bottomrule
\end{tabular}
\label{tab:ifuncomprange}
\end{table}

In the first experiment, we use $10$, $20$, and $50$ sensor beams with the maximum sensor range fixed at $r_{max} = 5 \m$. The information functions used are MI (Algorithm~\ref{alg:funcmi2}), MIUB (Algorithm~\ref{alg:funcmiub}), GPVR (Algorithm~\ref{alg:gpvr}), and UGPVR (Algorithm~\ref{alg:ugpvr}), and the results are presented in Table~\ref{tab:ifuncompbeam}. Convergence is detected when the average of penalized relative information contribution $I_{RIC}$ over a window of size $30$ drops below the threshold $\delta_{RIC} = 5e-4$. The total information gain/cost is calculated using the sum of all edges information/costs. Therefore, it denotes the total information/cost over the searched space and not a particular path. This makes the results independent of the path selection algorithm.

\begin{figure}[t]
  \centering  
  \subfloat{
    \includegraphics[width=0.33\columnwidth,trim={1.5cm 1.5cm 1.5cm 1.5cm},clip]{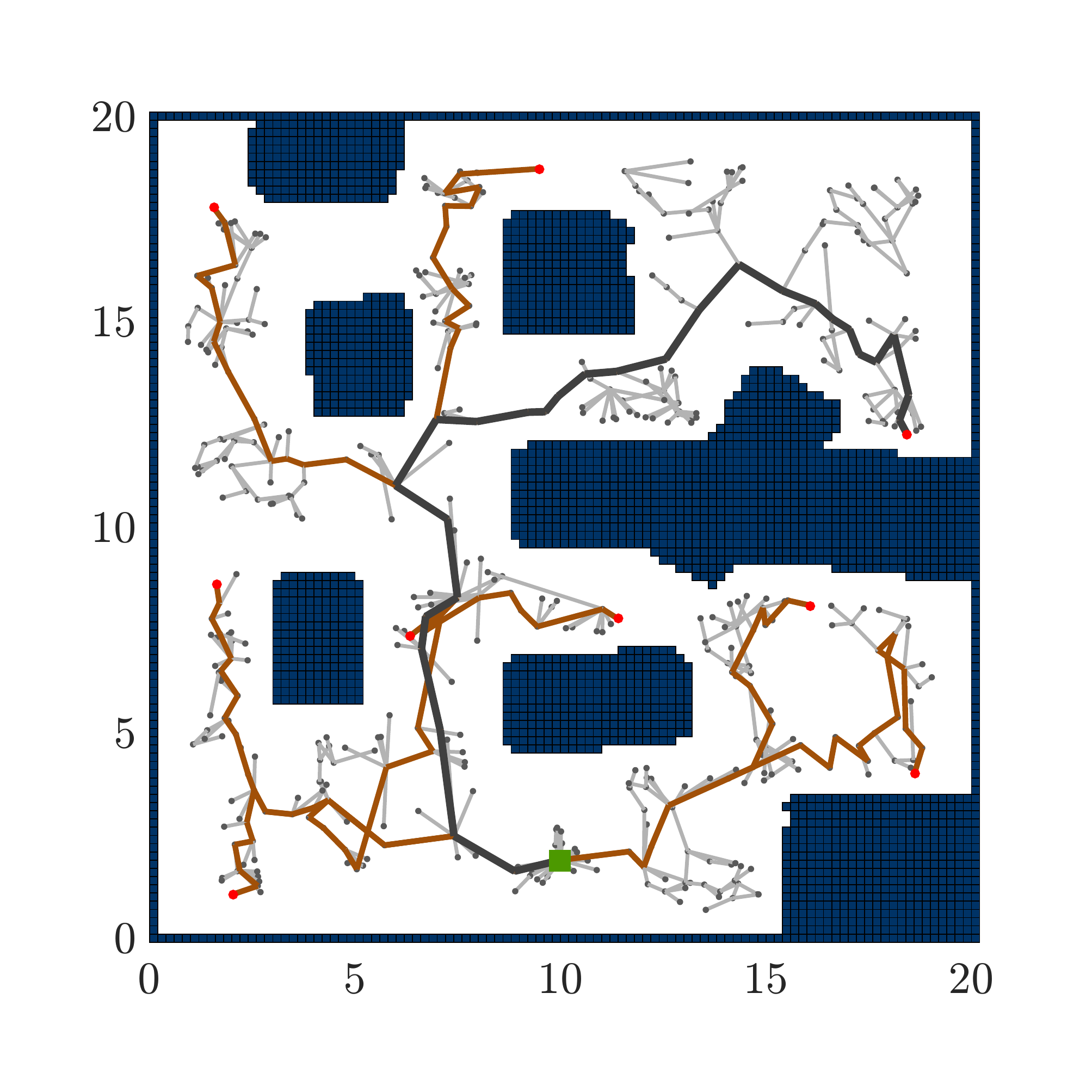}
    \label{fig:mi_01}
    }
  \subfloat{
    \includegraphics[width=0.33\columnwidth,trim={1.5cm 1.5cm 1.5cm 1.5cm},clip]{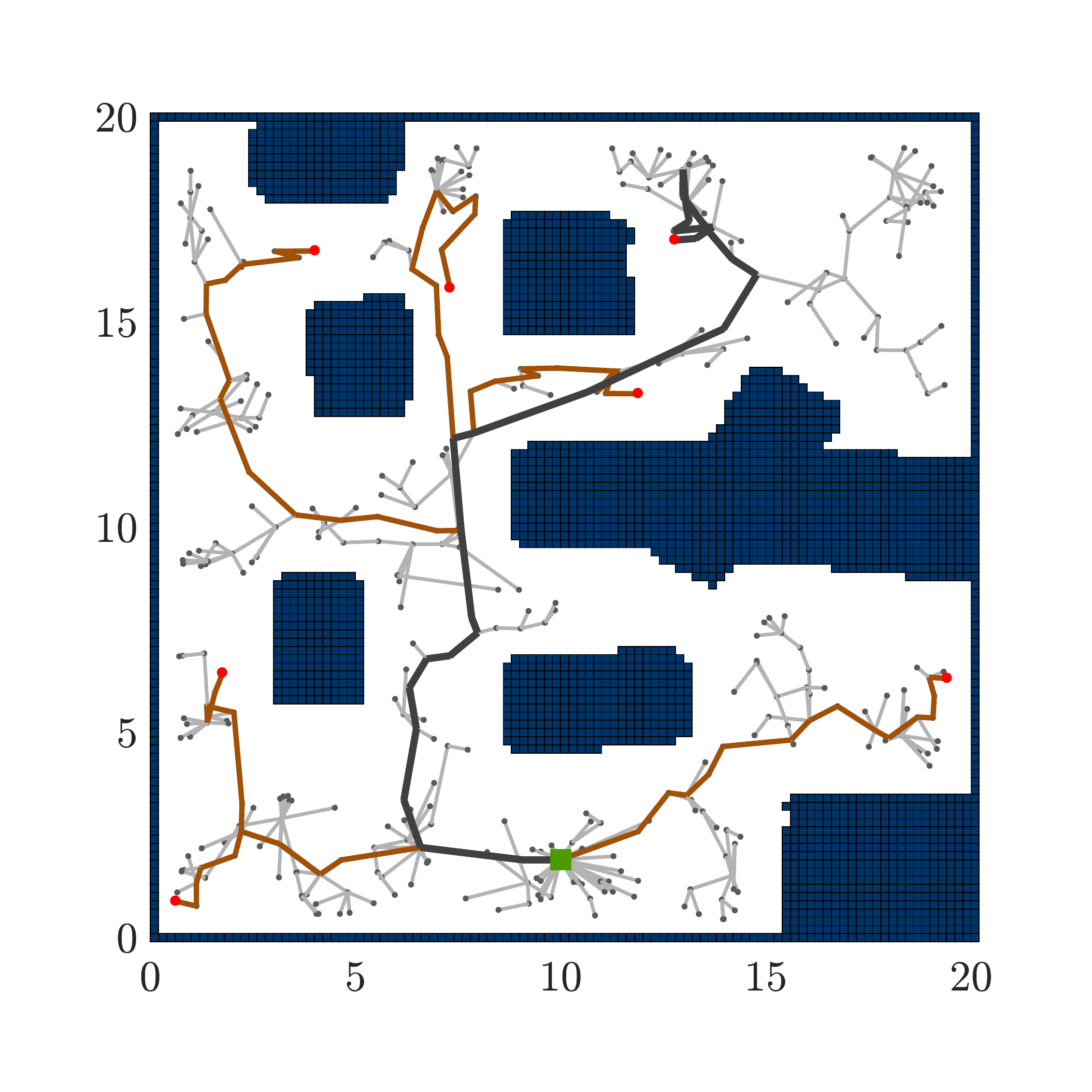}
    \label{fig:miub_01}
    }
  \subfloat{
    \includegraphics[width=0.33\columnwidth,trim={1.5cm 1.5cm 1.5cm 1.5cm},clip]{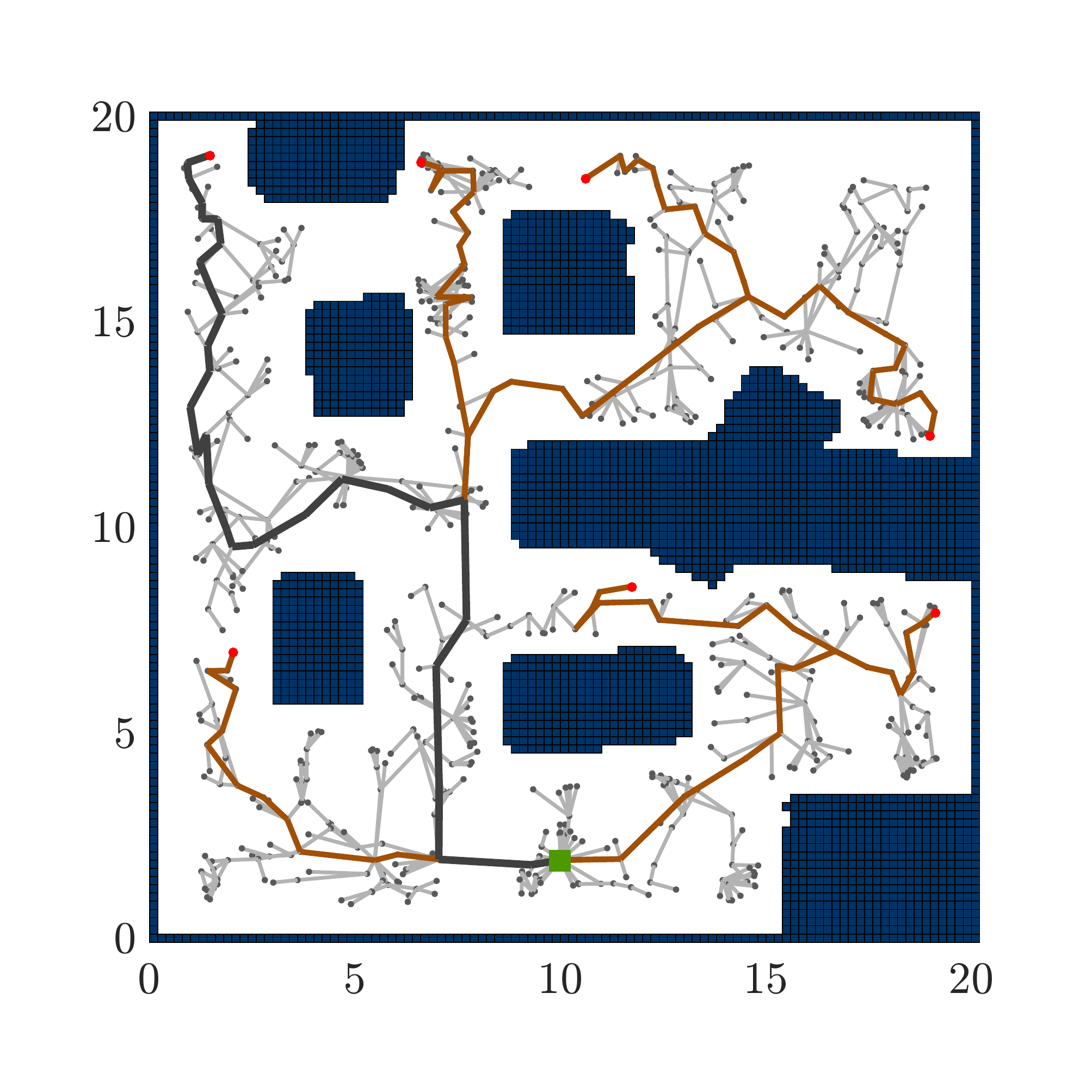}
    \label{fig:gpvr_01}
    }\\
  \subfloat{
    \includegraphics[width=0.33\columnwidth,trim={1.5cm 1.5cm 1.5cm 1.5cm},clip]{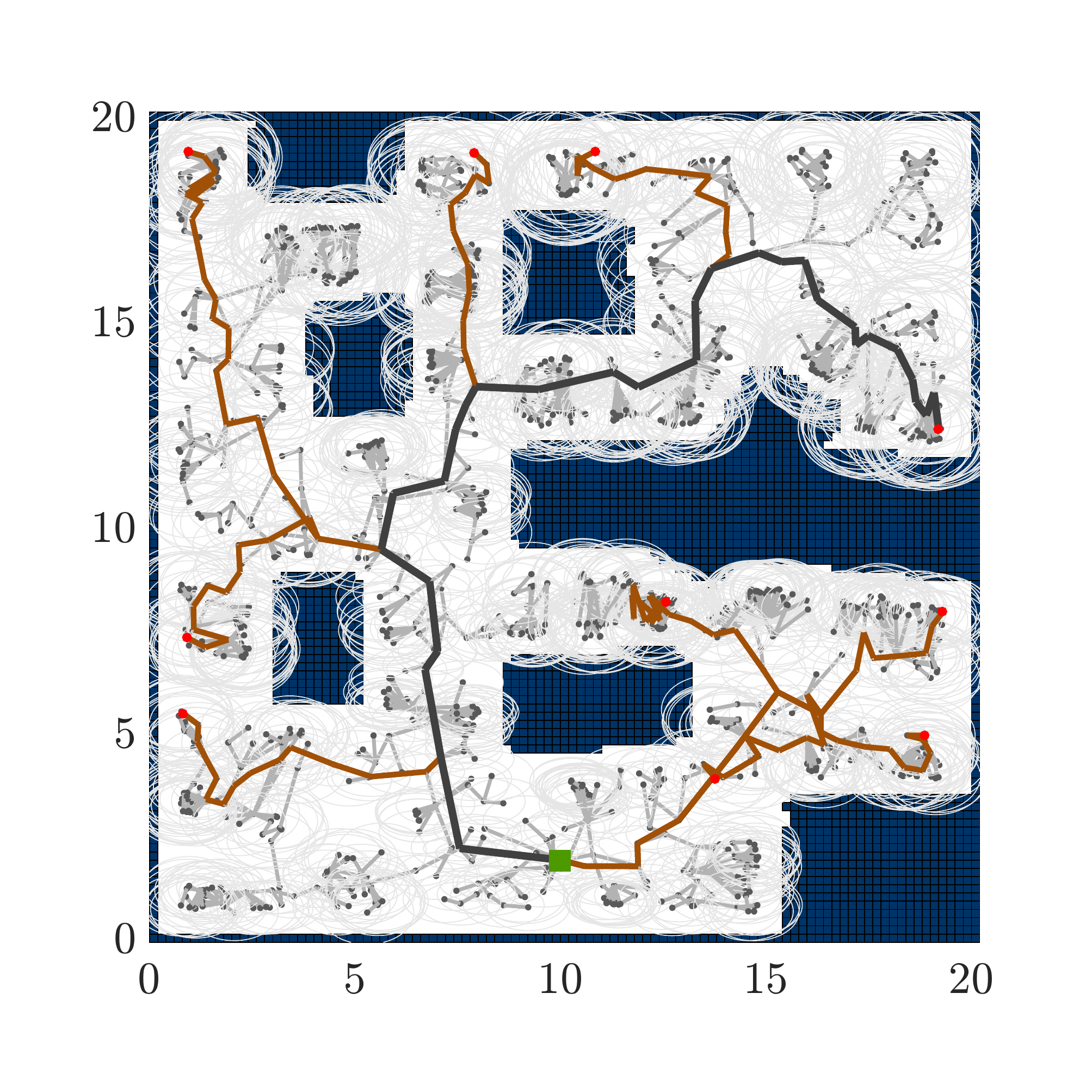}
    \label{fig:ugpvr_01}
    }
  \subfloat{
    \includegraphics[width=0.33\columnwidth,trim={0cm 0cm 1.5cm 1.cm},clip]{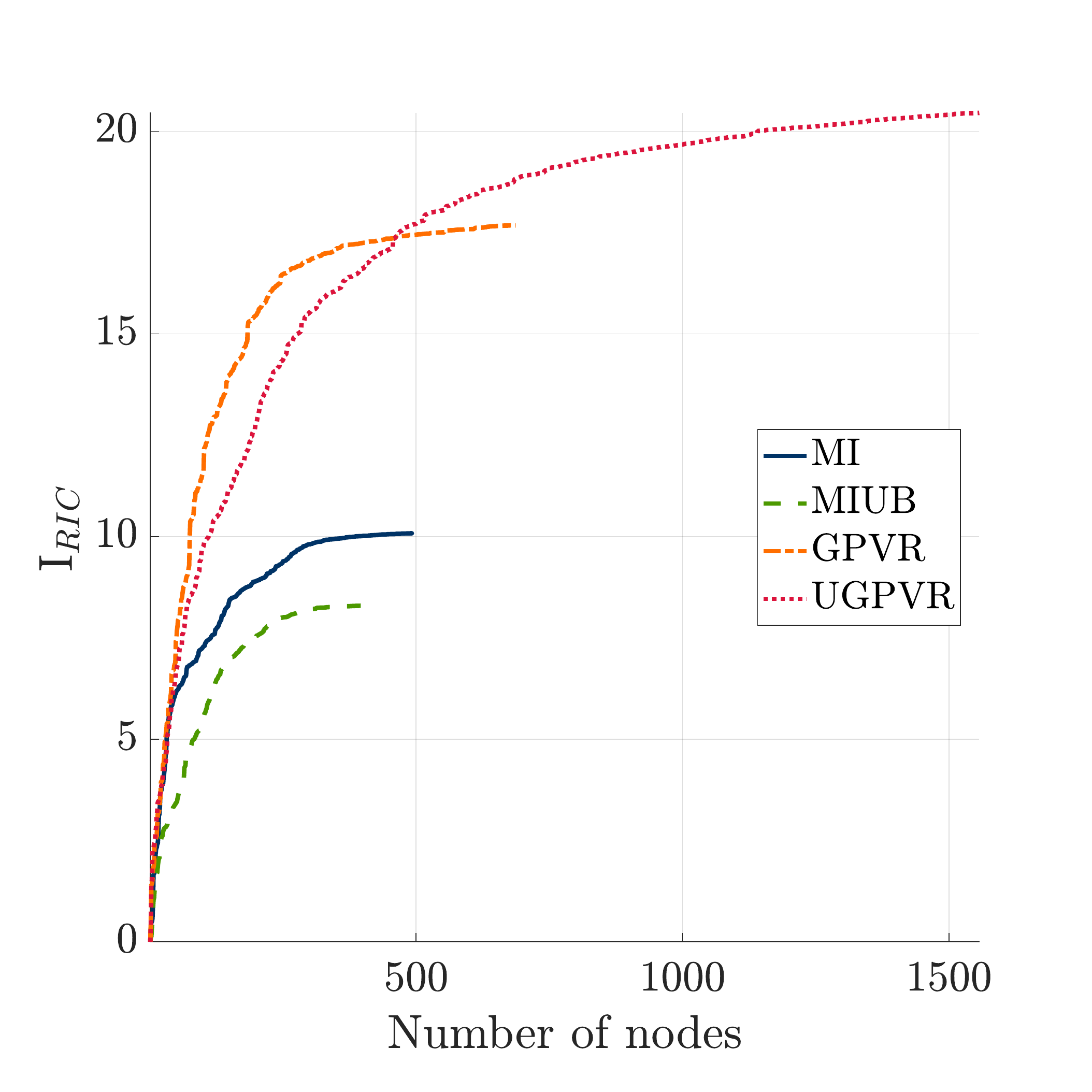}
    \label{fig:Iric_offline}
    }
  \subfloat{
    \includegraphics[width=0.33\columnwidth,trim={0cm 0cm 1.5cm 1.cm},clip]{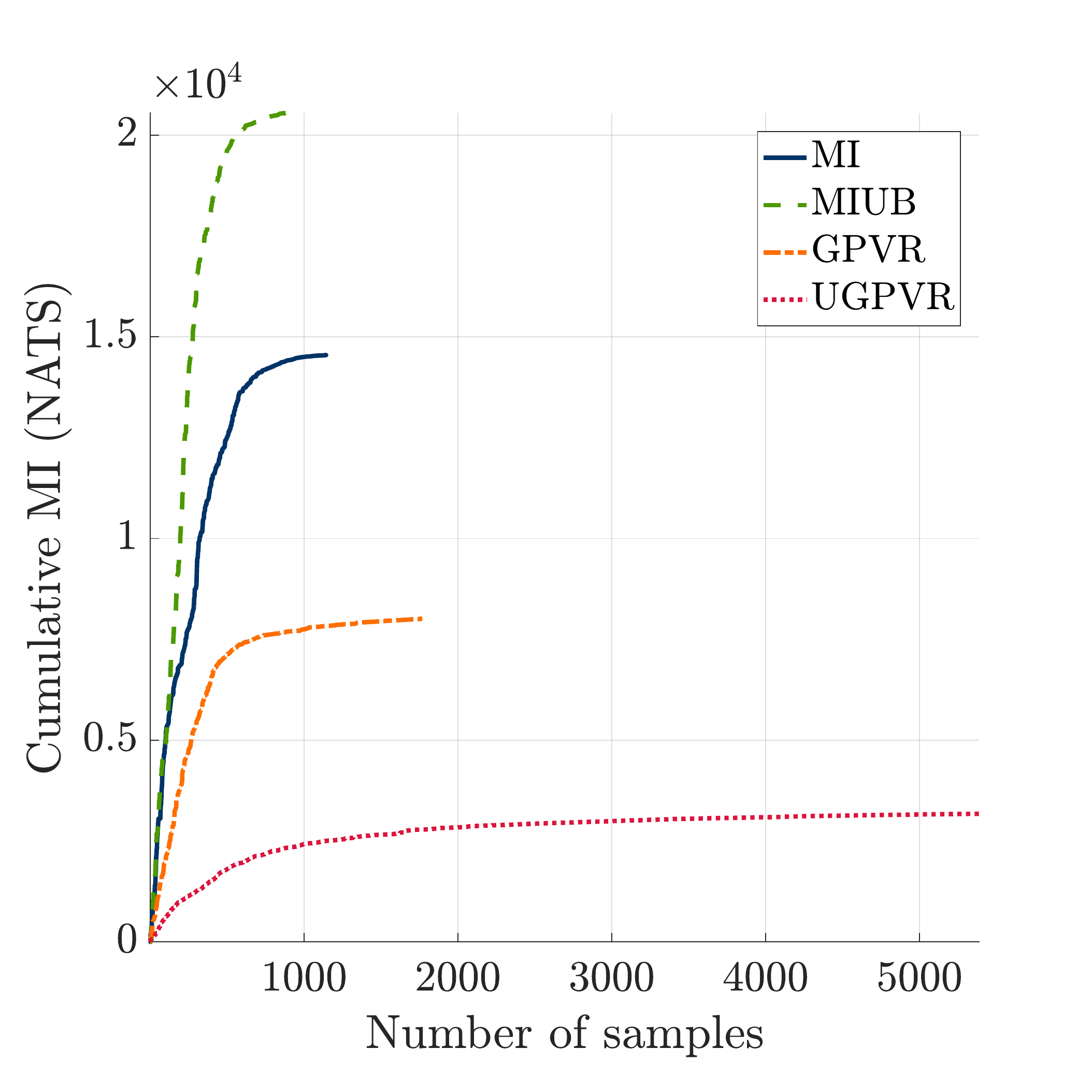}
    \label{fig:mi_offline}
    }
  \caption{An example of results from running the IIG-tree algorithm in the Cave map. From top left, the plots show the IIG-tree graph for MI, MIUB, GPVR, and UGPVR MI information functions. The selected paths computed using Algorithm~\ref{alg:pathselect} are also shown where the most informative path is shown using a darker color. MI and UGPVR show similar behaviors despite different quantitative outcomes. The last two plots in the bottom row, from left, show the convergence graph of the penalized relative information contribution $I_{RIC}$ and the evolution of the cumulative information gain for all information functions, respectively. The sensor range and the number of beams are set to $5\m$ and $10$, respectively.}
  \label{fig:offiig}
\end{figure}

From Table~\ref{tab:ifuncompbeam}, MIUB has the lowest runtime as expected, followed by MI, GPVR, and UGPVR, respectively. The calculation of MI can be more expensive than GPVR-based algorithms, but with sparse sensor observations (only $10$ beams) and a very coarse numerical integration resolution it performs faster. In particular, small number of sensor beams, given a sufficiently fine map resolution, satisfies the common assumption of independence between the noise in each individual measurement value~\citep{thrun2005probabilistic}. The conditional independence assumption of measurements for calculation of mutual information is also discussed in~\citet{charrow2014cooperative,charrow2015information}. Empirically, for MI function, we can observe in Table~\ref{tab:ifuncompbeam} that as the number of beams increases, the total information gain decreases. We note that increasing the number of beams lead to measurements becoming more correlated, thereby violating the conditional independence assumption between measurements. From Proposition~\ref{prop:miineq}, we know that the estimated mutual information is always less than the actual mutual information; therefore, increasing the number of beams, given the map resolution used in the experiments, reduces the MI estimation accuracy. Furthermore, while our proof of Proposition~\ref{prop:miineq} is for the case of Gaussian random variables, \emph{data-processing inequality}~\citep{cover2012elements} states that processing of measurements, such as those with simplified correlation models, deterministic or random, can not increase the information gain. It is important to note that observations presented above stem from the assumptions used and developed models and algorithms. Therefore these results should not be interpreted to indicate that in general using a sparse set of measurements the information gain calculation accuracy increases.

For all the compared algorithms, as the number of beams increases, i.e.{\@} taking more observations, the computations take longer, and the number of samples/nodes reduces. However, MI has the fastest convergence speed with, approximately, half of the number of samples taken by GPVR. Incorporating the pose uncertainty in UGPVR leads to a slower convergence. This can be explained as a result of the reduction in the information content of each set of observations by adding uncertainty to the information gain calculation. Another interpretation is that the UGPVR algorithm is, relatively speaking, less greedy or more conservative for information gathering. For GPVR-based information functions, increasing the number of sensor beams, leads to higher total information gain due to a larger training set $\mathcal{D}$. MI in comparison with MIUB has more realistic information gain estimation which is reflected in less total cost on average. In other words, the upper bound of the estimated MI, most likely, overestimate the actual information gain (Lemma~\ref{lem:miub}). The latter results also confirm the advantage of maximizing the mutual information rather than minimizing the entropy which is discussed in~\cite{krause2008near}.

Figure~\ref{fig:offiig} illustrates examples of IIG-tree results using different information functions in the Cave map. The sensor range and the number of beams are set to $5\m$ and $10$, respectively. The extracted paths are also shown and the most informative path is separated using a darker color. The convergence of $I_{RIC}$ for all information is shown in bottom middle plot, and the bottom right plot shows the cumulative information gain demonstrating its diminishing return over time (as the number of samples grows). The higher value of $I_{RIC}$ can be correlated with a denser IIG graph. Also, the MIUB essentially minimizes the entropy which tends to explore the boundary of the space, unlike MI that takes into account the perception field of the sensor. This behavior is also observed and discussed in~\citet[Figure 4]{krause2008near}. The GPVR and UGPVR curves have similar trends; however, due to the incorporated pose uncertainty the amount of information gain is remarkably lower which explains the longer tail of information gain evolution before the convergence of the algorithm and the higher relative information contribution from each node. In other words, farther nodes have higher pose uncertainties. Therefore the discrimination between farther and nearer nodes from relative information contribution is less than the case where the pose uncertainty is ignored.

In the second experiment, the number of beams is kept fixed at $n_z = 10$, but the sensor range is increased from $5\m$ to $20\m$. The results shown in Table~\ref{tab:ifuncomprange} are consistent with the previous test. The variations are smaller as the growth in the size of the observation set (training set for GPVR-based functions), due to increased range, is much smaller, mostly as a result of the geometry of the environment. Note that the first column of Table~\ref{tab:ifuncomprange} and first column of Table~\ref{tab:ifuncompbeam} are for two different runs with same parameters. The differences in planning time are due to the fact that the experiments were carried out in a high-performance computing facility where the reported time is affected by other users activities. Furthermore, the difference between $r_{max} = 10$ and $r_{max} = 20$ cases is marginal which shows increasing the sensor range more than $10 \m$ does not improve the information gathering process in this particular environment. 

Finally, a practical conclusion can be that increasing the sensor range and the number of beams (observations) increases the computational time but does not necessarily make the corresponding information function superior. As long as the approximation can capture the essence of information gain estimation consistently, the search algorithm performs well. Clearly the shape of the environment affects the result, e.g.{\@} the maximum and minimum perception field at different areas of the map. For the experiments in the following section, we select $n_z = 10$ and $r_{max} = 5 \m$ which lead to the fastest computational time with reasonable total information gain and cost values.

\subsection{Robotic exploration in unknown environments}
\label{subsec:oniigres}

In this section, we examine use of the proposed algorithms for autonomous robotic exploration in an unknown environment (Figure~\ref{fig:cave_iig_baxplot}). This requires solving the SLAM problem for localization possibly using observations to features in the environment. It is also necessary to build a dense map representation that shows occupied and unoccupied regions for planning and navigation. We solve the localization problem using Pose SLAM, the occupancy mapping using \mbox{I-GPOM}, and the planning using IIG-tree with different information functions. The \mbox{I-GPOM} models the occupancy map of the environment as GPs which can be used by IIG to implement the proposed GPVR-based information functions.

\begin{figure}[t]
  \centering 
  \includegraphics[width=0.5\columnwidth,trim={1.5cm 1.5cm 1.5cm 1.5cm},clip]{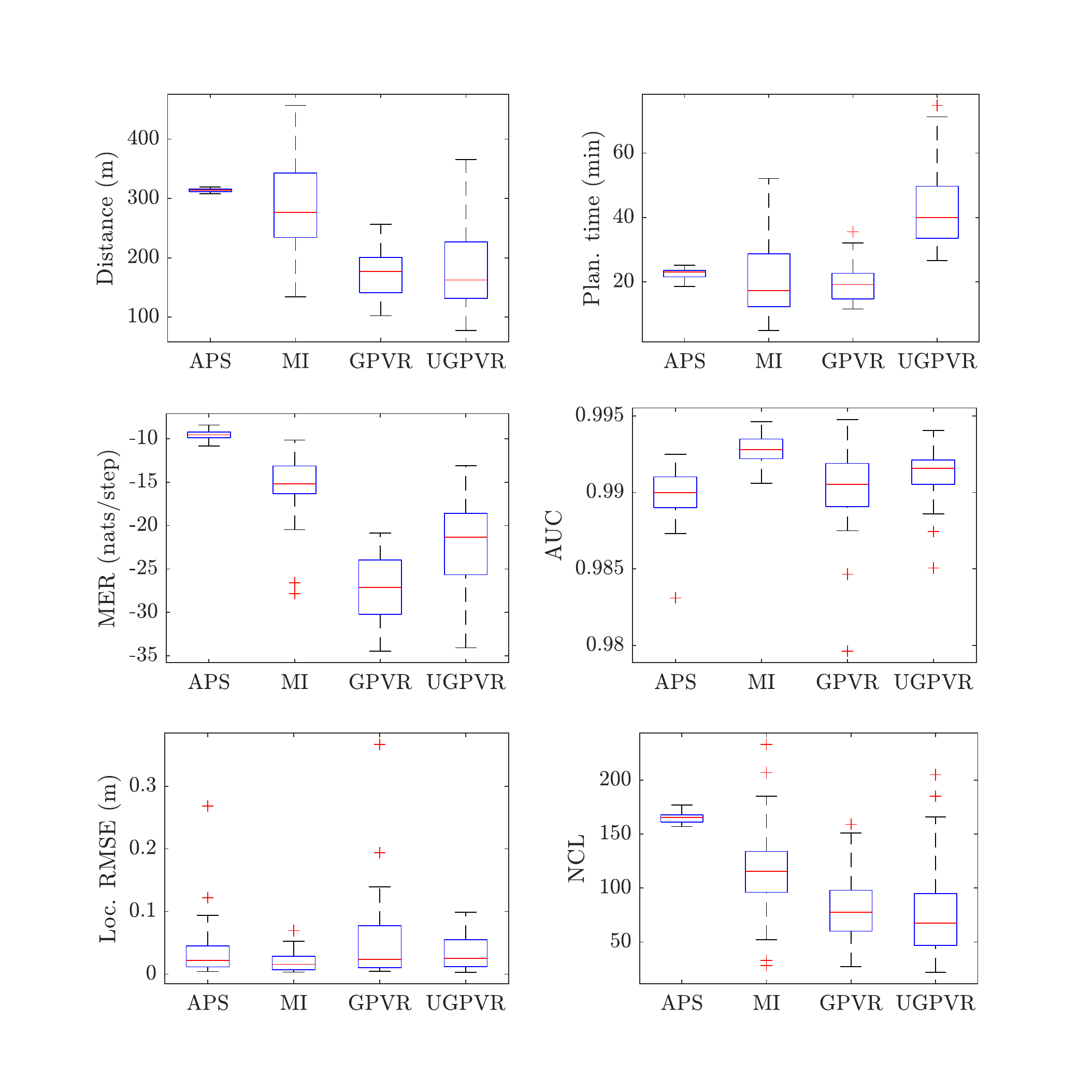}
  \caption{The box plots show the statistical summary of the comparison of different exploration strategies in the Cave dataset. The compared methods are APS, IIG-MI, IIG-GPVR, and IIG-UGPVR. The results are from $30$ independent exploration rounds for each technique. The time comparison is based on the planning time. Generally, APS is faster since the OGM is faster than I-GPOM; however, IIG-MI can also be implemented using OGMs.}
  \label{fig:cave_iig_baxplot}
\end{figure}

We also compare our results with Active Pose SLAM (APS)~\citep{valencia2012active} which is an information gain-based technique that considers explicit loop-closures by searching through nearby poses in the pose graph. As APS uses an Occupancy Grid Map (OGM)~\citep{moravec1985high,elfes1987sonar} for mapping and frontiers~\citep{yamauchi1997frontier} as exploration target candidates, we generate the equivalent \mbox{I-GPOM} using all poses and observations at the end of an experiment in order to compare mapping performance. Pose SLAM parameters were set and fixed regardless of the exploration method. The localization Root Mean Square Error (RMSE) was computed at the end of each experiment by the difference in the robot estimated and ground truth poses. The occupancy maps are compared using the Area Under the receiver operating characteristic Curve (AUC) as AUC is known to be more suitable for domains with skewed class distribution and unequal classification error costs~\citep{fawcett2006introduction}. The probability that the classifier ranks a randomly chosen positive instance higher than a randomly chosen negative instance can be understood using the AUC of the classifier. The AUC values of $0.5$ and $1$ correspond to random and ideal performance, respectively.

\begin{figure}[t]
  \centering 
  \includegraphics[width=0.5\columnwidth,trim={0cm 0cm 1.5cm 1cm},clip]{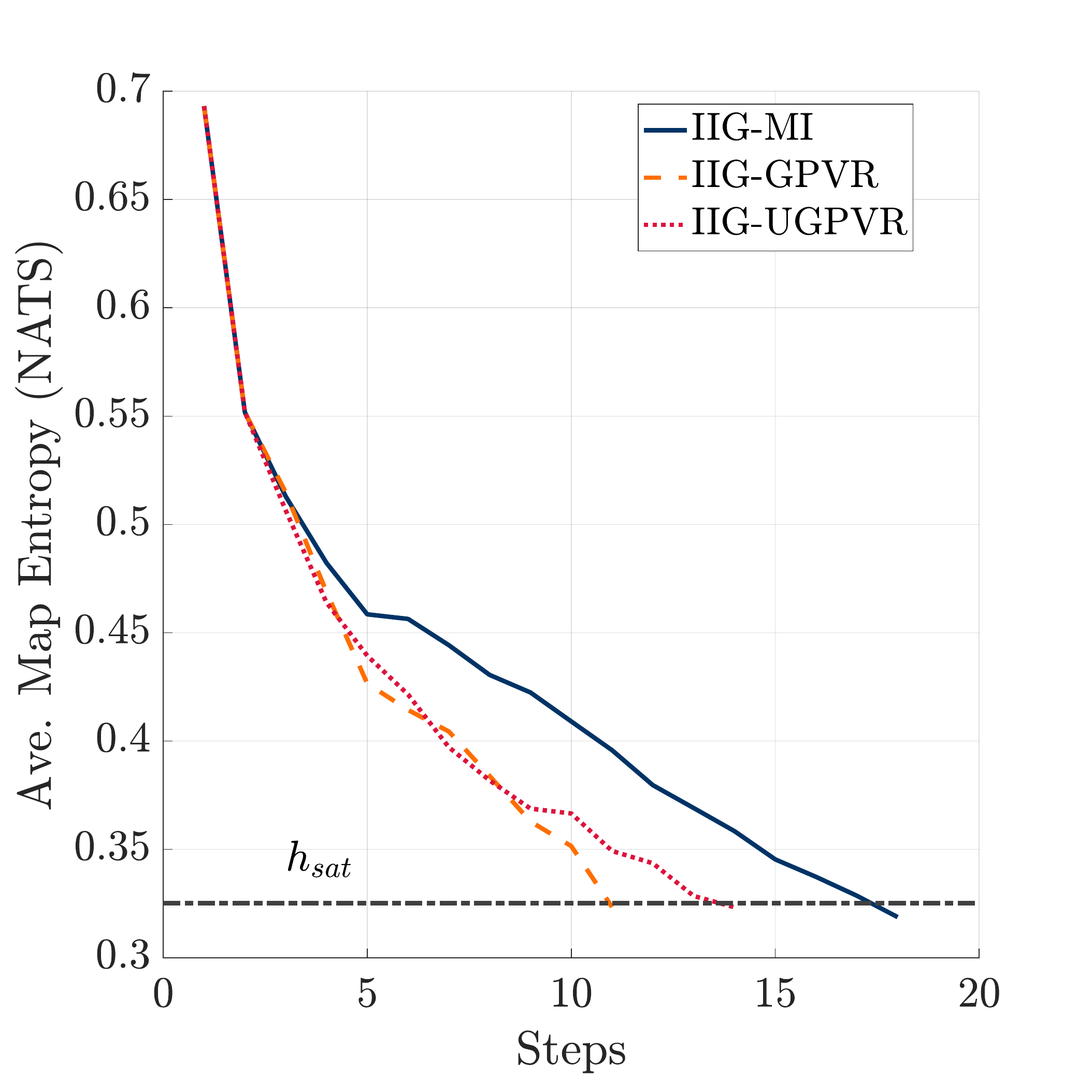}
  \caption{The evolution of the termination condition of different IIG exploration strategies in the Cave dataset. The curves are strongly related to the map entropy rate, i.e. a higher map entropy reduction rate results in faster termination of the mission. Unlike IIG, APS uses a geometric condition as the stopping criterion, and the mission was terminated when there was no cluster of geometric frontiers with the size larger than $12$ cells.}
  \label{fig:termcond_cave}
\end{figure}

Figure~\ref{fig:cave_iig_baxplot} shows the statistical summary of the results from exploration experiments in the Cave environment. The results are from $30$ exploration rounds in which each mission was terminated automatically using Corollary~\ref{coro:expterm} and saturation probability $p_{sat} = 0.1$ ($h_{sat} = 0.3251~\mathrm{nats}$). For APS, the mission was terminated when there was no cluster of geometric frontiers with the size larger than $12$ cells; in addition, any frontier closer than $2 \m$ to the robot was ignored to speed up the map exploration. The IIG-GPVR has the lowest travel distance, highest map entropy reduction rate, but larger localization error. IIG-UGPVR demonstrates a more conservative version of IIG-GPVR where the pose uncertainty propagation provides better localization and mapping performance at the expense of more planning time. This behavior is expected and shows the consistency between our problem definition and algorithmic development. We note that our time comparison is focused on the planning time with mapping implemented using I-GPOM. Mapping with OGM will speed up both APS and IIG-MI.

IIG-MI has the lowest localization error with a similar planning time on average. IIG-MI makes direct use of sensor model for information gain estimation. The fact that the sensor model for range-finders is an accurate model combined with the correlation between the map and robot pose leads to implicit pose uncertainty reduction. This result reveals the fundamental difference between mutual information approximation using a direct method (taking the expectation over future measurements) and GPs. However, it is important to note the ability of IIG using the GPVR-based functions to aggressively reduce the state estimation (map inference) uncertainty with fewer measurements (less loop-closures) without a significant undesirable effect on the robot localization.

\begin{figure}[t]
  \centering 
  \includegraphics[width=0.5\columnwidth,trim={0cm 0cm 1.5cm 1cm},clip]{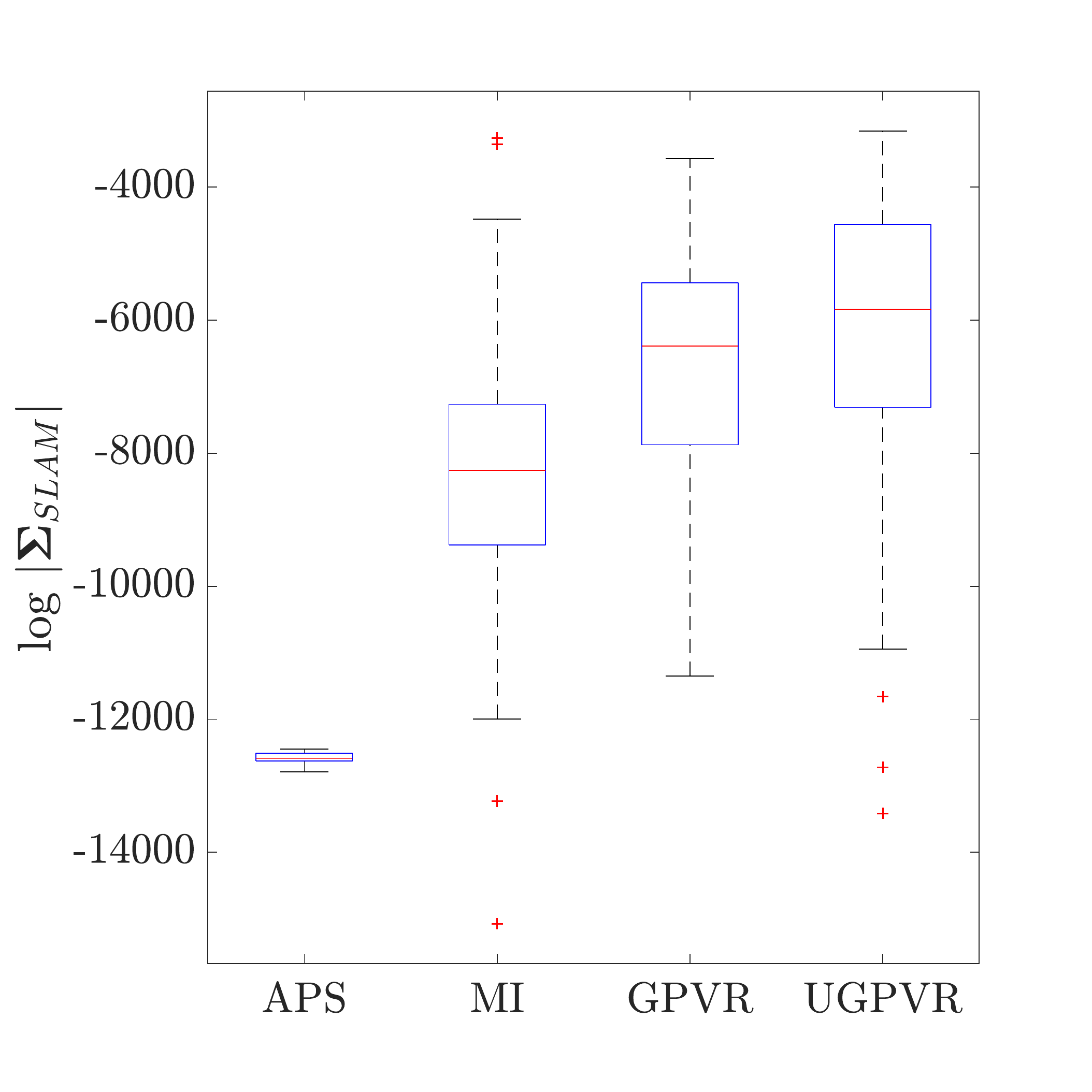}
  \caption{The statistical summary of the log of the determinant of the Pose SLAM covariance matrix at the end of each exploration run in the Cave dataset. The compared methods are, from left, APS, IIG-MI, IIG-GPVR, and IIG-UGPVR. IIG algorithms can terminate the information gathering process when there is sufficient information available for inference according to the requested saturation entropy (termination condition). Such an ability is not available in the APS framework since the robot has to complete the map based on its geometrical shape.}
  \label{fig:logdet_boxplot_cave}
\end{figure}

Figure~\ref{fig:termcond_cave} shows an example of the evolution of the average map entropy during the exploration mission. The robot terminated the exploration when the average map entropy dropped below the saturation entropy (Corollary~\ref{coro:expterm}). It is obvious that a technique with a faster map entropy reduction rate has achieved a faster convergence for the entire map exploration task. APS continues to explore until no frontier with ``significant size'' is left, which results in exhaustively searching many areas of the map multiple times. This results in a higher travel distance, more loop-closers, and lower map entropy reduction rate (higher number of steps). In contrast, IIG stops collecting measurements (information) when there is sufficient information available for inference, i.e. task completion. Therefore, IIG leads to more efficient management of robotic exploration and information gathering missions. Figure~\ref{fig:logdet_boxplot_cave} further demonstrates this effect by plotting the statistical summary of the log of the determinant of the Pose SLAM covariance matrix for the entire robot trajectory at the end of each exploration run.

\begin{figure}[t]
  \centering  
  \subfloat[]{
    \includegraphics[width=0.33\columnwidth]{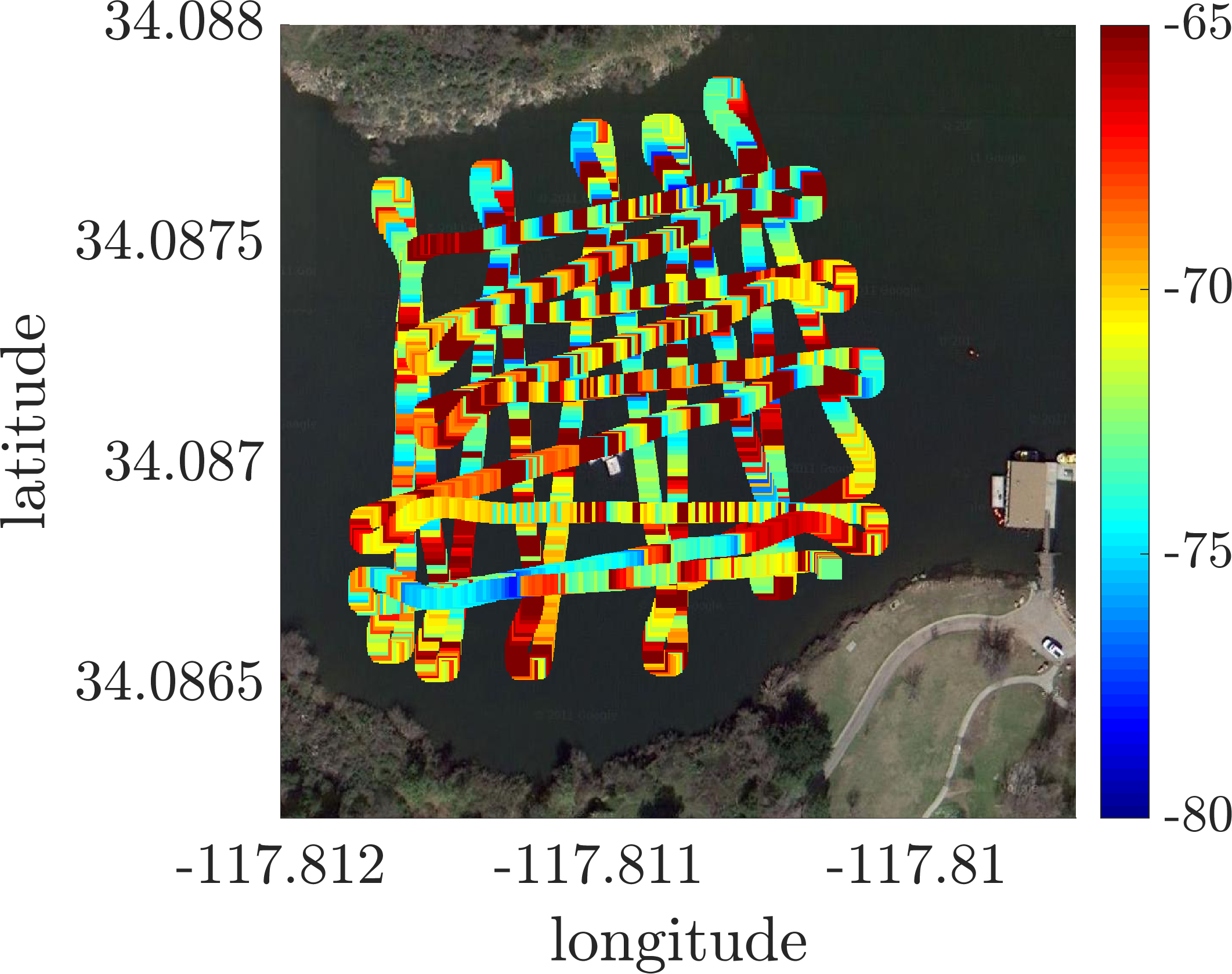}
    \label{fig:lake_sat}
    }
  \subfloat[]{
    \includegraphics[width=0.33\columnwidth]{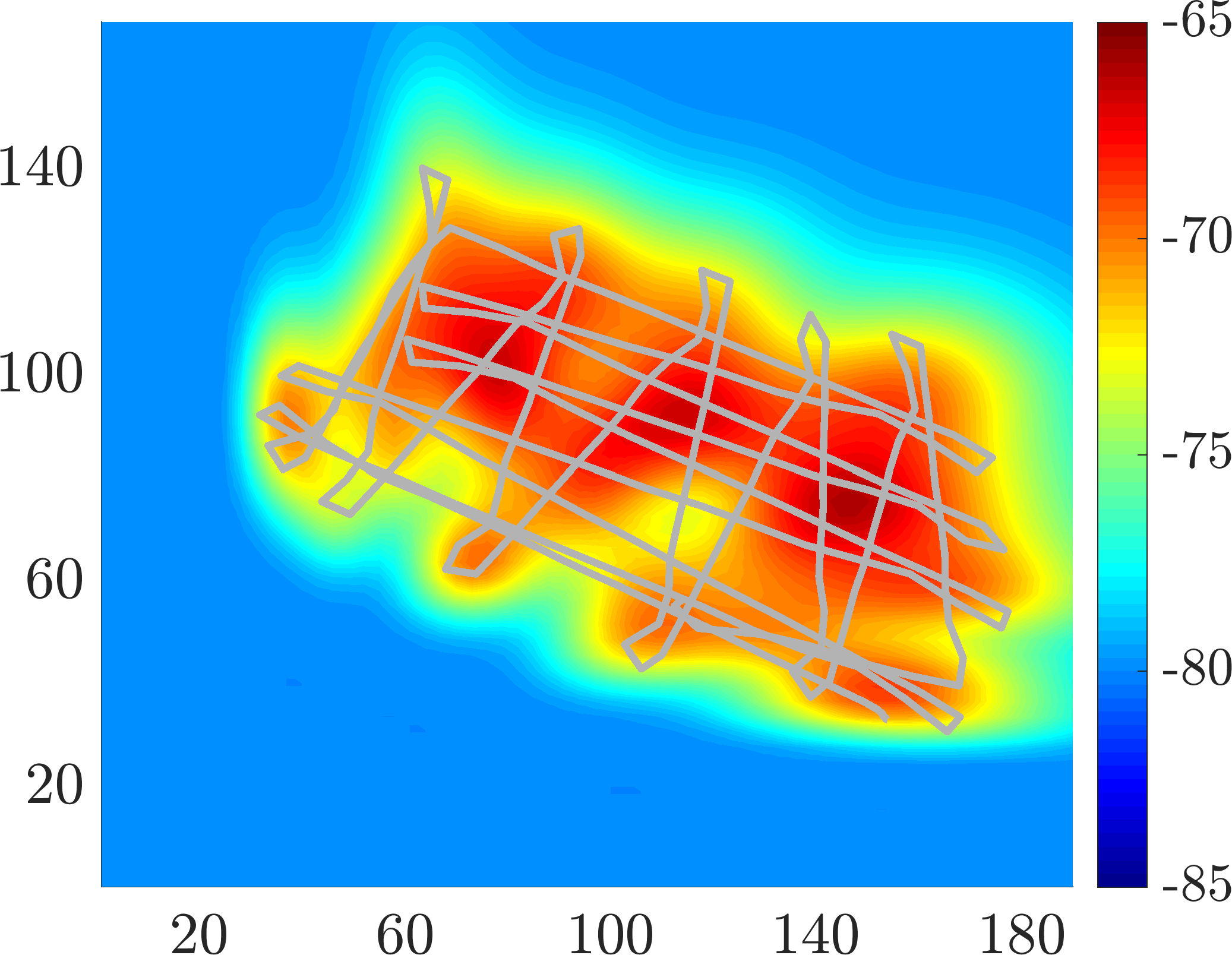}
    \label{fig:lake_gprp}
    }
  \subfloat[]{
    \includegraphics[width=0.33\columnwidth]{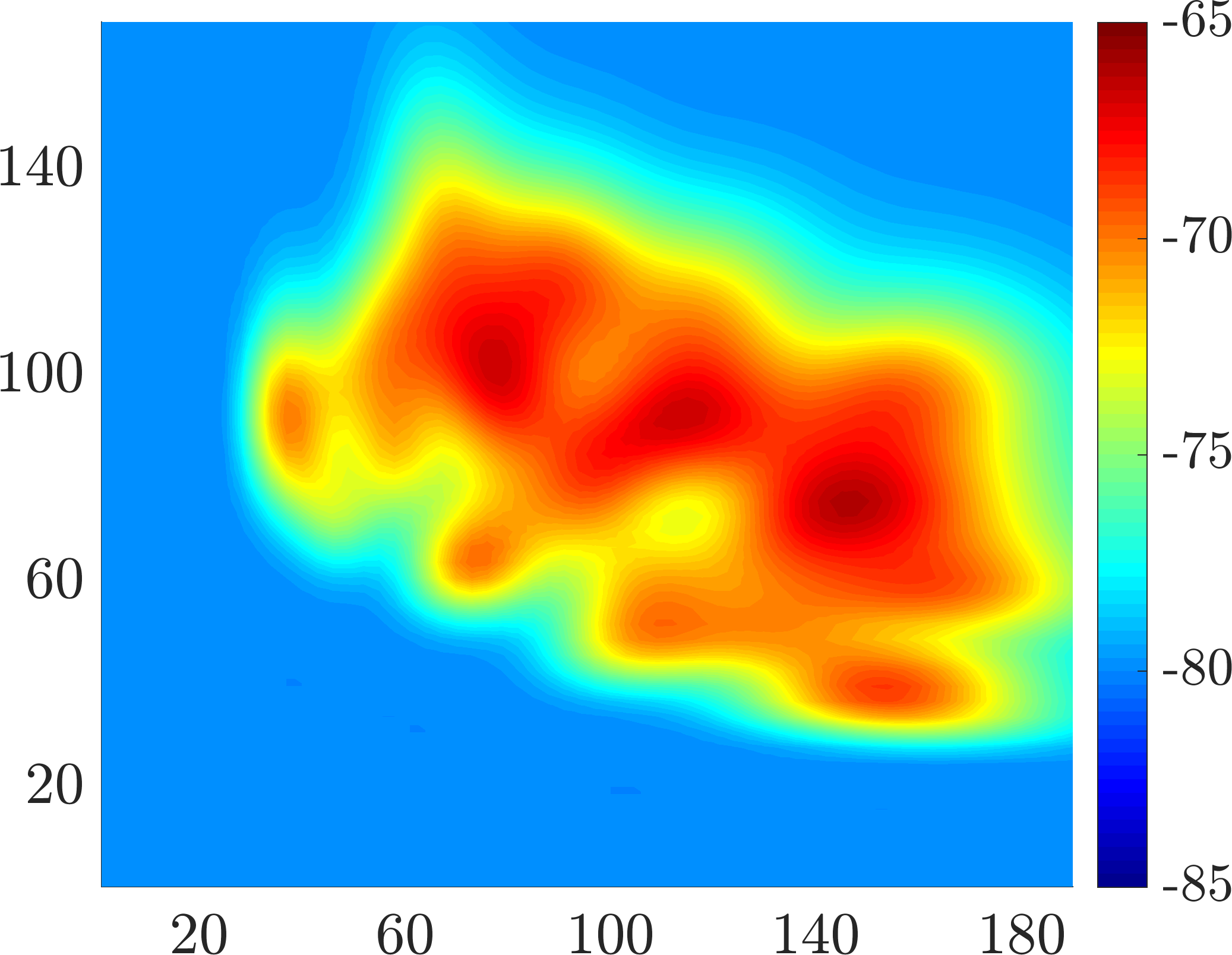}
    \label{fig:lake_gpr}
    }\\
  \subfloat[]{
    \includegraphics[width=0.33\columnwidth]{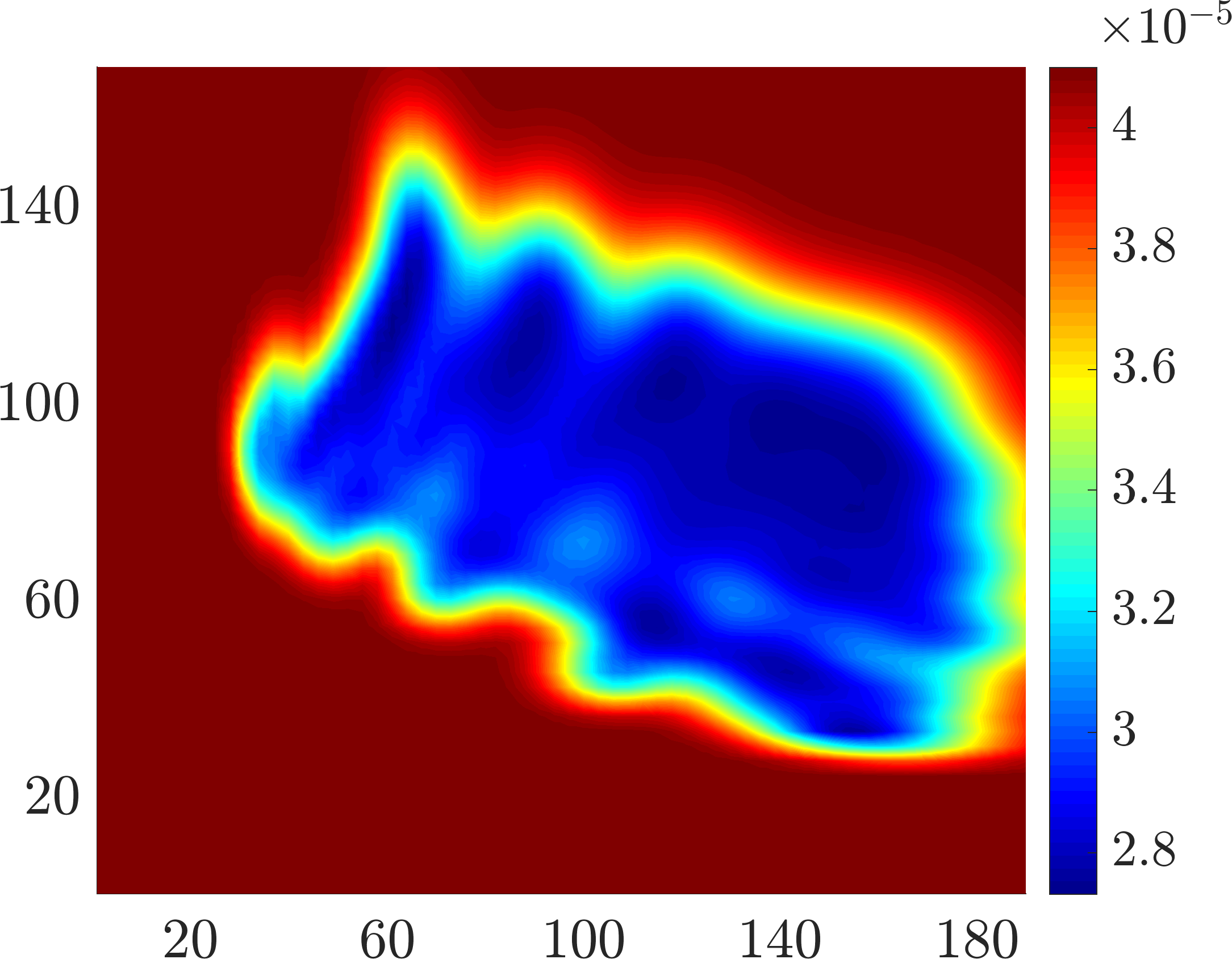}
    \label{fig:lake_cov}
    }
  \subfloat[]{
    \includegraphics[width=0.33\columnwidth]{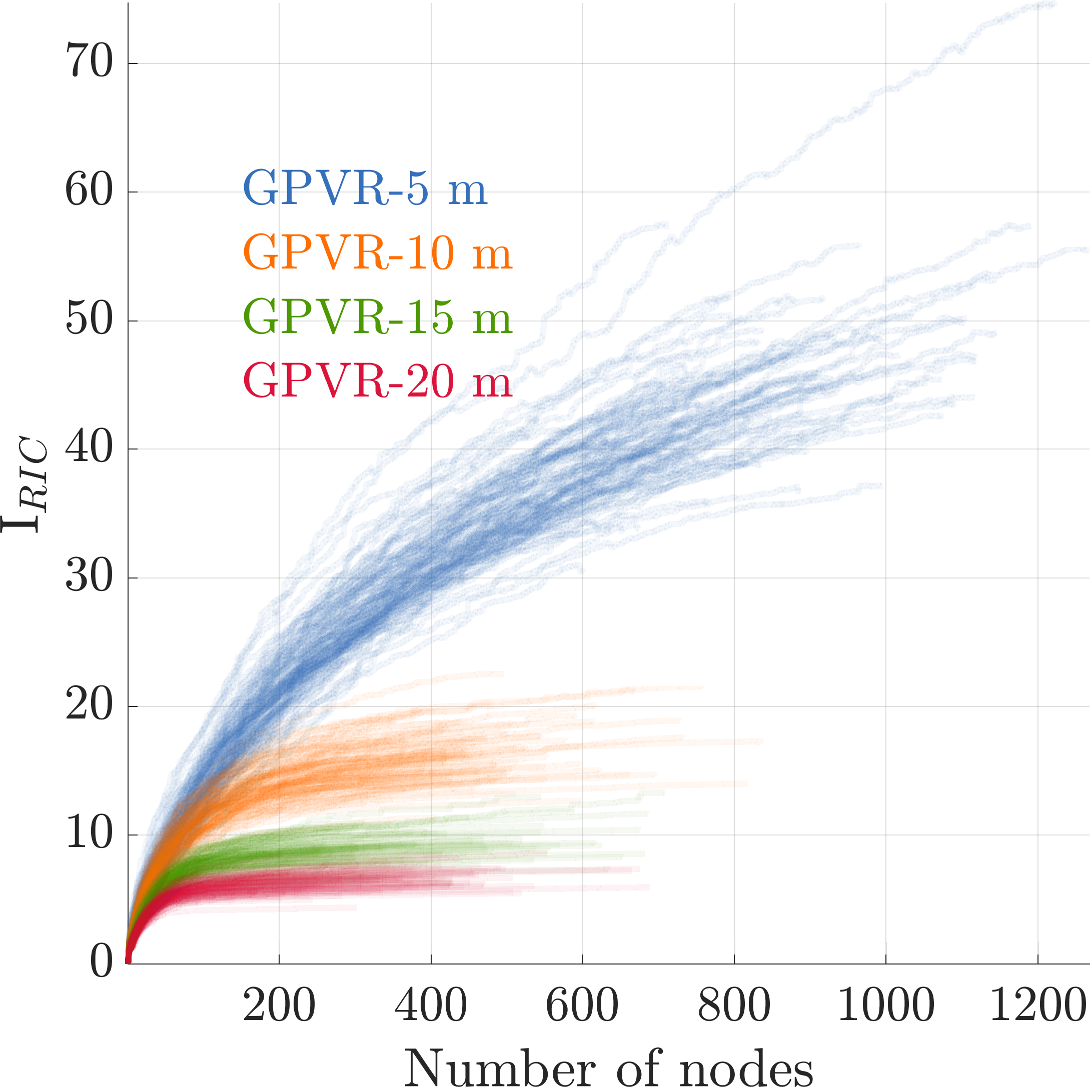}
    \label{fig:iric_gpvr_wifi}
    }
  \subfloat[]{
    \includegraphics[width=0.33\columnwidth]{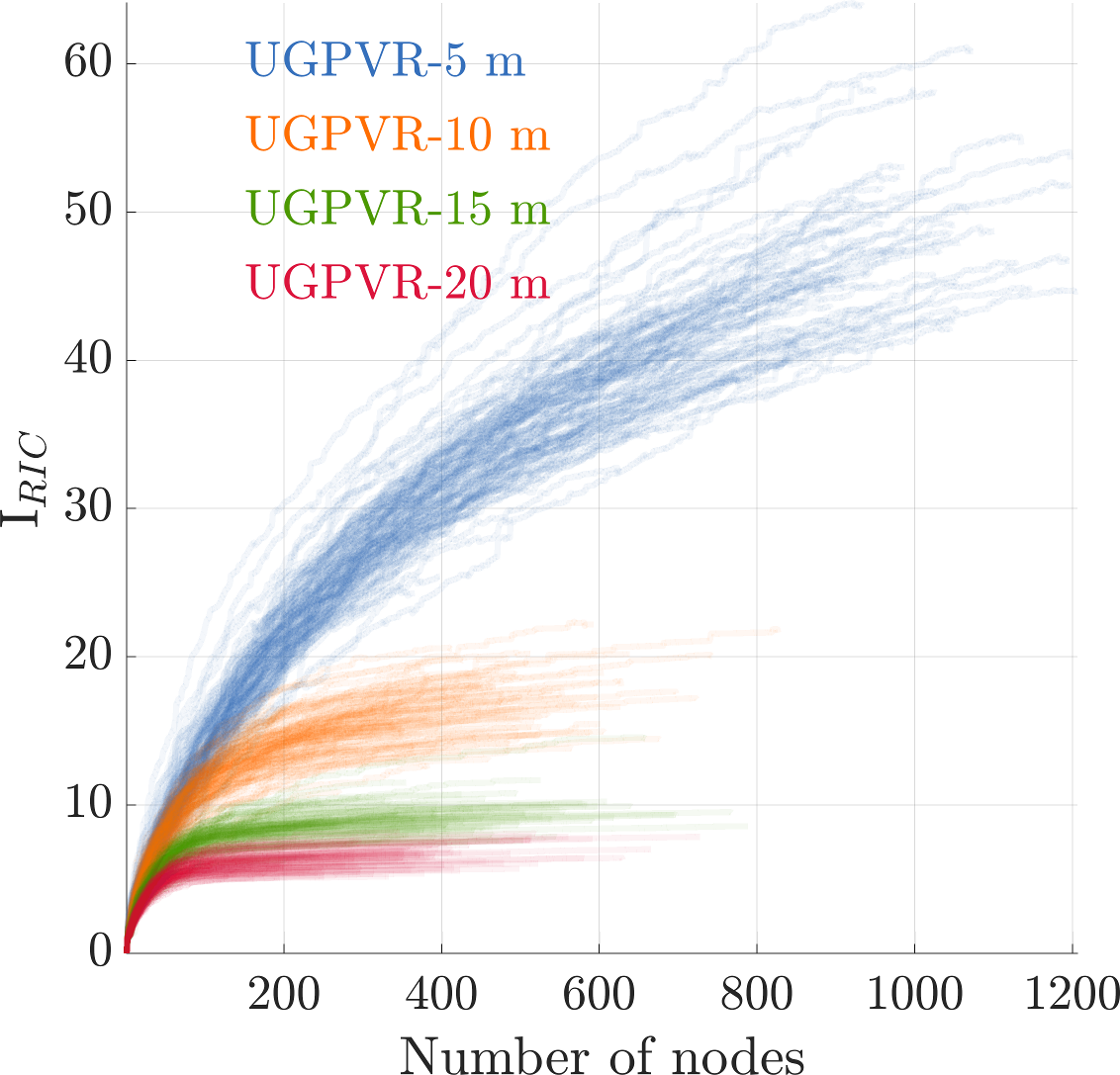}
    \label{fig:iric_ugpvr_wifi}
    }
  \caption{Lake monitoring scenario (a) satellite view of the lake and the survey trajectories using ASV, (b) the robot trajectories in metric scale on WSS map, (c) GP WSS mean surface (in $\dBm$), and (d) GP WSS covariance surface. The convergence results of IIG-GPVR and IIG-UGPVR using $5 \m$, $10 \m$, $15 \m$, and $20 \m$ sensing ranges are shown in (e) and (f), respectively. As the sensing range increases, the planner converges faster. The results are generated using $\delta_{RIC} = 5e-4$ and an unlimited budget (travel distance).}
  \label{fig:lakemap}
\end{figure}

All the proposed techniques achieve a lower number of loop-closures, while at the same time, on average, better localization and mapping performance as well as travel distance. APS explicitly searches for loop-closures by performing data association using the current measurement (laser scan) and previous scans with nearby poses available in the pose graph. This is a particularly expensive process and often limits the search area to small regions, especially, when the SLAM graph becomes less sparse due to the large number of loop-closures.
Lastly, we would like to emphasize that IIG-UGPVR is a fully integrated information-theoretic planner that takes full state estimate (the robot pose and map) uncertainty into account, it offers an automatic termination condition for the entire mission, has an adjustable planning horizon, $\delta_{RIC}$, and treats future measurements as random by taking expectations over them. In constrast, APS simplifies the information gain computation by resorting to a conditional independence assumption between the robot pose and the map. The true strength of APS for achieving comparable outcomes is through shifting the computational load towards the data association for predicting informative loop-closures along a planned trajectory. Therefore, the work presented in this paper is an alternative that is fundamentally different. Furthermore, the termination condition proposed in this article is general and can be easily incorporated into APS or other exploration techniques. For examples of qualitative results incluidng Pose SLAM, I-GPOM, and OGM plots see~\cite{jadidi2016gaussian,ghaffari2017gaussian}.

\subsection{Lake monitoring experiment}
\label{subsec:lakemon}

In this experiment, we demonstrate the performance of IIGs in a lake monitoring scenario. The ASV can localize using a GPS unit and a Doppler Velocity Log. The communication with the ground station is through a 802.11 wireless connection and at any location the Wireless Signal Strength (WSS) can be measured in $\dBm$. The Puddingstone Lake dataset is a publicly available dataset that includes about 2700 observations. The dataset is collected through a full survey of the lake area located at Puddingstone Lake in San Dimas, CA (Lat. $34.088854$\textdegree, Lon. $-117.810667$\textdegree)~\citep{hollinger2014sampling}. The objective is to find a trajectory for the robot to maintain a strong connectivity with the base station while taking physical samples in the lake.

Figure~\ref{fig:lakemap} shows the satellite view of the lake area together with the survey trajectories and regressed maps
that are used as a proxy for ground truth. Figure~\ref{fig:lake_gprp} shows the survey trajectories on the WSS surface where the longitudes and latitudes are converted to their corresponding distances using the haversine formula. Figures~\ref{fig:lake_gpr} and \ref{fig:lake_cov} illustrate the GP WSS mean and covariance surfaces, respectively. The ground truth map of WSS is built using GP regression with a constant mean function, SE covariance function with automatic relevance determination~\citep{neal1996bayesian}, and a Gaussian likelihood function which makes the exact inference possible. Furthermore, it is well-known that, in line of sight scenarios, radio signals propagation can be characterized based on Friis free space model~\citep{rappaport1996wireless,goldsmith2005wireless}. In this model, the signal attenuation is proportional to the logarithm of the distance. Therefore, to improve the regression accuracy we use logarithmic scales for input points during GP training and inference phases. The number of training points was down-sampled to 267 observations and the surface was inferred using 3648 query points. 

We store the GP output using a $k$d-tree data structure to be able to perform fast online nearest neighbor inquiries within a sensing range at any location in the map. In order to simulate the WSS monitoring experiment, at any location the robot can take measurements within a sensing range from the ground truth GP maps using \texttt{Near} function. This step is the measurement prediction in line~\ref{line:gpvrknnb} of Algorithms~\ref{alg:gpvr} and \ref{alg:ugpvr} and the training set, $\mathcal{D}$ is the output of \texttt{Near} function. As training points are already part of the map, the queried sub-map coincides with the training set, i.e. $\mathcal{D} = \mathcal{M}_{\mathcal{D}}$. For information functions, we used Algorithms~\ref{alg:gpvr} and \ref{alg:ugpvr} as they are natural choices for scenarios involving spatial phenomenas and environmental monitoring. The modified kernel in Algorithm~\ref{alg:ugpvr} was calculated using Gauss-Hermite quadrature with 11 sample points. The robot pose covariance was approximated using the uncertainty propagation and local linearization of the robot motion model~\footnote{The pose uncertainties are not part of the original dataset. Hence, we calculate them by simulation.}. 

\begin{table}[t]
\footnotesize
\centering
\caption{Lake monitoring experiments using IIG with GPVR and UGPVR information functions. For the comparison, the sensing range is varied from $5\m$ to $20\m$. The results are averaged over $100$ runs (mean $\pm$ standard error).
For all experiments $\delta_{RIC} = 5e-4$.}
\begin{tabular}{lcccc}
\toprule
Range (m)		& \multicolumn{1}{c}{5} & \multicolumn{1}{c}{10} & \multicolumn{1}{c}{15}	& \multicolumn{1}{c}{20} \\ \midrule
\multicolumn{5}{c}{IIG-GPVR} \\ \midrule
Time (sec)			& 5.26 $\pm$ 0.16		& 2.92 $\pm$ 0.08		& 2.38 $\pm$ 0.09		& \textbf{2.32 $\pm$ 0.07}		\\ 
RMSE (dBm)			& 4.54 $\pm$ 0.05		& 4.28 $\pm$ 0.06		& 3.97 $\pm$ 0.06		& \textbf{3.61 $\pm$ 0.07}		\\ 
No. of samples			& 830.6 $\pm$ 18.7		& 507.9 $\pm$ 13.0		& 437.0 $\pm$ 15.6		& \textbf{428.1 $\pm$ 12.9}		\\  
No. of nodes			& 826.1 $\pm$ 18.7		& 454.5 $\pm$ 12.8		& 359.9 $\pm$ 14.5 		& \textbf{331.6 $\pm$ 12.7}		\\  
Tot. info. gain			& 2.08e+04 $\pm$ 160		& 2.42e+04 $\pm$ 111		& 2.62e+04 $\pm$ 133 		& \textbf{2.77e+04 $\pm$ 130}	\\  
Tot. cost (Km)			& 6.77 $\pm$ 0.11		& 4.50 $\pm$ 0.09		& 3.67 $\pm$ 0.11 		& \textbf{3.42 $\pm$ 0.10}		\\  \midrule\midrule

\multicolumn{5}{c}{IIG-UGPVR} \\ \midrule 
Time (sec)			& 33.98 $\pm$ 1.30		& \textbf{25.93 $\pm$ 0.67}	& 29.26 $\pm$ 0.98		& 53.69 $\pm$ 1.61			\\ 
RMSE (dBm)			& 4.55 $\pm$ 0.05		& 4.26 $\pm$ 0.05		& 4.00 $\pm$ 0.06		& \textbf{3.56 $\pm$ 0.07}		\\ 
No. of samples			& 844.8 $\pm$ 19.8		& 496.0 $\pm$ 12.3		& 467.4 $\pm$ 15.4		& \textbf{413.0 $\pm$ 13.3}		\\ 
No. of nodes			& 839.2 $\pm$ 19.7		& 447.9 $\pm$ 12.4		& 389.1 $\pm$ 14.9 		& \textbf{325.8 $\pm$ 12.8}		\\  
Tot. info. gain			& 2.12e+04 $\pm$ 176		& 2.49e+04 $\pm$ 107		& 2.69e+04 $\pm$ 120 		& \textbf{2.80e+04 $\pm$ 113}	\\  
Tot. cost (Km)			& 6.81 $\pm$ 0.13		& 4.46 $\pm$  0.09		& 3.91 $\pm$ 0.11 		& \textbf{3.39 $\pm$ 0.10}			\\ \bottomrule
\end{tabular}
\label{tab:lakecomp}
\end{table}

\begin{figure}[t!]
  \centering  
  \subfloat{
    \includegraphics[width=0.5\columnwidth]{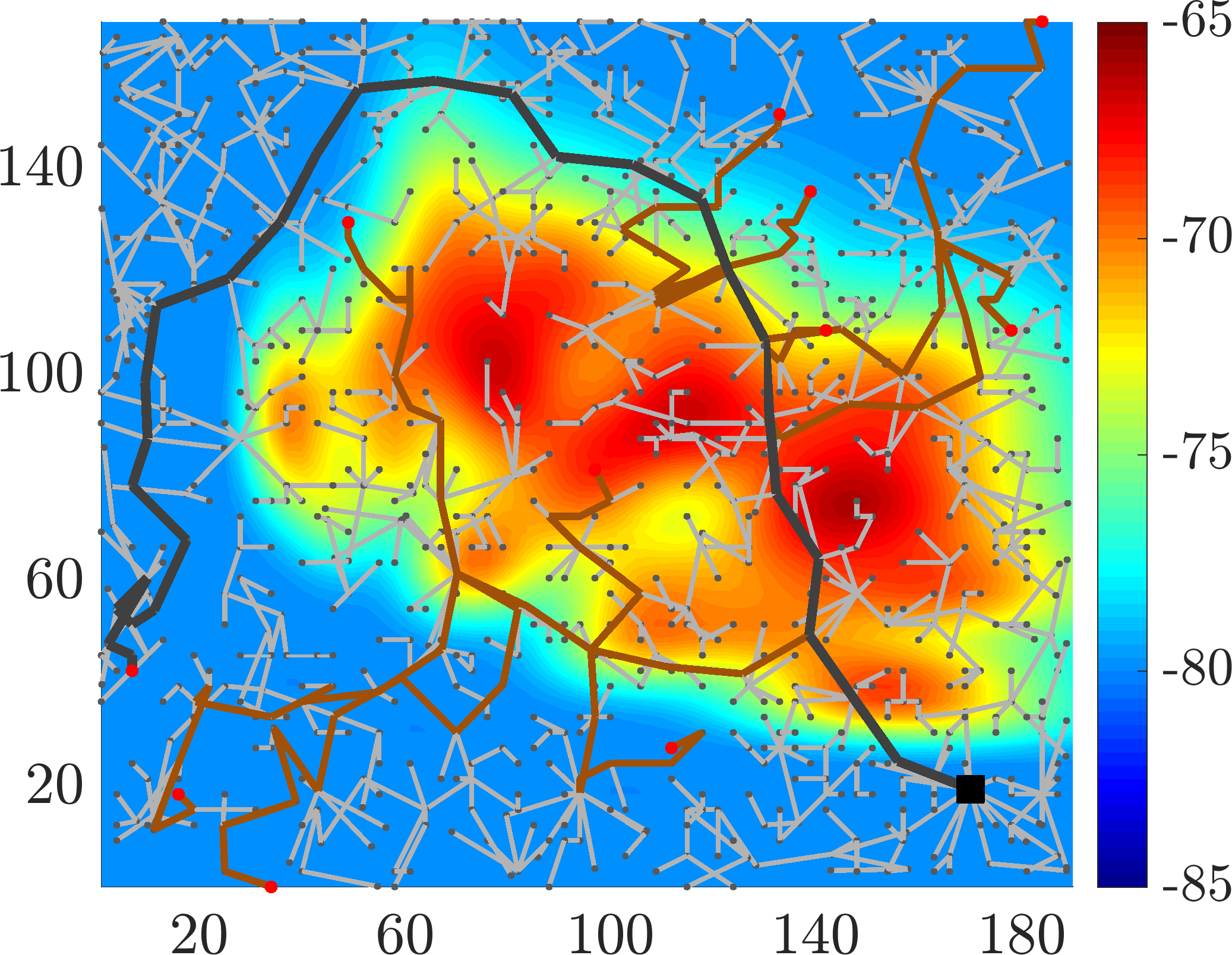}
    \label{fig:lake_gpvr_r5}
    }
  \subfloat{
    \includegraphics[width=0.5\columnwidth]{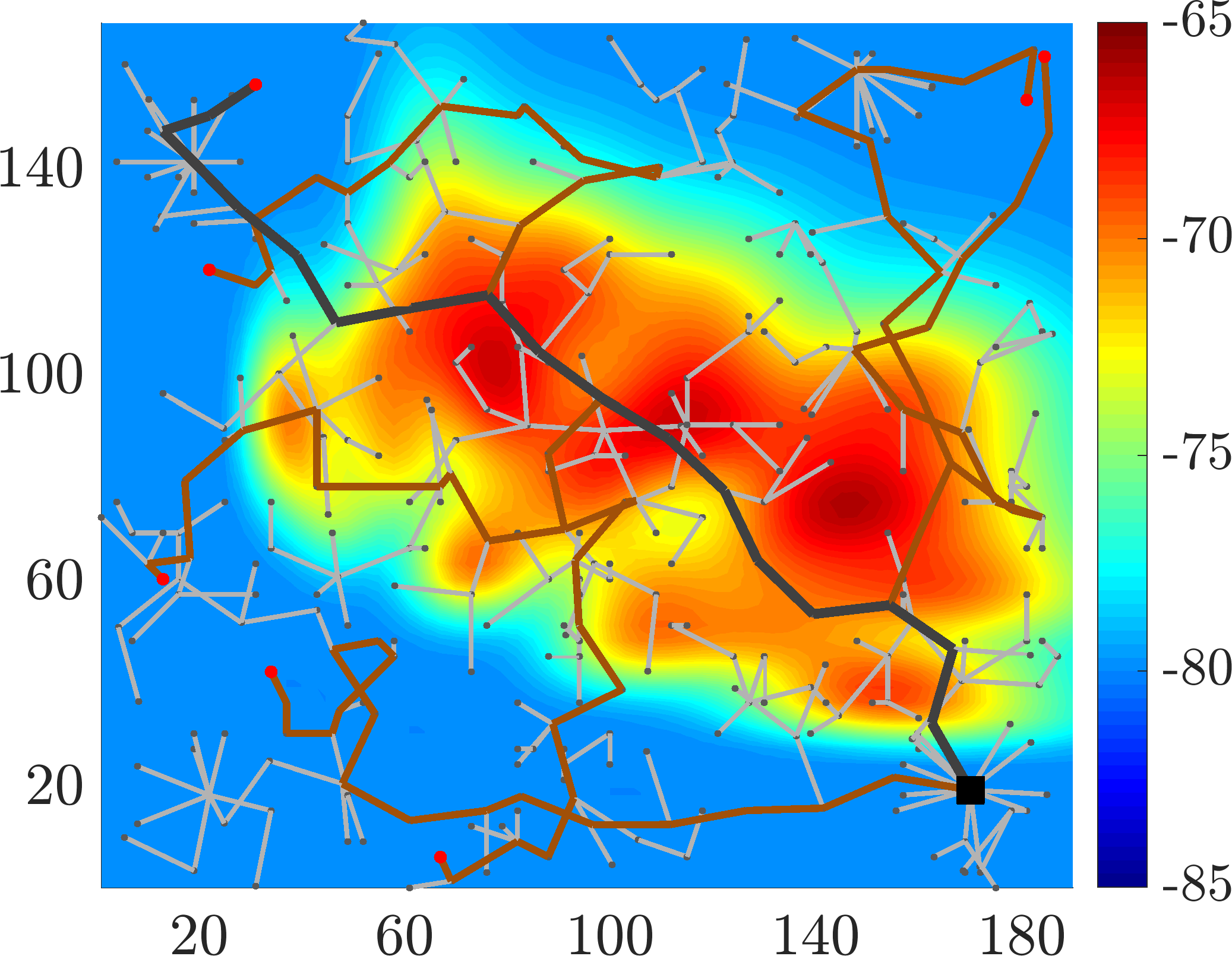}
    \label{fig:lake_gpvr_r10}
    }\\
  \subfloat{
    \includegraphics[width=0.5\columnwidth]{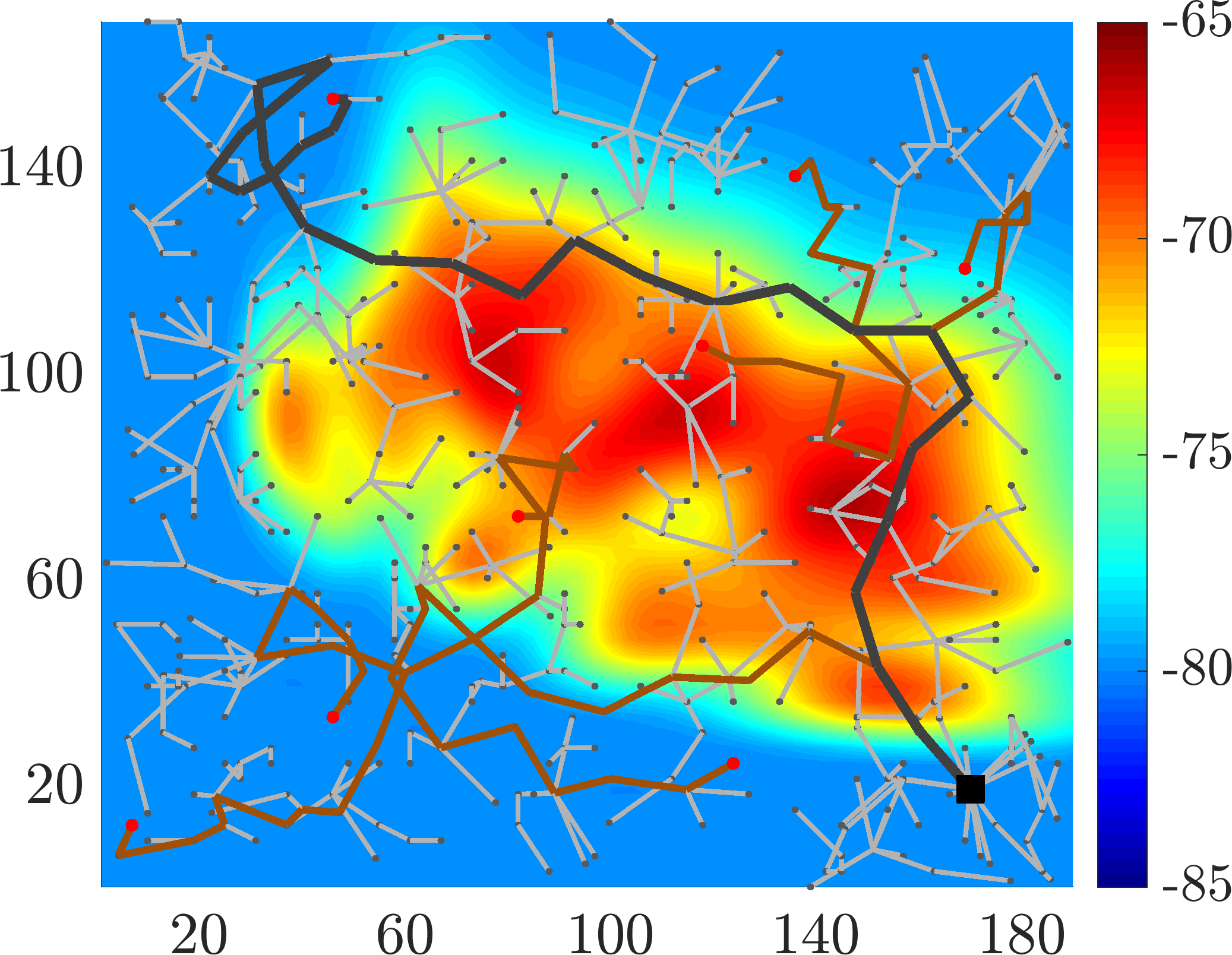}
    \label{fig:lake_gpvr_r15}
    }
  \subfloat{
    \includegraphics[width=0.5\columnwidth]{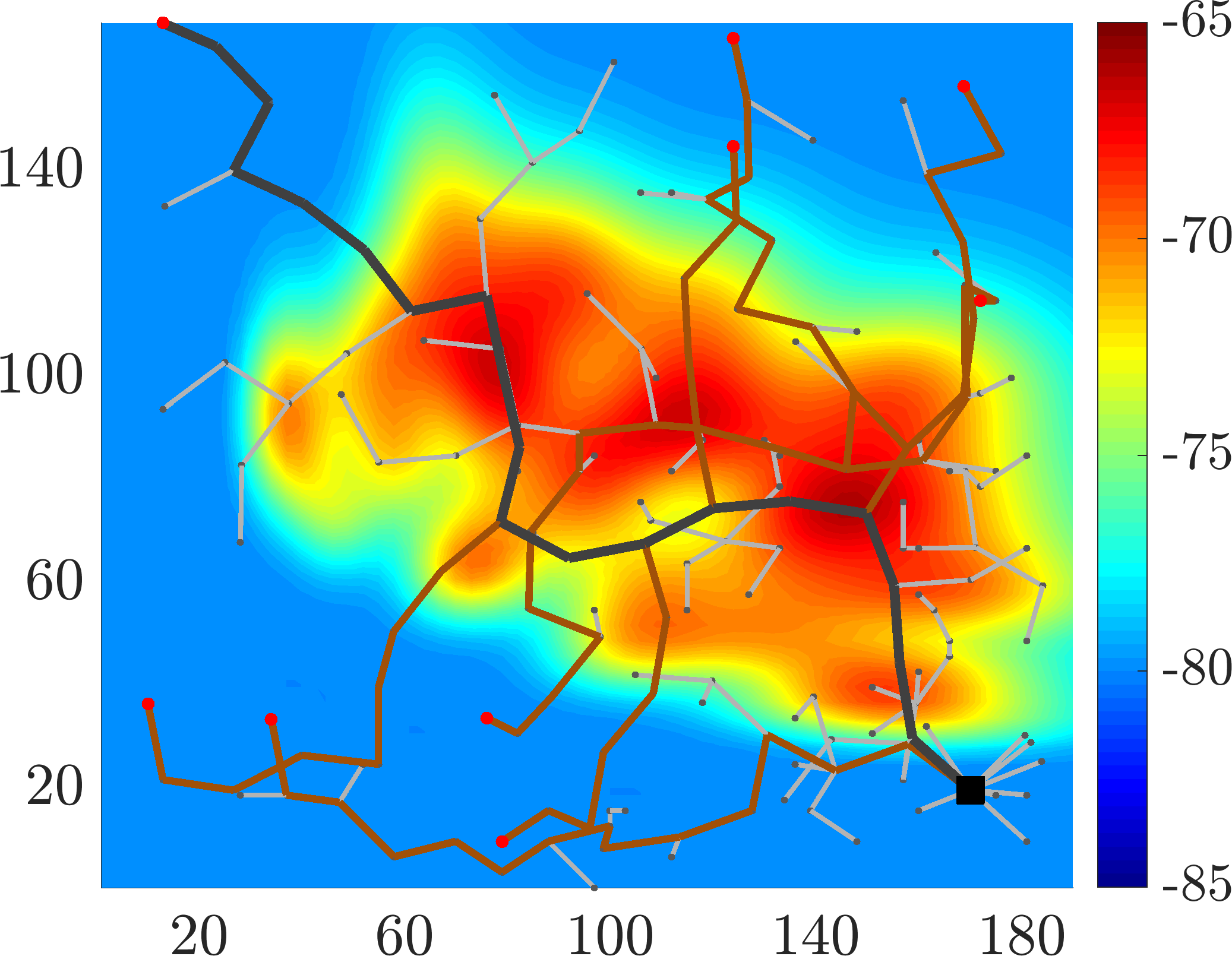}
    \label{fig:lake_gpvr_r20}
    }
  \caption{Informative motion planning for WSS monitoring in the lake area using IIG-GPVR with sensing range (a) $5\m$ (b) $10\m$ (c) $15\m$, and (d) $20\m$.}
  \label{fig:lakemap_gpvr}
\end{figure}

\begin{figure}[t!]
  \centering  
  \subfloat{
    \includegraphics[width=0.5\columnwidth]{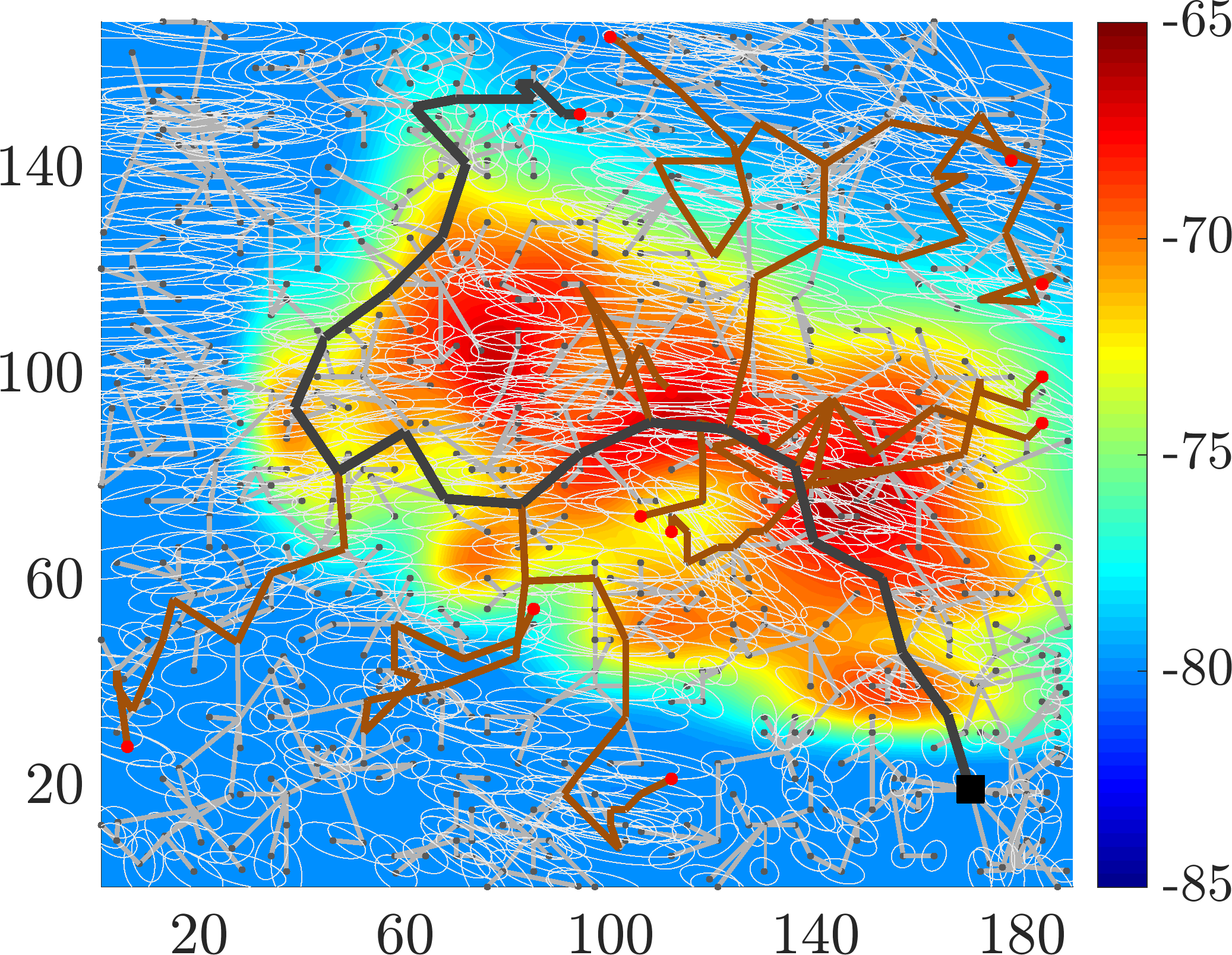}
    \label{fig:lake_ugpvr_r5}
    }
  \subfloat{
    \includegraphics[width=0.5\columnwidth]{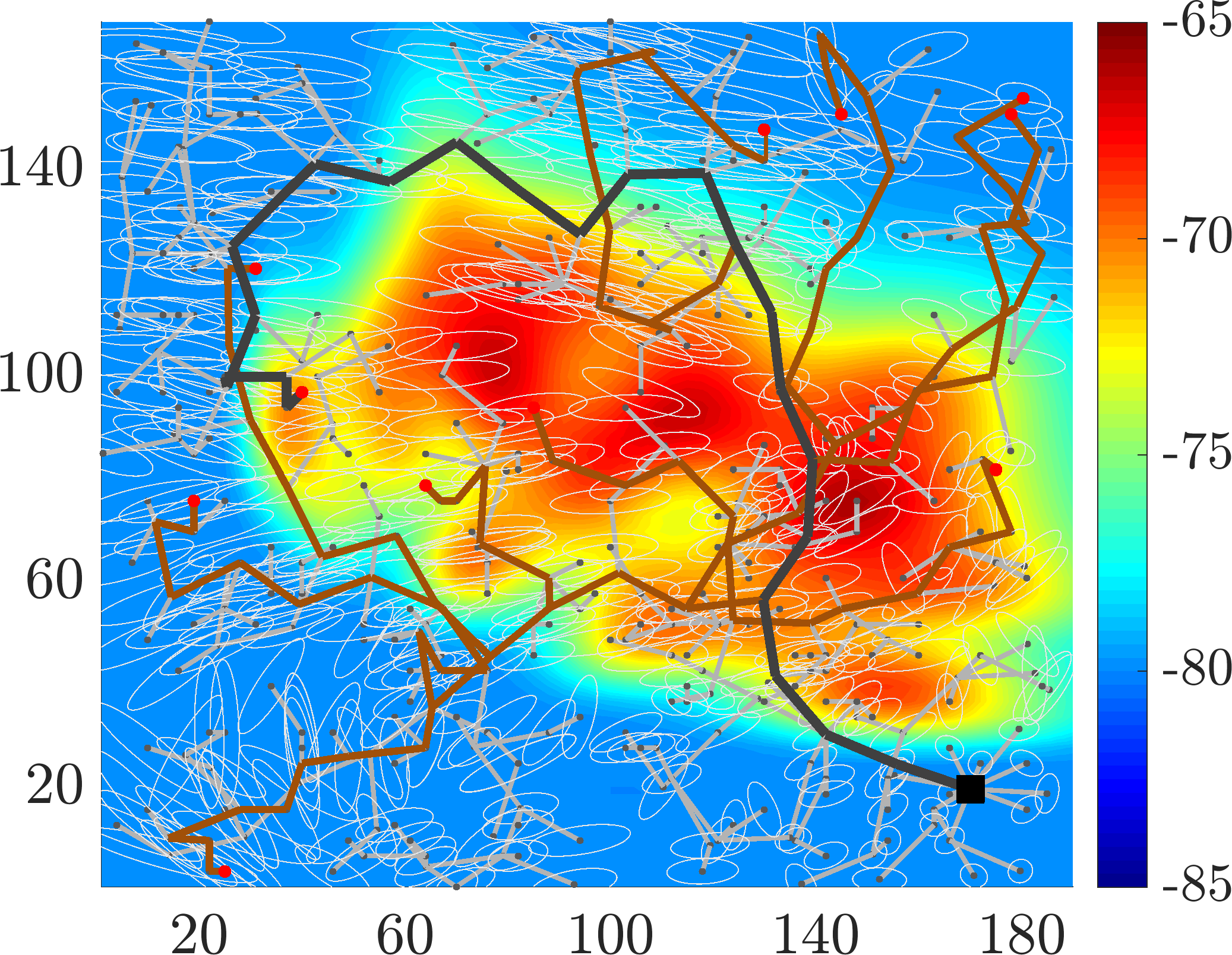}
    \label{fig:lake_ugpvr_r10}
    }\\
  \subfloat{
    \includegraphics[width=0.5\columnwidth]{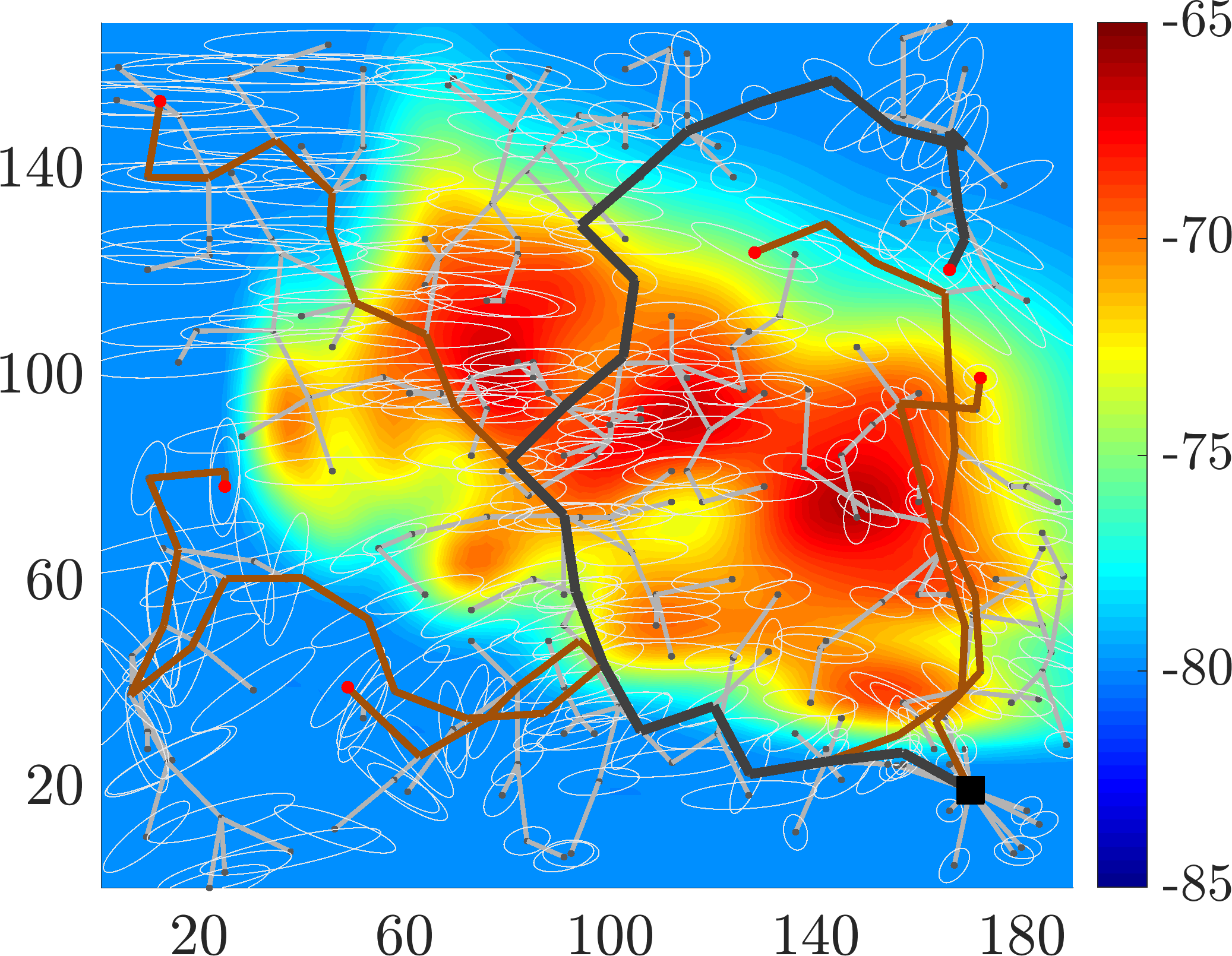}
    \label{fig:lake_ugpvr_r15}
    }
  \subfloat{
    \includegraphics[width=0.5\columnwidth]{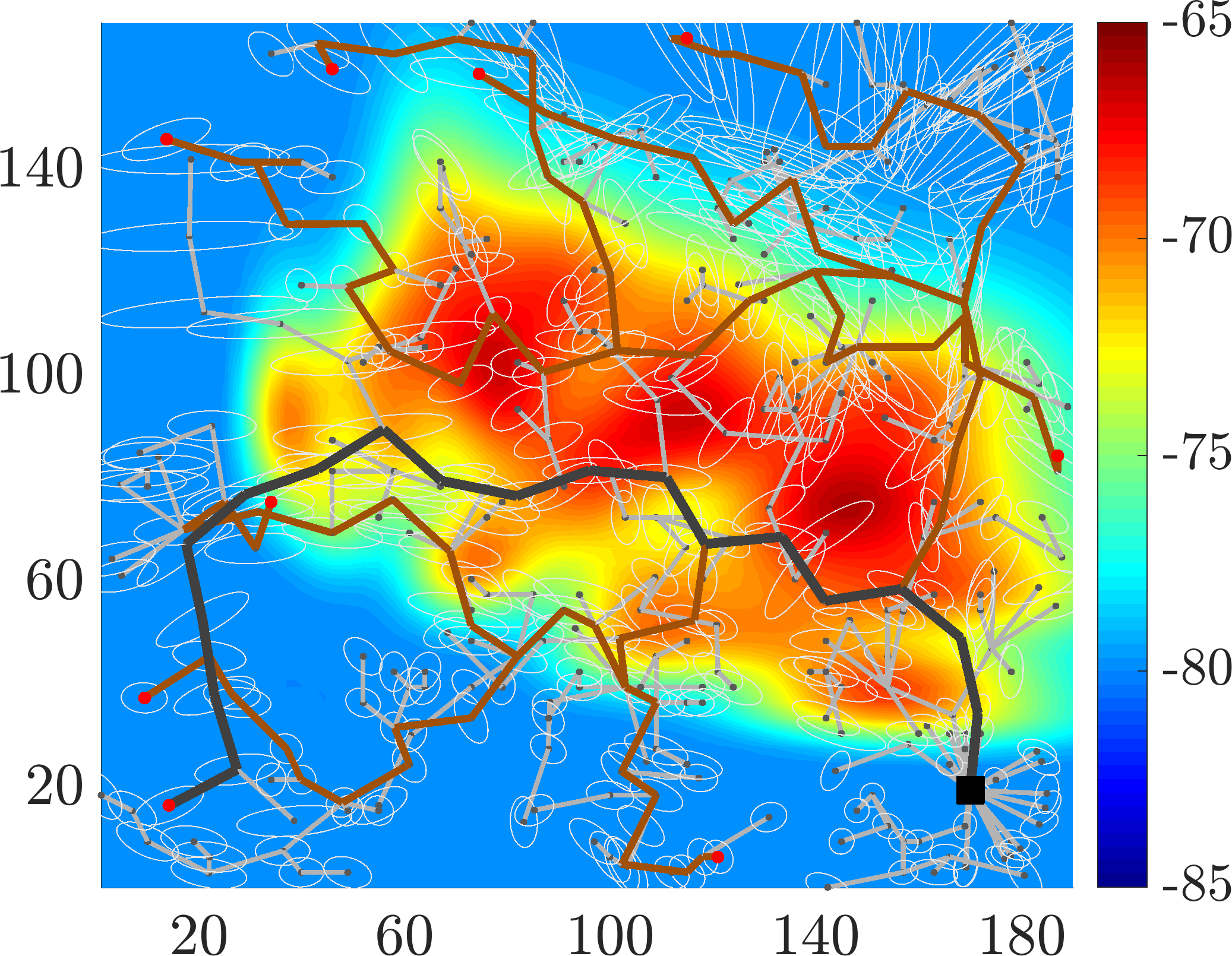}
    \label{fig:lake_ugpvr_r20}
    }
  \caption{Informative motion planning for WSS monitoring in the lake area using IIG-UGPVR with sensing range (a) $5\m$ (b) $10\m$ (c) $15\m$, and (d) $20\m$. By incorporating pose uncertainties in planning, the informative trajectories are those in which not only the wireless connectivity is strong, but the pose uncertainty along the trajectories is minimized.}
  \label{fig:lakemap_ugpvr}
\end{figure}

Each technique uses unlimited budget which is the total travel distance and continues to search the environment until convergence. Then the WSS field is reconstructed with the same resolution as the ground truth map using collected measurements along the most informative path which is extracted from the IIG tree using Algorithm~\ref{alg:pathselect}. The convergence results with $5 \m$, $10 \m$, $15 \m$, and $20 \m$ sensing ranges are shown in Figures~\ref{fig:iric_gpvr_wifi} and \ref{fig:iric_ugpvr_wifi} for 100 independent runs. Table~\ref{tab:lakecomp} shows the comparison results between IIG-GPVR and IIG-UGPVR using several criteria including RMSE. The IIG-GPVR algorithm is faster and can compute the most informative trajectory in only a few seconds. This is promising for online applications where the robot needs to replan along the trajectory. We note that, as expected from results in Subsection~\ref{subsec:inffunccomp}, by increasing the sensing range, the information gain rises and the total cost and RMSE decrease. However, in practice, a large sensing radius can be infeasible.

Figure~\ref{fig:lakemap_gpvr} shows the illustrative examples of IIG-GPVR with different sensing ranges. As the sensing range increases, the graph becomes gradually sparser. This can be understood from the fact that the information from a larger neighborhood is integrated into each node. Note that the objective is to explore the entire area while maintaining a strong wireless connectivity with the based station, therefore, the robot does not need to always travel in the area with high signal strengths. However, it travels most of the time in the areas with strong connectivity. Figure~\ref{fig:lakemap_ugpvr} shows the same scenario using IIG-UGPVR. The main difference is that the robot behaves more conservatively and tends to return to the regions with strong connectivity. The growth in pose uncertainty by traveling farther, reduces the information gain of those areas. In both IIG-GPVR and IIG-UGPVR results, we can see that by increasing the sensing range the robot tends to explore farther distances which is a natural behavior, in Figures~\ref{fig:lake_gpvr_r20} and \ref{fig:lake_ugpvr_r20}.

\subsection{Limitations and observations}
\label{subsec:limitiig}

The proposed algorithms can provide an approximate solution to the robotic exploration problem in unknown environments as a basic navigation task as well as environmental monitoring tasks. Although IIG can be implemented for online tasks, it has the limitations of its ancestors. Therefore, violating the main assumptions, such as availability of free area near any point for graph expansion, can result in failure of the task. More conceptually, most of the present robotic navigation algorithms are not truly adaptive as they commit to a decision once it is planned and replanning occurs only if the amount of computation is manageable, and often not with a high frequency. So there will be a window of time the robot acts based on the previously made decision. Although humans act in a similar fashion, we can make decisions more spontaneously (fully adaptive). This problem can be severe in environments that are highly dynamic as the robot is not as responsive as it is required.

To our experience, using a coarse approximation, i.e.{\@} less number of beams than the actual range-finder sensor results in not only faster search in the workspace and reducing the computational time, but giving the chance to more samples to be candidates as part of the potential most informative path. This is a promising feature that the algorithm works reasonably well without near exact estimation of information quality. Moreover, uniform sampling produces reasonable results in the early stage, however as the graph grows, this sampling strategy becomes less efficient. Biasing the sampling towards directions that are less explored may lead to a faster convergence.

\section{Conclusion and future Work}
\label{sec:conclusion}

In this work, we developed a sampling-based planning algorithm for incremental robotic information gathering. The proposed algorithm is based on RIG and offers an information-theoretic convergence criterion using the notion of penalized relative information contribution. We formulated the problem as an information maximization problem subject to budget and state estimate constraints. The unique feature of this approach is that it can handle dense belief representations as well as the robot pose uncertainty. We proposed MI-based and GPVR-based information functions that can be used in both RIG and IIG frameworks. The proposed IIG-tree algorithm using a UGPVR information function considers pose uncertainties in planning and provides an approximate solution for finding maximally informative paths under partially observable state variables.

We proved an information-theoretic termination condition for the entire information gathering mission based on the least upper bound of the average state variables entropy. The proposed algorithms can directly be applied to environmental monitoring and robotic exploration tasks in unknown environments. The proposed algorithms have also potential applications in solving the sensor configuration selection problem that we did not fully investigate as it is beyond the scope of this work. Furthermore, the planning horizon does not need to be set as a number of steps. Instead, the information-theoretic representation of the planning horizon used here brings more flexibility for managing the convergence of the algorithm. 

We would like to share some ideas that are natural extensions of this work. For the sake of clarity, we itemize them as follows.
\begin{itemize}
 \item[(i)] The algorithms in this article could be extended for multi-robot planning; exploiting the information gain available through wireless communication channels to manage the coordination between robots while a feedback controller satisfied kinodynamic constraints.
 \item[(ii)] We developed IIG-tree based on the RIG-tree algorithm. IIG could also be used to extend RIG-roadmap and RIG-graph. RIG-graph is asymptotically optimal and can asymptotically explore all possible budget-constraint trajectories, it produces a fully connected graph and is an interesting case to study.
 \item[(iii)] While we provide a procedural implementation of the proposed algorithms, and it suffices for research and comparison purposes, we believe the integration of the algorithms in open source libraries such as the Open Motion Planning Library (OMPL)~\citep{sucan2012open} has advantages for possible applications of this work.
 \item[(iv)] The GPVR-based algorithms could be used for active learning to model the quantity of interest online using GPs.
 \item[(v)] IIG/RIG frameworks could be used for maximizing a multi-objective information function to perform multiple tasks concurrently. In a na\"ive approach, the objective function can be defined as the sum of information functions; however, if random variables from different information functions are correlated, e.g.{\@} through the robot pose, the total information gain calculation will not be accurate. Hence, developing a more systematic approach is an interesting avenue to follow.
 \item[(vi)] A sparse covariance matrix could be constructed using the Sparse covariance function. Exploiting the sparse structure of the covariance matrix for large-scale problems could be an interesting extension of this work for online implementations.
\end{itemize}

\begin{appendices}
\section{Proof of mutual information approximation inequality}
\label{appx:miapprox}

In this Appendix, we provide the proof for Proposition~\ref{prop:miineq}. First, we present the required preliminaries and then our proof by contradiction. 
\begin{lemma}[Hadamard's Inequality]
\label{lem:hadamard}
 Let $\boldsymbol A = [\boldsymbol a_1\ \boldsymbol a_2\ \cdots\ \boldsymbol a_n]$ be an $n\times n$ matrix such that $\boldsymbol a_i \in \mathbb{R}^n$ for all $i \in \{1:n\}$. Hadamard's inequality asserts that
 \begin{equation}
  \label{eq:hadamardineq}
  \lvert \boldsymbol A \rvert \leq \prod_{i=1}^{n} \lVert \boldsymbol a_i \rVert
 \end{equation}
\end{lemma}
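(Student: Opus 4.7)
The plan is to prove Hadamard's inequality by reducing the general case to the case of orthogonal columns via Gram–Schmidt. First I would dispose of the degenerate case: if $\boldsymbol A$ is singular then $\lvert \boldsymbol A \rvert = 0$ and, since each $\lVert \boldsymbol a_i \rVert \ge 0$, the inequality holds trivially. So henceforth assume $\boldsymbol A$ is nonsingular, which in particular means no column is the zero vector.

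Next I would apply the Gram–Schmidt procedure to the ordered columns $\boldsymbol a_1,\dots,\boldsymbol a_n$ to obtain an orthogonal sequence $\boldsymbol b_1,\dots,\boldsymbol b_n$ defined by
\begin{equation}
\boldsymbol b_1 = \boldsymbol a_1, \qquad \boldsymbol b_i = \boldsymbol a_i - \sum_{j<i} \frac{\boldsymbol a_i^T \boldsymbol b_j}{\boldsymbol b_j^T \boldsymbol b_j}\,\boldsymbol b_j \quad (i \ge 2).
\end{equation}
The key observations are: (a) each $\boldsymbol b_i$ is obtained from $\boldsymbol a_i$ by subtracting a linear combination of the earlier columns, so the matrix $\boldsymbol B = [\boldsymbol b_1\ \cdots\ \boldsymbol b_n]$ has the same determinant as $\boldsymbol A$ (column operations of this type do not change the determinant); (b) because $\boldsymbol b_i$ is the orthogonal projection of $\boldsymbol a_i$ onto the complement of $\mathrm{span}(\boldsymbol a_1,\dots,\boldsymbol a_{i-1})$, Pythagoras gives $\lVert \boldsymbol b_i \rVert \le \lVert \boldsymbol a_i \rVert$ for every $i$.

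With $\boldsymbol B$ in hand, I would compute $\lvert \boldsymbol B \rvert$ directly using orthogonality: $\boldsymbol B^T \boldsymbol B$ is the diagonal matrix with entries $\lVert \boldsymbol b_i \rVert^2$, so
\begin{equation}
\lvert \boldsymbol A \rvert^2 = \lvert \boldsymbol B \rvert^2 = \lvert \boldsymbol B^T \boldsymbol B \rvert = \prod_{i=1}^{n} \lVert \boldsymbol b_i \rVert^2.
\end{equation}
Taking square roots and applying the norm bound $\lVert \boldsymbol b_i \rVert \le \lVert \boldsymbol a_i \rVert$ termwise yields $\lvert \boldsymbol A \rvert \le \prod_{i=1}^{n} \lVert \boldsymbol a_i \rVert$, which is \eqref{eq:hadamardineq}.

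The argument is fairly routine and I do not expect a serious obstacle; the main point to be careful about is justifying step (a), namely that the Gram–Schmidt replacement preserves the determinant. This is true because each step replaces $\boldsymbol a_i$ with $\boldsymbol a_i$ minus a linear combination of columns strictly to its left, which corresponds to right-multiplying $\boldsymbol A$ by an upper-triangular matrix with unit diagonal, whose determinant is $1$. An equivalent way to package the whole proof, which I might mention as an alternative, is via the QR decomposition $\boldsymbol A = \boldsymbol Q \boldsymbol R$ with $\boldsymbol Q$ orthogonal and $\boldsymbol R$ upper triangular: then $\lvert \det \boldsymbol A \rvert = \prod_i \lvert R_{ii} \rvert$, and each $\lvert R_{ii} \rvert$ equals $\lVert \boldsymbol b_i \rVert \le \lVert \boldsymbol a_i \rVert$.
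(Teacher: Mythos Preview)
Your Gram--Schmidt/QR argument is correct and is one of the standard proofs of Hadamard's inequality. Note, however, that the paper does not actually prove this lemma: it is stated as a classical result and then invoked to establish the subsequent corollary about positive definite matrices via Cholesky factorization. So there is nothing in the paper to compare your argument against; you have simply supplied a proof where the paper chose to cite the result without one.

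One small remark on presentation: since the paper's notation $\lvert \boldsymbol A \rvert$ denotes the determinant (not its absolute value), the inequality as stated is literally only meaningful when $\det \boldsymbol A \ge 0$. Your argument correctly produces $\lvert \det \boldsymbol A \rvert \le \prod_i \lVert \boldsymbol a_i \rVert$ after taking the square root of $\det(\boldsymbol A)^2 = \prod_i \lVert \boldsymbol b_i \rVert^2$, which is the form one actually wants and which is all the paper needs downstream (it only applies the result to positive definite matrices).
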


\begin{corollary}
 \label{cor:hadamard}
 Let $\boldsymbol S \in \mathbb{R}^{n\times n}$ be a positive definite matrix whose diagonal entries are $s_{11}, \dots, s_{nn}$. From Lemma~\ref{lem:hadamard}, it follows that 
 \begin{equation}
  \lvert \boldsymbol S \rvert \leq \prod_{i=1}^{n} s_{ii}
 \end{equation}
\end{corollary}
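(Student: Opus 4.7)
The plan is to derive the corollary as an essentially immediate consequence of Lemma~\ref{lem:hadamard} together with the fact that any positive definite matrix admits a factorization of the form $\boldsymbol S = \boldsymbol A^T \boldsymbol A$ with $\boldsymbol A$ nonsingular. The idea is that Hadamard's inequality for a general square matrix already controls its determinant by the product of column norms; for positive definite $\boldsymbol S$ the diagonal entries are precisely the squared column norms of such a factor $\boldsymbol A$, so the bound specializes cleanly.

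Concretely, I would proceed in three short steps. First, invoke the existence of a Cholesky (or symmetric square root) factorization, writing $\boldsymbol S = \boldsymbol A^T \boldsymbol A$ where the columns of $\boldsymbol A$ are $\boldsymbol a_1,\dots,\boldsymbol a_n \in \mathbb{R}^n$. Second, apply Lemma~\ref{lem:hadamard} to $\boldsymbol A$ and square the resulting inequality, obtaining
\begin{equation}
 \lvert \boldsymbol A \rvert^2 \leq \prod_{i=1}^{n} \lVert \boldsymbol a_i \rVert^2.
\end{equation}
Third, identify both sides with quantities attached to $\boldsymbol S$: by multiplicativity of the determinant, $\lvert \boldsymbol S \rvert = \lvert \boldsymbol A^T \rvert\,\lvert \boldsymbol A \rvert = \lvert \boldsymbol A \rvert^2$, while the $i$-th diagonal entry of $\boldsymbol S = \boldsymbol A^T \boldsymbol A$ is exactly $\boldsymbol a_i^T \boldsymbol a_i = \lVert \boldsymbol a_i \rVert^2 = s_{ii}$. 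Substituting these two identities into the squared Hadamard inequality yields the claim $\lvert \boldsymbol S \rvert \leq \prod_{i=1}^{n} s_{ii}$.

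I do not expect a genuine obstacle here; the only mild subtlety is that Lemma~\ref{lem:hadamard} is stated for general real square matrices whose columns live in $\mathbb{R}^n$, so one must verify that the chosen factor $\boldsymbol A$ fits that hypothesis. This is immediate for the Cholesky factor (a real lower-triangular matrix with strictly positive diagonal) or for the symmetric positive definite square root $\boldsymbol S^{1/2}$, both of which are guaranteed to exist by the positive definiteness of $\boldsymbol S$. Once this is observed, the argument is essentially a one-line specialization of Lemma~\ref{lem:hadamard}, and the corollary is exactly the bound needed downstream to establish $\hat{I}(X;Z) \le I(X;Z)$ in Proposition~\ref{prop:miineq} by comparing the determinant of the full (prior or posterior) covariance to the product of its marginal variances.
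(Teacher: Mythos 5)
Your proof is correct and follows essentially the same route as the paper: factor $\boldsymbol S$ via Cholesky (the paper writes $\boldsymbol S = \boldsymbol L \boldsymbol L^T$, you write $\boldsymbol S = \boldsymbol A^T\boldsymbol A$), apply Lemma~\ref{lem:hadamard} to the factor, square, and identify $\lvert\boldsymbol S\rvert = \lvert\boldsymbol A\rvert^2$ and $s_{ii} = \lVert\boldsymbol a_i\rVert^2$. Your remark that the factor's determinant is positive (so squaring the inequality is legitimate) is a small point the paper glosses over, but the argument is the same.
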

\begin{proof}
 Since $\boldsymbol S$ is positive definite, using Cholesky factorization, we have $\boldsymbol S = \boldsymbol L \boldsymbol L^T$ and $\lvert \boldsymbol S \rvert = \lvert \boldsymbol L \boldsymbol L^T \rvert = \lvert \boldsymbol L \rvert \lvert \boldsymbol L^T \rvert = \lvert \boldsymbol L \rvert^2$. For any diagonal elements of $\boldsymbol S$, we have $s_{ii} = \boldsymbol l_i \boldsymbol l_i^T = \lVert \boldsymbol l_i \rVert^2$ where $\boldsymbol l_i$ is the $i$-th column of $\boldsymbol L$. Using the Hadamard's inequality, it follows that
 \begin{equation}
  \lvert \boldsymbol S \rvert = \lvert \boldsymbol L \rvert^2 \leq \prod_{i=1}^{n} \lVert \boldsymbol l_i \rVert^2 = \prod_{i=1}^{n} s_{ii}
 \end{equation}
\end{proof}

We now derive a few useful inequalities before the final proof. Let \mbox{$X \sim \mathcal{N}(\boldsymbol \mu_X,\boldsymbol \Sigma_X)$} and \mbox{$X\mid Z \sim \mathcal{N}(\boldsymbol \mu_{X\mid Z},\boldsymbol \Sigma_{X\mid Z})$} be the prior and posterior distribution of the $n$-dimensional random vector $X$, respectively. Let $h(X)=\frac{1}{2}\log((2\pi e)^n \lvert \boldsymbol \Sigma_X \rvert)$ and $h(X\mid Z)=\frac{1}{2}\log((2\pi e)^n \lvert \boldsymbol \Sigma_{X\mid Z} \rvert)$ be the differential entropy and conditional entropy, respectively; and $\hat{I}(X;Z) = \hat{h}(X) - \hat{h}(X\mid Z)$ the corresponding entropies and mutual information by applying Proposition~\ref{prop:miapprox}, where $\hat{h}(X)=\frac{1}{2}\log((2\pi e)^n\prod_{i=1}^{n} \sigma_{X_i})$ and $\hat{h}(X\mid Z)=\frac{1}{2}\log((2\pi e)^n\prod_{i=1}^{n} \sigma_{X_i\mid Z})$. Using Corollary~\ref{cor:hadamard}, the following inequalities are immediate:
\begin{equation}
    h(X) \leq \hat{h}(X)
  \end{equation}
  \begin{equation}
   h(X\mid Z) \leq \hat{h}(X\mid Z)
  \end{equation}
  \begin{equation}
  \label{eq:miineq1}
   I(X;Z) \leq \hat{h}(X) - h(X\mid Z)
  \end{equation}
  \begin{equation}
  \label{eq:miineq2}
   h(X) - \hat{h}(X\mid Z) \leq \hat{I}(X;Z)
  \end{equation}
  \begin{equation}
  \label{eq:miineq3}
   h(X) - \hat{h}(X\mid Z) \leq \hat{h}(X) - h(X\mid Z)
  \end{equation}
  and summing~\eqref{eq:miineq1} and~\eqref{eq:miineq2} leads to
  \begin{equation}
  \label{eq:miineq4}
   I(X;Z) \leq \hat{I}(X;Z) + \xi(X;Z)
  \end{equation}
  where \mbox{$\xi(X;Z) \triangleq \hat{h}(X) + \hat{h}(X\mid Z) - h(X) - h(X\mid Z)$} and $\xi(X;Z) \ge 0$, with equality if and only if $X$ contains uncorrelated random variables, i.e.\@ $\hat{h}(X) = h(X)$ and $\hat{h}(X\mid Z) = h(X\mid Z)$.

\begin{proof}[Proof of Proposition~\ref{prop:miineq}]
 Suppose $\hat{I}(X;Z) \ge I(X;Z)$. By subtracting this inequality from~\eqref{eq:miineq4} we have
 \begin{equation}
  \nonumber \underbrace{I(X;Z) - \hat{I}(X;Z)}_{-a^2} \le \underbrace{\hat{I}(X;Z) - I(X;Z)}_{a^2} + \xi(X;Z)
 \end{equation}
 \begin{equation}
  \nonumber \xi(X;Z) \ge -2a^2
 \end{equation}
since $\xi(X;Z) \ge 0$, $a = 0$ and $\hat{I}(X;Z) = I(X;Z)$ are the only possibilities. Now suppose $\hat{I}(X;Z) \le I(X;Z)$; by summing this inequality and~\eqref{eq:miineq4} we have
 \begin{equation}
  \nonumber I(X;Z) + \hat{I}(X;Z) \le \hat{I}(X;Z) + I(X;Z) + \xi(X;Z)
 \end{equation}
 \begin{equation}
  \nonumber \nonumber \xi(X;Z) \ge 0 
 \end{equation}
 which shows that information can only be lost and not gained. 
\end{proof}

\end{appendices}

\bibliographystyle{SageH}
{\small
\bibliography{references}}

\end{document}